\newtheorem{assumption}{Assumption}
\newtheorem{lemma}{Lemma}
\newtheorem{theorem}{Theorem}
\newtheorem{definition}{Definition}
\newtheorem{remark}{Remark}
\newtheorem{corollary}{Corollary}
\newtheorem{proposition}{Proposition}
\definecolor{wheat}{rgb}{0.96,0.87,0.70}
\DeclareMathOperator*{\argmax}{argmax}
\DeclareMathOperator*{\argmin}{argmin}
\begin{document}
\begin{frontmatter}
\title{No-Regret Constrained Bayesian Optimization of Noisy and Expensive Hybrid Models using Differentiable Quantile Function Approximations}

\author{Congwen Lu}
\ead{lu.2318@osu.edu}
\author{Joel A. Paulson\corref{c1}}
\cortext[c1]{Corresponding author}
\ead{paulson.82@osu.edu}

\address{Department of Chemical and Biomolecular Engineering, The Ohio State University, Columbus OH, USA}

\tnotetext[1]{This work was supported by the National Science Foundation (NSF) under award number 2237616.}

\begin{abstract}
This paper investigates the problem of efficient constrained global optimization of hybrid models that are a composition of a known white-box function and an expensive multi-output black-box function subject to noisy observations, which often arises in real-world science, engineering, manufacturing, and control applications.
We propose a novel method, Constrained Upper Quantile Bound (CUQB), to solve such problems that directly exploits the composite structure of the objective and constraint functions that we show leads substantially improved sampling efficiency. CUQB is a conceptually simple, deterministic approach that avoids the constraint approximations used by previous methods. Although the CUQB acquisition function is not available in closed form, we propose a novel differentiable sample average approximation that enables it to be efficiently maximized. We further derive bounds on the cumulative regret and constraint violation under a non-parametric Bayesian representation of the black-box function.
Since these bounds depend sublinearly on the number of iterations under some regularity assumptions, we establish explicit bounds on the convergence rate to the optimal solution of the original constrained problem. In contrast to most existing methods, CUQB further incorporates a simple infeasibility detection scheme, which we prove triggers in a finite number of iterations when the original problem is infeasible (with high probability given the Bayesian model). Numerical experiments on several test problems, including environmental model calibration and real-time optimization of a reactor system, show that CUQB significantly outperforms traditional Bayesian optimization in both constrained and unconstrained cases. Furthermore, compared to other state-of-the-art methods that exploit composite structure, CUQB achieves competitive empirical performance while also providing substantially improved theoretical guarantees.
\end{abstract}

\begin{keyword}
Constrained Bayesian optimization \sep Hybrid modeling \sep Multi-output Gaussian process regression \sep Quantile functions
\end{keyword}

\end{frontmatter}

\section{Introduction}
\label{sec:introduction}

In many real-world science and engineering applications, we are tasked with zeroth order derivative-free optimization (DFO) of an expensive-to-evaluate functions. For example, closed-loop performance optimization \cite{piga2019performance,paulson2020data,sorourifar2021data,del2021real}, hyperparameter tuning in machine learning algorithms \citep{pmlr-v115-wu20a}, the choice of laboratory experiments in material and drug design \citep{kapetanovic2008computer,ju2017designing,SCHWEIDTMANN2018277}, and calibration of computationally intensive simulators \citep{vrugt2001calibration,schultz2018bayesian,paulson2019fast} can all be formulated as derivative-free optimization problems.
In such applications, we can only learn about the unknown functions, which might be highly nonconvex and/or multi-modal, by sequentially querying them at specific input points. Additionally, when each function evaluation costs a large amount of energy, money, time, and/or computational resources, then we want to use as few evaluations as possible to find a globally optimal solution. For example, optimizing the hyperparameters of a deep neural network requires the weight and bias parameters to be internally optimized for every tested configuration, which can take substantial amounts of wall-clock time and/or GPU computation. 

DFO methods can be broadly categorized as either stochastic or deterministic \cite{rios13}. Stochastic DFO methods, such evolutionary and swarm-based algorithms \cite{eberhart1995,hansen2003reducing,mukhopadhyay09}, typically require a large number of high precision function evaluations and therefore cannot be applied to expensive noisy black-box functions. 
Deterministic DFO approaches, which are often motivated by optimization of expensive functions, are typically further subdivided into direct search and model-based methods. Direct search methods propose sample points that evaluate the function over a particular geometry such as Nelder-Mead (NM) \cite{nelder1965simplex} and generalized pattern search (GPS) \cite{kolda2003optimization}. Many other direct DFO algorithms based on box subdivision principles have been developed, which are often inspired from the DIviding RECTangle (DIRECT) algorithm \cite{jones1993}. Model-based DFO methods, on the other hand, construct surrogate models and propose sample points by optimizing the model. The choice of surrogate model plays a big role on the performance and convergence of the method. Parametric function approximators may not be sufficiently expressive enough to capture the complexity of the unknown function and can be inaccurate when a small number of noisy observations are available. Although some methods that can handle observation noise in the DFO setting have been developed, e.g., \cite{chen2018stochastic,curtis2023stochastic}, they often focus on finding a local optimum. 

The common theme amongst the previously described DFO approaches is that they do not directly address the tradeoff between exploration (learning for the future) and exploitation (immediately advancing toward the goal given current knowledge), which implies they are prone to sample the unknown functions more than required. 
Bayesian optimization (BO) \citep{shahriari15,frazier18}, on the other hand, refers to a collection of data-efficient global optimization methods for expensive black-box functions subjected to noisy observations. BO’s effectiveness is derived from its probabilistic representation of the unknown functions, which allows it to characterize the expected information content that future sample points show with respect to potential global optima. As such, BO can systematically address the exploration-exploitation tradeoff by taking an information-theoretic view of the global black-box optimization. 
BO consists of two key steps: (i) the construction of some ``surrogate model'' given by a Bayesian posterior distribution equipped with uncertainty estimates and (ii) the specification of an ``acquisition function'' (that depends on the posterior distribution) whose value at any point is a metric for the value/utility of evaluating the unknown functions at this point. 

In addition to expensive objective evaluations, many optimization problems also have expensive black-box constraint functions. For example, when designing a pharmaceutical drug, one may wish to maximize the drug's activity while constraining its toxicity to be below acceptable limits. When deploying a machine learning algorithm on low-cost embedded hardware, one often wants to maximize performance subject to constraints on the model's memory and power usage. Similarly, when tuning the parameters of a temperature controller in a building application, one may want to minimize closed-loop energy consumption subject to comfort constraints. The same challenge is present in each of these examples: both the performance and constraint functions have an unknown (black-box) relationship to the design/tuning parameters.
There has been a significant amount of work on extending BO to handle black-box constraints. All methods are based on the idea that separate probabilistic surrogate models for the black-box objective and constraint functions can be constructed. Thus, they mainly differ in terms of how they incorporate the constraint models into the acquisition function. 

Roughly speaking, all constrained BO methods fall into one of two groups, i.e., those that do and those that do not allow constraint violation during the optimization process. The setting wherein no violations are tolerated is often referred to as ``safe BO'' \cite{sui2015safe,bergmann2020safe,berkenkamp2021bayesian,krishnamoorthy2022safe} and are motivated by online safety-critical applications (e.g., tuning an autonomous vehicle, robot, or personalized medicine control system). The enforcement of such hard constraints under uncertainty, however, implies that these algorithms may get stuck in a local (suboptimal) solution. This setting is not relevant in this work since we are interested in developing an algorithm that is guaranteed to find the true global solution. The other class of constrained BO methods (that do allow violation at each iteration) can be further divided into either \textit{implicit} or \textit{explicit} methods. Implicit methods encode the constraint information as a type of merit function in the original unconstrained acquisition function -- in other words they do not directly enforce any constraint on the allowable violation at every iteration. Two examples of merit functions that have been proposed are the expected improvement with constraints (EIC) function \citep{gardner2014bayesian} and the augmented Lagrangian BO (ALBO) \citep{picheny2016bayesian}. However, to the best of our knowledge, no theoretical guarantees on convergence, optimality, or constraint violations incurred during the optimization process have been established for this class of methods. 

Explicit methods, on the other hand, solve a constrained acquisition optimization problem at every iteration and, thereby, place more direct limits on the degree of violation. One of the earliest explicit constrained BO methods is \citep{sasena2002exploration}, which simply used the mean prediction of the constraint function to construct an approximate feasible region; however, this was shown to lead to poor sample selections in the early iterations when the model is not very accurate. The upper trust bound (UTB) method was proposed in \cite{priem2020upper} to overcome this challenge by incorporating uncertainty in the constraint predictions -- the key observation in this work is that better performance is achieved when one constructs a \textit{relaxation} of the feasible region, though no theoretical results or strategies for parameter tuning were provided. This gap was addressed by the authors in \cite{lu2022no} wherein we developed exact penalty BO (EPBO), which is an extension of UTB that uses a specific type of penalty function to establish convergence guarantees under certain regularity assumptions. EPBO requires the selection of a single penalty weight parameter $\rho$; however, it was recently shown in \cite{xu2022constrained} that even stronger results can be obtained when setting $\rho = \infty$. In particular, \cite{xu2022constrained} derived explicit bounds for the rate of convergence to the optimal solution and the degree of constraint violation as a function of the total number of iterations. Furthermore, by setting $\rho = \infty$, one can naturally derive an infeasibility detection scheme, which is unique to \cite{xu2022constrained}.

The main downside of all the previously mentioned constrained BO methods is that they assume the objective and constraint functions are fully black box in nature, meaning we have little-to-no prior information about their structure. It has been shown in \cite{chen1988lower} that the black-box assumption provides a strong limit on the practical rate of convergence, which can only be overcome by utilizing prior knowledge about the structure of these functions. 
Luckily, in many real-world problems, we have a substantial amount of prior knowledge that can be exploited in the development of more efficient search methods. 
This concept forms the basis of most ``grey-box'' or ``hybrid'' optimization algorithms that have been developed within the process systems engineering community over the past several years, e.g., \cite{eason2016trust,beykal2018optimal,bajaj2018trust,kim2020surrogate}. Despite the prevalence of grey-box modeling/optimization, there has been limited work on (constrained) grey-box BO due to the complexities introduced when combining known nonlinear equations with probabilistic surrogate models. Our group recently developed a method, COBALT \cite{paulson2022cobalt}, that is applicable to such problems where the objective and constraints are represented by composite functions, i.e., $f(x) = g(x,h(x))$ where $h$ is an expensive black-box vector-valued function and $g$ is a known function that can be cheaply evaluated. COBALT can be interpreted as an extension of EIC to handle the composite structure of the objective and constraint functions; however, it inherits several of the aforementioned challenges with EIC, which are described in detail in Section \ref{subsec:challenge-cobalt}. 

In this paper, we propose the constrained upper quantile bound (CUQB) as a simple and effective algorithm for efficient constrained global optimization of expensive hybrid models (grey-box composite functions) subjected to noisy observations. We can interpret CUQB as an extension of the EPBO method \cite{lu2022no,xu2022constrained} to handle general composite functions. As such, at every iteration, CUQB requires the solution of an auxiliary constrained optimization problem with the objective and constraint functions replaced by their high-probability quantile bounds. In the standard black-box setting, we can develop closed-form expressions for the quantile bounds by exploiting the properties of Gaussian random variables, which leads to the well-known upper confidence bound (UCB) acquisition function \cite{srinivas09}. This no longer holds in the composite function setting (since a nonlinear transformation of a Gaussian random variable is no longer Gaussian), so CUQB must be evaluated via stochastic simulation. We first develop a convergent simulation-based approximation of the quantile function using a sorting operator. We then develop a simple differentiable approximation of this estimator using the soft sorting projection operator from \cite{blondel2020fast}, which enables efficient maximization of the quantile functions. Similarly to \cite{xu2022constrained}, we theoretically analyze the performance of CUQB under several conditions. Not only do we show that CUQB has at least the same cumulative regret and cumulative constraint violation bounds as the black-box case, we also establish explicit bounds on the rate of convergence to the true constrained solution under certain standard regularity assumptions. Additionally, we also incorporate an infeasibility detection scheme in CUQB that will only trigger (in a finite number of iterations) if the original problem is infeasible with high probability.
To the best of our knowledge, the proposed CUQB method is the first and only constrained grey-box BO method to provide theoretical guarantees on performance and infeasibility detection. To demonstrate the effectiveness of the proposed algorithm, we compare its performance to standard BO methods and COBALT on a variety of test problems. These results confirm that CUQB consistently finds solutions that are orders of magnitude better than that found by standard BO, and similar or better quality solutions than COBALT while simultaneously providing a guarantee of either declaring infeasibility or finding the globally optimal solution.










The remainder of this paper is organized as follows. In Section~\ref{sec:cobalt}, we provide an overview of the constrained optimization problems of interest in this work as well as summarize our previous method, COBALT, for solving such problems. In Section~\ref{sec:cuqb}, we present our new approach, CUQB, that can overcome some challenges of COBALT and show how it can be efficiently implemented in practice. 
We perform some theoretical analysis on the cumulative regret and violation probability, convergence, and infeasibility declaration in Section~\ref{sec:theory}. Section~\ref{sec:comp-setup} describes the computational setup for the numerical experiments including the practical implementation details of CUQB.
We then compare the performance of CUQB to traditional BO approaches and COBALT on a variety of benchmark problems in Section~\ref{sec:case-studies} and present some concluding remarks in Section~\ref{sec:conclusions}.


\section{Constrained Bayesian Optimization with Composite Functions}
\label{sec:cobalt}

This section first introduces the class of optimization problems considered in this work and then provides a summary of our recent approach, called COBALT \cite{paulson2022cobalt}, for tackling such problems. We end by discussing some potential challenges exhibited by COBALT that motivate our proposed work described in Section \ref{sec:cuqb}. 

\subsection{Problem formulation}

In this work, we consider constrained grey-box optimization problems over composite functions of the following form\footnote{Throughout the paper, we use standard BO convention to define our original optimization problem in \eqref{eq:grey-box-opt}, which uses a reward (max) perspective subject to $\geq 0$ inequalities. This is opposite to the standard convention in the optimization literature, which uses a cost (min) perspective subject to $\leq 0$ inequalities. Note that we can always convert between these representations by introducing a minus sign in the objective and constraints.}
\begin{subequations} \label{eq:grey-box-opt}
\begin{align}
    \max_{x \in \mathcal{X}} &~~ g_0(x, h(x)), \\
    \text{subject to:} &~~g_i(x, h(x)) \geq 0, ~~~ \forall i \in \mathbb{N}_1^n,
\end{align}
\end{subequations}
where $x \in \mathbb{R}^{d}$ is a set of $d$ decision variables that are constrained to some compact set of candidate solutions $\mathcal{X} \subset \mathbb{R}^d$; $\mathbb{N}_a^b = \{a,\ldots,b\}$ denotes the set of integers from $a$ to $b$; $h : \mathbb{R}^d \to \mathbb{R}^m$ is an expensive-to-evaluate black-box function with $m$ outputs whose structure is completely unknown and evaluations do not provide derivative information; and $g_i : \mathbb{R}^d \times \mathbb{R}^m \to \mathbb{R}$ are known, cheap-to-evaluate functions that define the objective ($i=0$) and constraints ($i \in \mathbb{N}_1^n$) of the problem. The functions $\{ g_i(x,h(x)) \}_{i \in \mathbb{N}_0^n}$ are \textit{composite} functions of known and unknown components. The known functions $\{ g_i \}_{i \in \mathbb{N}_0^n}$ can be interpreted as prior knowledge that can be exploited when developing an algorithm to solve \eqref{eq:grey-box-opt}.

Since the function $h$ is unknown, we must learn it from data. We focus on the so-called bandit feedback setting wherein, at every iteration $t$, an algorithm selects a query point $x_t \in \mathcal{X}$ for which noisy evaluations of $h(x_t)$ can be observed. That is, we receive an observation $y_{t} = h(x_t) + \epsilon_t$
where $\epsilon_t \sim \mathcal{N}(0, \sigma^2 I_m)$ is a zero-mean Gaussian noise term. 
To make progress toward an efficient sampling procedure, we need to make some ``regularity'' assumption for $h$. Here, we will focus on the case that $h$ satisfies some underlying smoothness properties such that the sequence of data points $\mathcal{D}_t = \{ (x_i, y_i) \}_{i=1}^t$ can be  correlated to future predictions. In particular, we assume $h$ is a sample from a multi-output Gaussian process (GP) distribution, $\mathcal{GP}(\mu_0, K_0)$, where $\mu_0 : \mathcal{X} \to \mathbb{R}^m$ is the prior mean function and $K_0 : \mathcal{X} \times \mathcal{X} \to \mathbb{S}^m_{++}$ is the prior covariance (or kernel) function with $\mathbb{S}_{++}^m$ denoting the set of positive definite matrices of dimension $m$ \cite{williams2006gaussian}. Given the current set of $t$ observations $\mathcal{D}_t$, the posterior remains a GP with closed form expressions for updated mean $\mu_t$ and covariance $K_t$ functions \cite{liu2018remarks}. As such, the posterior predictions of the unknown function $h(x)$ can be described by a multivariate normal distribution of the form
\begin{align} \label{eq:gp-posterior}
    h(x) \mid \mathcal{D}_t \sim \mathcal{N}( \mu_t(x), \Sigma_t(x) ),
\end{align}
where $\Sigma_t(x) = K_t(x,x)$ is the (marginal) covariance matrix at the test point $x$. Since \eqref{eq:grey-box-opt} is capable of flexibly fusing physical knowledge with data, it is a particuarly effective formulation for optimizing over ``hybrid'' models. 


\begin{remark}
There has been significant work on multi-output GPs, with the most straightforward method being to model each output as an independent GP with or without shared hyperparameters, e.g., \cite{eriksson2021scalable}. There are two main drawbacks of such approaches: (i) it neglects potentially important correlation between the outputs and (ii) constructing and storing a separate GP can be inefficient (from both a memory and computation point of view), especially when the number of outputs is large. As such, more recent work has focused on so-called multi-task GP (MTGP) models that use structured covariance functions of the form $k([x,i],[x',j]) = k_\text{data}(x,x')k_\text{task}(i,j)$ where $x$ is the input to task $i \in \mathbb{N}_1^m$ and $x'$ is the input to task $j \in \mathbb{N}_1^m$. This induces a Kronecker structure in the covariance matrix, which greatly simplifies the equations for $\mu_t(x)$ and $\Sigma_t(x)$, as shown in \cite{maddox2021bayesian}. Our proposed algorithm applies in all of these settings, as it only requires the ability to evaluate \eqref{eq:gp-posterior}. 
\end{remark}

\begin{remark}
The noise term $\epsilon_t$ is used to model errors associated with the measurement process used to obtain values of the unknown function $h(x)$, which is never perfect in practice. For a physical system, this noise would likely come from unmeasured fluctuations in the sensor. For a simulation, this noise could be due to stochastic fluctuations in the underlying model (e.g., random motion of particles in a molecular dynamics simulation) or other unknown sources of numerical error. It is worth noting that we assume that this noise follows a zero-mean Gaussian distribution to simplify the construction of the posterior distribution in \eqref{eq:gp-posterior}. Although our theoretical results in Section \ref{sec:theory} are based on this assumption, there are strategies that can be used to replace it with the more agnostic setting considered in \cite{srinivas09} in which the noise only needs to be i.i.d. $\sigma$-sub Gaussian. The proposed algorithm can be straightforwardly applied in this more general setting and we plan to study the theoretical implications of more general noise settings in future work.
\end{remark}

\subsection{The COBALT algorithm}

In principle, we could tackle \eqref{eq:grey-box-opt} with Bayesian optimization (BO) algorithms by neglecting the composite structure and treating it as a black-box problem 
\begin{subequations} \label{eq:black-box}
\begin{align}
    \max_{x \in \mathcal{X}} &~~ f_0(x), \\ 
    \text{subject to:} &~~ f_i(x) \geq 0, ~~ \forall i \in \mathbb{N}_1^n,
\end{align}
\end{subequations}
where $f_i(x) = g_i(x, h(x))$ for all $i \in \mathbb{N}_0^n$. Given GP surrogate models for the unknown functions, the second major component of BO is to choose an acquisition function $\alpha_t : \mathcal{X} \to \mathbb{R}$ that depends on the posterior distributions. When properly selected, the value of $\alpha_t(x)$ at any $x \in \mathcal{X}$ should be a good measure of the potential benefit of querying the unknown function at this point in the future \cite{astudillo2021thinking}. We can then define the BO framework as the sequential learning process that corresponds to selecting $x_{t+1}$ as the maximizer of $\alpha_t$, i.e., $x_{t+1} \in \argmax_{x \in \mathcal{X}} \alpha_t(x)$. The challenge is that, although we can place GP priors on $\{ f_i \}_{i \in \mathbb{N}_0^n}$ to derive $\alpha_t$, this is not an effective model in general for the case of composite functions. Consider the simple example $f_0(x) = h(x)^2$ -- we clearly see that $f_0$ must be positive for all $x$ but this is not enforced when $f_0$ is directly modeled as a GP. This observation was the main motivation for our previous algorithm, COBALT, that constructs an improved acquisition function given proper statistical representations of the posterior predictive distributions $\{ f_i(x) \mid \mathcal{D}_t \}_{i \in \mathbb{N}_0^n}$. 

COBALT can be derived using a similar thought experiment to the expected improvement (EI) acquisition function \cite{movckus1975bayesian,jones1998efficient} commonly used in the black box case. Suppose we have sampled at a set of $t$ points, the observations are noise-free,
and we are only willing return an evaluated point as our final solution. If we have no evaluations left to make, the optimal choice must correspond to the point with the largest feasible objective value. Let $f_{0,t}^\star$ denote the objective value of this point where $t$ is the number of times we have evaluated $h$ so far. That is,
\begin{align} \label{eq:fnstar}
    f_{0,t}^\star = \begin{cases}
        \max_{j \in S_t} f_0(x_j), &\text{if~} |S_t| > 0, \\
        M, &\text{otherwise}.
    \end{cases}
\end{align}
where $S_t = \{ j \in \mathbb{N}_{1}^t : f_i(x_j) \geq 0, ~\forall i \in \mathbb{N}_1^n \}$ is the set of feasible observations and $M \in \mathbb{R}$ is a penalty for not having a feasible solution (typically set to be smaller than the lowest possible objective value $M \leq \min_{x \in \mathcal{X}} f_0(x)$). 
Now suppose that we have one additional evaluation to perform and can perform it at any point $x \in \mathcal{X}$ for which we observe $h(x)$. After this new evaluation, the value of the best point will either be $f_0(x)=g_0(x,h(x))$ or $f_{0,t}^\star$. The former will occur if the point was feasible ($f_i(x)=g_i(x,h(x)) \geq 0, \forall i \in \mathbb{N}_1^n$) and it leads to an increase in the objective value $f_0(x) \geq f^\star_{0,t}$. The latter will occur if the point was infeasible ($\exists i \in \mathbb{N}_1^n : f_i(x) < 0$) or there was no improvement in the objective value $f_0(x) \leq f^\star_{0,t}$.
The increase/improvement in the best observed point due to this new evaluation can then be compactly represented as $[f_0(x) - f_{0,t}^\star]^+ \Delta(x)$ where $[a]^+ = \max\{a,0\}$ is the positive part and $\Delta(x)$ is a feasibility indicator that is equal to 1 if $f_i(x) \geq 0, \forall i \in \mathbb{N}_1^n$ or 0 otherwise. We would like to choose $x$ to make this improvement as large as possible; however, $h(x)$ is unknown until after the evaluation. A simple way to deal with this uncertainty is to take the expected value of the improvement, which is the basis for the COBALT acquisition function \cite{paulson2022cobalt}
\begin{align} \label{eq:cobalt}
    \text{COBALT}_t(x) = \mathbb{E}_t\left\lbrace [g_0(x,h(x)) - f_{0,t}^\star]^+ \Delta(x) \right\rbrace,
\end{align}
where $\mathbb{E}_t\{ \cdot \} = \mathbb{E}_t\{ \cdot \mid \mathcal{D}_t \}$ is the expectation under the posterior distribution of $h$ given the data $\mathcal{D}_t$. The posterior distribution is given in \eqref{eq:gp-posterior}, which is a multivariate normal with mean $\mu_t(x)$ and covariance $\Sigma_t(x)$. 

\subsection{Current challenges with COBALT}
\label{subsec:challenge-cobalt}

The COBALT acquisition function \eqref{eq:cobalt} is constructed to be one-step Bayes optimal under the two previously mentioned assumptions: (i) the function evaluations are noise-free and (ii) the final recommended point is restricted to a previously sampled point. However, there are some theoretical and computational challenges with COBALT that we briefly summarize below.

\subsubsection{Noisy observations}

In many applications, such as those involving physical experiments and/or stochastic simulations, the function evaluations are inherently noisy such that \eqref{eq:cobalt} is not directly applicable because $f_{0,t}^\star$ is not well-defined under noisy observations. In practice, one can replace $\max_{j \in S_t} f_0(x_j)$ in \eqref{eq:fnstar} with its noisy counterpart defined in terms of $y_t$, but this results in a loss in the Bayes optimal property as well as does not properly account for uncertainty in the prediction. Another alternative is to use a ``plug-in'' estimate by maximizing the mean objective function subject to the mean constraint function \cite{gardner2014bayesian}, however, this has a tendency to overestimate the incumbent, leading to insufficient exploration in practice. 

\subsubsection{Computation and constraint handling}

Another important challenge with \eqref{eq:cobalt} is that it does not have an analytic expression for general nonlinear functions $\{ g_i \}_{i \in \mathbb{N}_0^n}$. Instead, we must resort to a Monte Carlo (MC) sampling procedure to approximate it by performing the following ``whitening transformation'' \cite{wilson2017reparameterization,zhang2021constrained} (derived from the closure of Gaussian distributions under linear transformations)
\begin{align} \label{eq:whitening-transform}
    h(x) \mid \mathcal{D}_t = \mu_t(x) + C_t(x) Z,
\end{align}
where $C_t(x)$ is the lower Cholesky factor of $\Sigma_t(x)=C_t(x)C_t(x)^\top$ and $Z \sim \mathcal{N}(0,I_m)$ is an $m$-dimensional standard normal random vector. Using \eqref{eq:whitening-transform}, the MC approximation of \eqref{eq:cobalt} is given by
\begin{align} \label{eq:cobalt-approx}
    \text{COBALT}_t(x) \approx \frac{1}{L}\sum_{l=1}^L \left\lbrace \left[ g_0(x,\mu_t(x)+C_t(x)Z^{(l)}) - f_{0,t}^\star \right]^+ \Delta^{(l)}(x) \right\rbrace,
\end{align}
where $Z^{(1)},\ldots,Z^{(L)} \sim \mathcal{N}(0,I_m)$ are a set of $L$ i.i.d. random samples and $\Delta^{(l)}(x)$ is the evaluation of the feasibility function for the $l^\text{th}$ sample
\begin{align}
    \Delta^{(l)}(x) = \mathbb{I}\left( \min_{1\leq i\leq m} g_i(x,\mu_t(x)+C_t(x)Z^{(l)}) \geq 0)
 \right),
\end{align}
for indicator function $\mathbb{I}(A)$ that is equal to 1 if logical proposition $A$ is true and 0 otherwise. The presence of the indicator function in \eqref{eq:cobalt-approx} results in a highly non-smooth function that is difficult to maximize using standard gradient-based methods. As such, we previously suggested to apply an explicit linearization technique to overcome this challenge in practice (see \cite{paulson2022cobalt} for details). Not only can this linearization technique degrade when $\{ g_i \}_{i=1}^n$ are highly nonlinear, this requires us to select so-called ``trust level'' parameters related to the desired satisfaction probability. In \cite{paulson2022cobalt}, we develop a heuristic procedure to design these trust levels, which performed well in practice, though their impact on performance remains an open question. 



\subsubsection{Theoretical guarantees}

Due to the practical approximations mentioned in the previous section, no theoretical performance guarantees have been established for the current implementation of COBALT. The heuristic noise and constraint handling methods further complicate this task. It is worth noting that, in the absence of constraints, one could follow a similar approach to \cite[Proposition 2]{astudillo2019bayesian} to establish convergence of the exact COBALT acquisition function, i.e., $f_{0,t}^\star \to \max_{x \in \mathcal{X}}f_0(x)$ when $x_{t+1} \in \argmax_{x \in \mathcal{X}} \text{COBALT}_t(x)$ as $t \to \infty$. However, this result only provides a guarantee of convergence in the limit of infinite samples and does not establish any sort of bounds on the rate of convergence for a finite $t$. This serves as a key source of inspiration for this work, as we would like to come up with an algorithm for which we can derive convergence rate bounds. We present such an algorithm in the next section. It also turns out that our proposed approach can straightforwardly handle noisy observations and does not require any constraint linearization. 


\section{Proposed Constrained Upper Quantile Bound (CUQB) Method}
\label{sec:cuqb}

This section summarizes our proposed approach, called constrained upper quantile bound (CUQB), for solving \eqref{eq:grey-box-opt}. The idea is a generalization of the upper confidence bound (UCB) method \cite{srinivas09,chowdhury2017kernelized,vakili2021information} for solving black-box optimization problems of the form \eqref{eq:black-box} when all functions $\{ f_i \}_{i \in \mathbb{N}_0^n}$ are fully unknown. UCB relies on the ability to calculate explicit expressions for the ``confidence bounds'' for GP models that give us a range of possible observations. As we show next, we can still compute such bounds for composite functions $\{ g_i(x,h(x)) \}_{i \in \mathbb{N}_0^n}$ (with non-Gaussian predictions) by determining quantiles of the implied random variable.

\subsection{Composite quantile functions}

Let $Y_{i,t}(x)$ denote the conditional random variable corresponding to the prediction of the $i^\text{th}$ function in \eqref{eq:grey-box-opt} at point $x$ given data $\mathcal{D}_t$ for all $i \in \mathbb{N}_0^n$. 
Using the transformation \eqref{eq:whitening-transform}, we can express the target random variable as follows
\begin{align} \label{eq:Yit}
    Y_{i,t}(x) = g_i(x, \mu_t(x) + C_t(x)Z), ~~ \forall i \in \mathbb{N}_0^n, \forall t \geq 0.
\end{align}
In the remainder of this section we will drop subscripts $i,t$ for notational simplicity and refer to $Y_{i,t}(x)$ as simply $Y(x)$, which is a random variable parametrized by the test point location $x$. Let $F_{Y(x)}(y) = \text{Pr}\{ Y(x) \leq y \}$ denote the cumulative distribution function (CDF) of $Y(x)$. The quantile function $Q_{Y(x)} : [0,1] \to \mathbb{R}$ then maps its input $p$ (a probability level) to a threshold value $y$ such that the probability of $Y(x)$ being less than or equal to $x$ is $p$. Therefore, we can think of the quantile as simply the generalized inverse CDF, i.e.,
\begin{align} \label{eq:quantile}
Q_{Y(x)}(p) = F_{Y(x)}^{-}(p) = \inf\{ y \in \mathbb{R} : F_{Y(x)}(y) \geq p \}.
\end{align}
Note that the generalized inverse exists for any transformation $g_i$. When the function $g_i(x,y)$ is differentiable in its second argument $y$, the CDF will have a unique inverse though we will not require this assumption in our analysis.
The quantile function can be derived in closed form for a Gaussian distribution but in general will have a more complicated representation. The main value of the quantile function in our context is that it can be immediately used to bound the range of possible predictions by properly selecting the probability levels $p$. For example, the following bound holds by construction for any $\alpha \in (0,1)$
\begin{align}
    \text{Pr}\left\lbrace Y(x) \in \left[ Q_{Y(x)}(\textstyle\frac{\alpha}{2}), Q_{Y(x)}(1-\textstyle\frac{\alpha}{2}) \right] \right\rbrace = 1 - \alpha.
\end{align}
The upper quantile bound $Q_{Y(x)}(1-\textstyle\frac{\alpha}{2})$ and lower quantile bound $Q_{Y(x)}(\textstyle\frac{\alpha}{2})$, respectively, represent our optimistic and pessimistic estimate of the composite function $g(x,h(x))$ given all available data. We will show later that both bounds are important: the upper bound is important for search while the lower bound is important for recommendation in the presence of noise.

\subsection{The CUQB sequential learning algorithm}

We are now in a position to state our proposed algorithm, which we refer to as constrained upper confidence bound (CUQB). To facilitate the description of CUQB, we formally define the upper and lower quantile bound functions next.

\begin{definition}[Quantile bound functions]
Let $Y_{i,t}(x)$ refer to the random variable in \eqref{eq:Yit} corresponding to the posterior prediction of the objective ($i=0$) or a constraint ($i \in \mathbb{N}_1^n$) function given data $\mathcal{D}_t$. The lower and upper quantile bounds for $Y_{i,t}(x)$ are then denoted by
\begin{subequations} \label{eq:quantile-bounds}
\begin{align}
    l_{i,t}(x) &= Q_{Y_{i,t}(x)}(\textstyle\frac{\alpha_{i,t}}{2}), \\
    u_{i,t}(x) &= Q_{Y_{i,t}(x)}(1-\textstyle\frac{\alpha_{i,t}}{2}),
\end{align}
\end{subequations}
where $\{ \alpha_{i,t} \}_{i\in\mathbb{N}_0^n,t \geq 0}$ is a known sequence of probability values $\alpha_{i,t} \in (0,1)$. 
\end{definition}

A complete description of the proposed CUQB method is provided in Algorithm \ref{alg:cuqb} in terms of these quantile bounds. This algorithm is conceptually simple and only requires one to solve a single auxiliary problem with the grey-box objective and constraint functions replaced by their upper quantile bound functions (Line 5). Since we have assumed that $h(x)$ is expensive to query, the cost of solving this auxiliary problem should be much smaller than the original problem \eqref{eq:grey-box-opt}, even though it is generally nonconvex. A practical approach to solving this optimization problem is discussed in the next section.\footnote{Throughout the manuscript, we refer to ``CUQB'' as the conceptual method in Algorithm \ref{alg:cuqb} that requires the optimization problem \eqref{eq:cuqb_subproblem} to be solved to global optimality at every iteration. In practice, it can be difficult to guarantee that one has found a global maximizer and so one must often resort to a heuristic global optimization procedure (e.g., multi-start of a local gradient-based optimization algorithm) in practice. Although the theoretical analysis does not hold for the practical implementation of CUQB, our extensive numerical results show that strong performance results can still be achieved in practice under such approximations.}

\begin{algorithm}[tb!]
\caption{The \textbf{C}onstrained \textbf{U}pper \textbf{Q}uantile \textbf{B}ound (CUQB) Grey-Box Optimization Algorithm for Noisy and Expensive Functions}
\textbf{Input:} Compact domain $\mathcal{X}$; GP prior mean function $\mu_0(x)$ and covariance function $K_0(x,x')$; probability values $\{ \alpha_{i,t} \}_{i\in\mathbb{N}_0^n,t \geq 0}$; recommendation penalty parameter $\rho > 0$; and total number of iterations $T$.  
\begin{algorithmic}[1]
\For{$t=0$ to $T-1$}
\If{$\exists i \in \mathbb{N}_1^n$ such that $\max_{x \in \mathcal{X}} u_{i,t}(x) < 0$}
\State Declare that the original problem \eqref{eq:grey-box-opt} is infeasible and stop. 
\EndIf
\State Solve the following auxiliary constrained optimization problem to determine the next best sample point $x_{t+1}$ given our current posterior distribution
\begin{align} \label{eq:cuqb_subproblem}
    x_{t+1} \in \argmax_{x \in \mathcal{X}} \left\lbrace u_{0,t}(x) ~~ \text{s.t.} ~~ u_{i,t}(x) \geq 0, ~~ \forall i \in \mathbb{N}_1^n \right\rbrace,
\end{align}
\State Query unknown function $h$ at $x_{t+1}$ to get noisy observation $y_{t+1}$.
\State Update data $\mathcal{D}_{t+1} = \mathcal{D}_t \cup \{ (x_{t+1}, y_{t+1}) \}$.
\State Update multi-output GP model for $h$ in \eqref{eq:gp-posterior} with new data $\mathcal{D}_{t+1}$. 
\EndFor
\State Return the point $x_{t^\star}$ with the largest penalized lower quantile bound as our best guess for the optimal solution
\begin{align} \label{eq:recommended-point}
    t^\star = \argmax_{t \in \{ 1,\ldots, T \}}  \left\lbrace l_{0,t-1}(x_t) - \rho \sum_{i=1}^n [-l_{i,t-1}(x_t)]^+ \right\rbrace.
\end{align}
\end{algorithmic}
\label{alg:cuqb}
\end{algorithm}

There are two other features of Algorithm \ref{alg:cuqb} that deserve further elaboration. First, in Lines 1-4, we have included an infeasibility detection scheme. Due to the consideration of potentially unbounded observation noise and finite quantile bounds, it is possible that infeasibility is declared even when the original problem \eqref{eq:grey-box-opt} is feasible. Luckily, we can control the probability of a false declaration through a proper selection scheme for $\{ \alpha_{i,t} \}_{i\in\mathbb{N}_0^n,t \geq 0}$, meaning we can ensure this is triggered only when the actual problem is infeasible with high probability, which is discussed further in Section \ref{sec:theory}. Second, in Line 10, we have incorporated a recommendation procedure for returning the best point in the sequence $\{ x_1,\ldots, x_T \}$. Although there are many potential choices (e.g., simply return the final point $x_T$), this approach is particularly robust to the noise in the observations and allows for convergence rate bounds to be established, which is also discussed in detail in Section \ref{sec:theory}.

Figure \ref{fig:illustrative_example} provides a comparison of the classical EI and UCB acquisition functions to COBALT and UQB when there are no constraints, $h(x) = \sin(x) + \cos(x)$ is scalar valued, $g(x,h(x)) = h(x)^2$, and we have evaluated $h(x)$ at five points. The left-hand column shows the hybrid (grey-box) modeling case wherein we compute a GP model for $h(x)$ and propagate the uncertainty through the known function $g(x, y)$ to compute an implied posterior for $f(x) = g(x, h(x))$. The right-hand column shows the standard black-box modeling case wherein we directly compute a GP model for $f(x)$. The first main observation is that the black-box GP posterior has much larger 95\% confidence intervals than the grey-box posterior including predicting the potential for negative values. 
Both COBALT and UQB are able to take advantage of this valuable information to more quickly hone in on good sampling regions. Note that, in addition to more naturally handling noise than COBALT, UQB does not have flat regions, which makes it a bit easier to practically optimize.\footnote{We developed a heuristic scaling procedure in \cite{paulson2022cobalt} to overcome the challenges with the flat regions of COBALT that have gradient equal to zero.}  

\begin{figure}[ht!] 
    \centering
    \begin{subfigure}[b]{0.46\textwidth}
        \centering
        \includegraphics[width=\textwidth]{./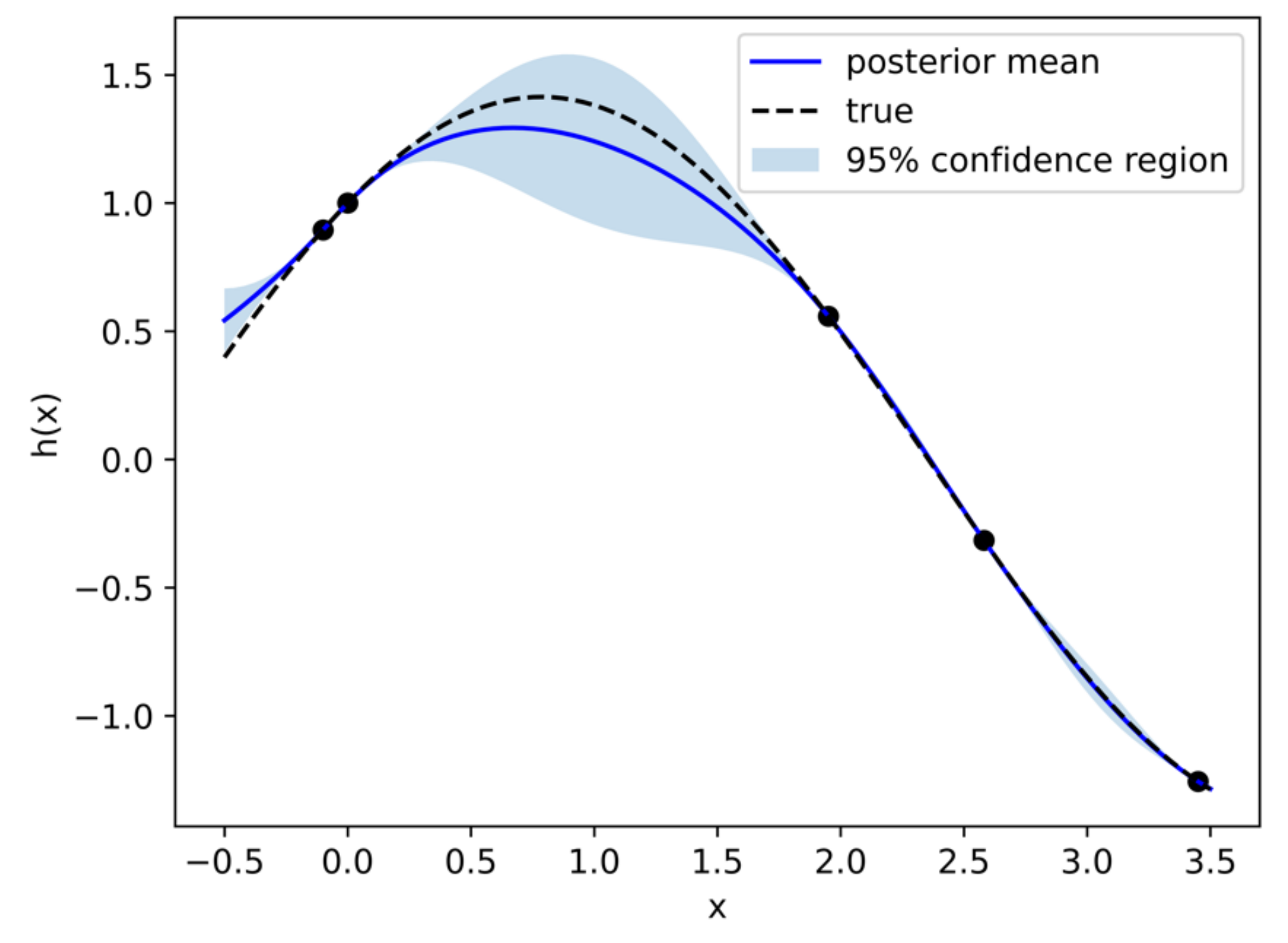}
        \caption{Posterior for $h(x)$ used by hybrid models}
    \end{subfigure}
    \hfill
    \begin{subfigure}[b]{0.46\textwidth}
        \centering
    \end{subfigure}
    \vskip\baselineskip
    \vspace{-2mm}
    \begin{subfigure}[b]{0.46\textwidth}
        \centering
        \includegraphics[width=\textwidth]{./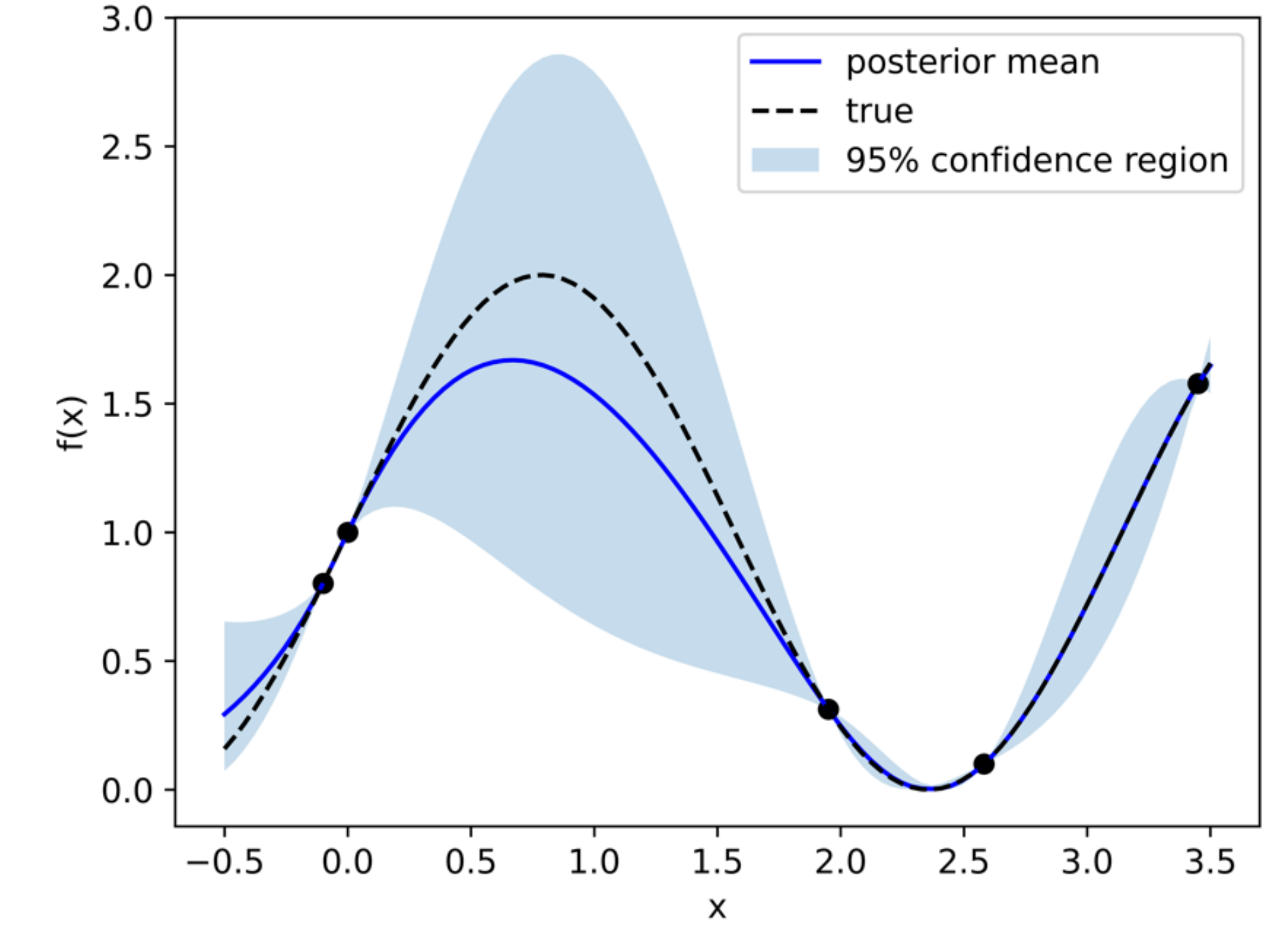}
        \caption{Implied posterior for $f_0(x)$}
    \end{subfigure}
    \hfill
    \begin{subfigure}[b]{0.46\textwidth}
        \centering
        \includegraphics[width=\textwidth]{./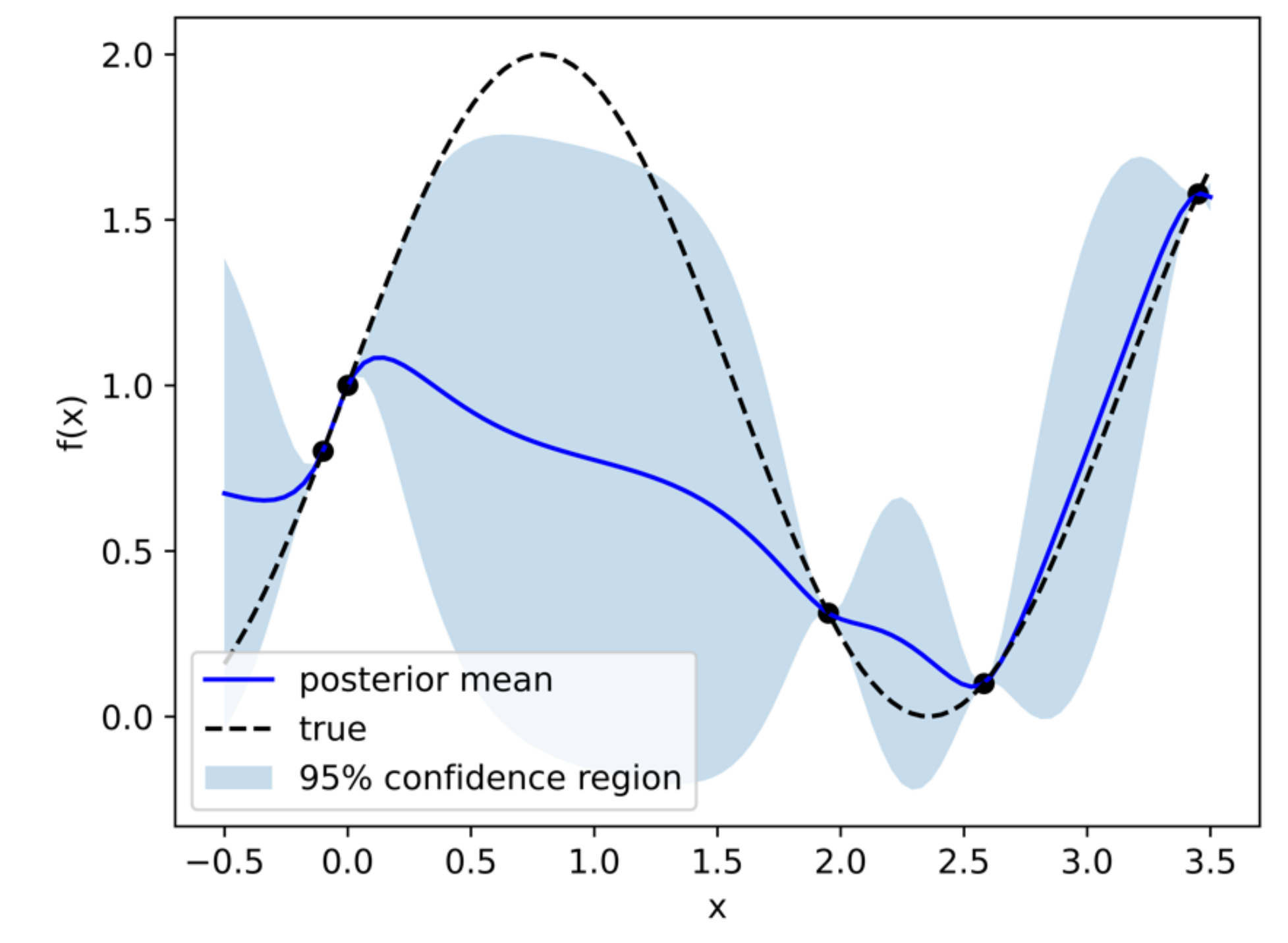}
        \caption{Posterior for $f_0(x)$ using black-box model}
    \end{subfigure}    
    \vskip\baselineskip
    \vspace{-2mm}
    \begin{subfigure}[b]{0.46\textwidth}
        \centering
        \includegraphics[width=\textwidth]{./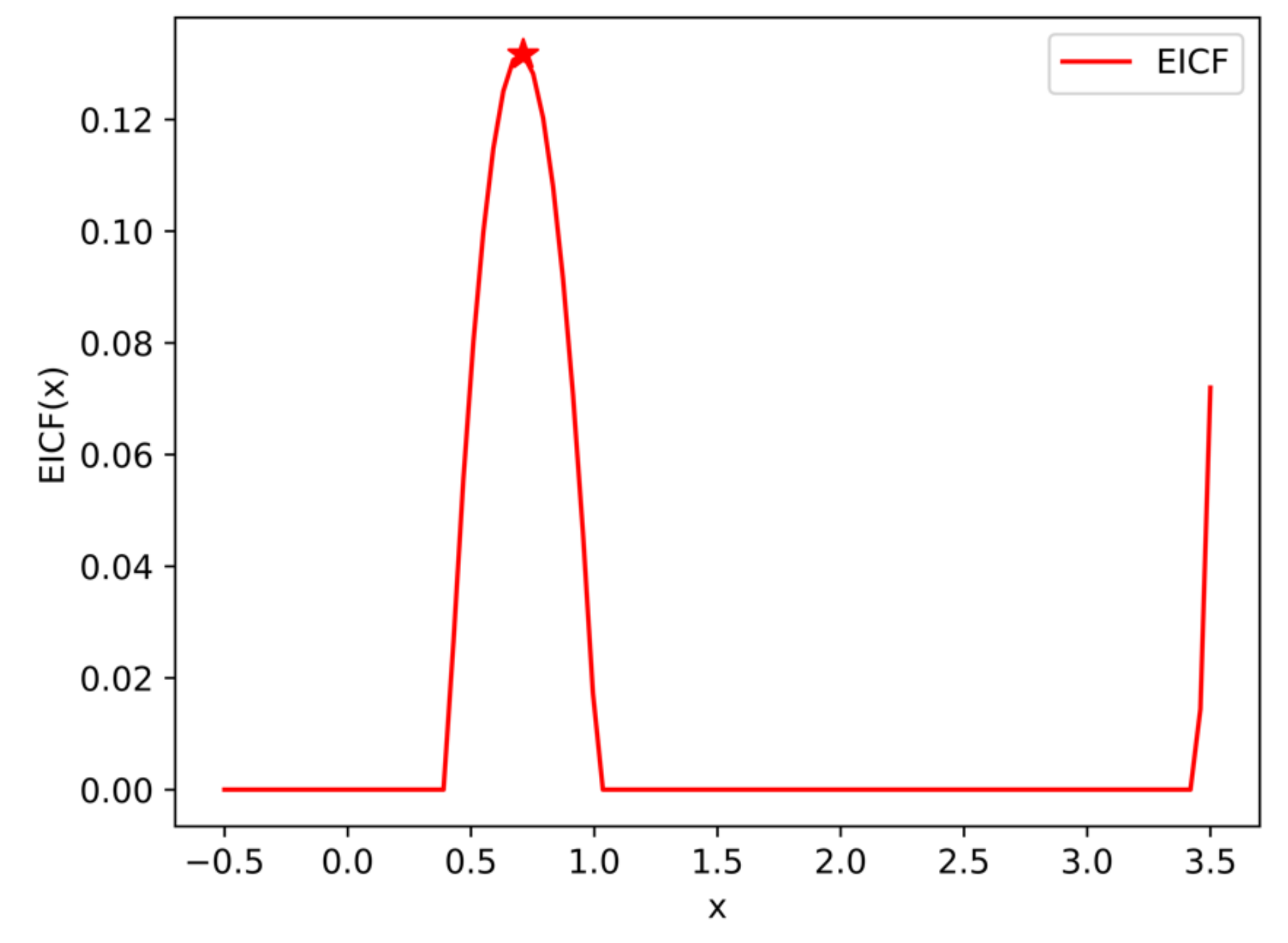}
        \caption{COBALT acquisition function}
    \end{subfigure}
    \hfill
    \begin{subfigure}[b]{0.46\textwidth}
        \centering
        \includegraphics[width=\textwidth]{./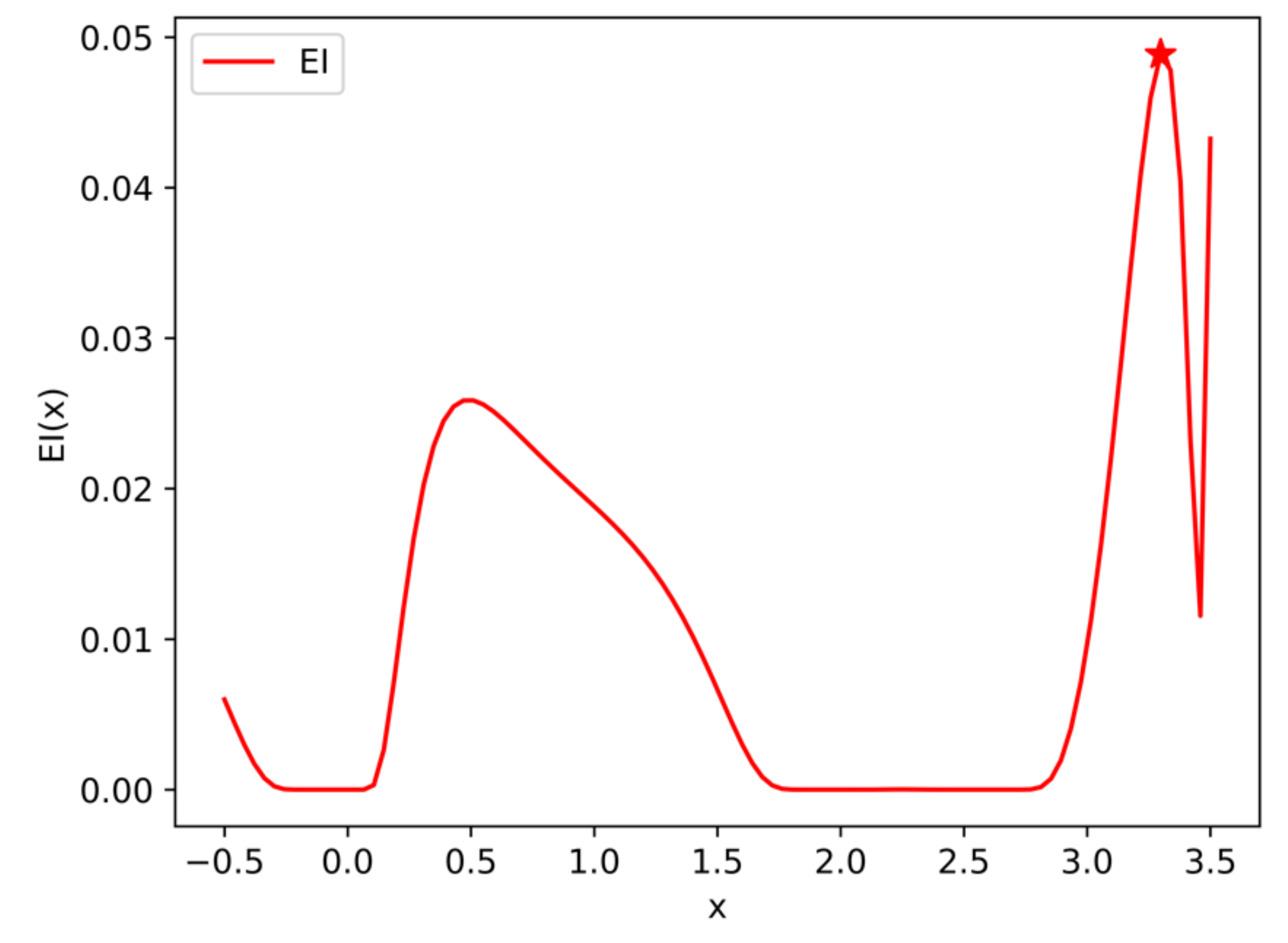}
        \caption{Classical EI acquisition function}
    \end{subfigure}
    \vskip\baselineskip
    \vspace{-2mm}
    \begin{subfigure}[b]{0.46\textwidth}
        \centering
        \includegraphics[width=\textwidth]{./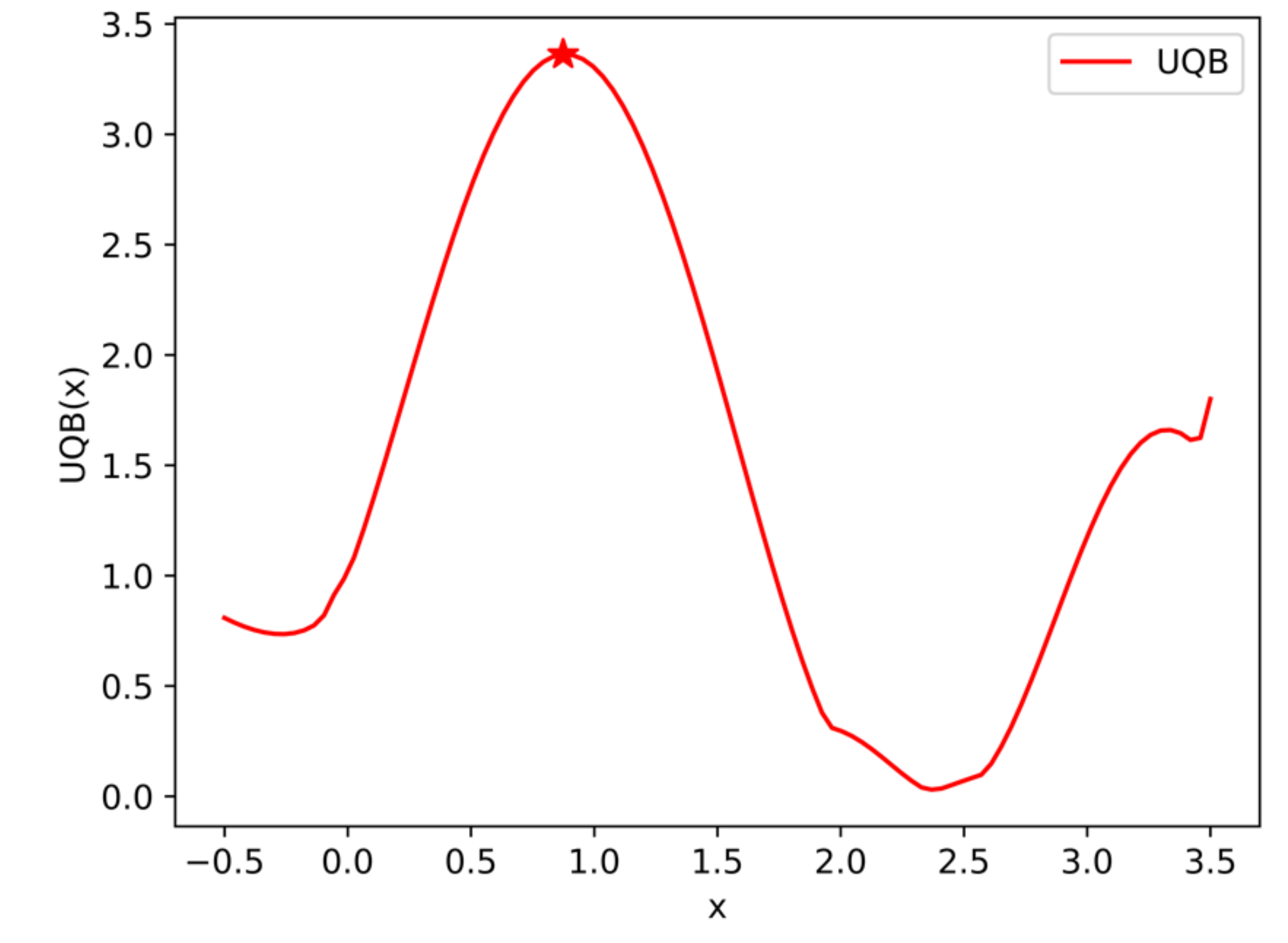}
        \caption{Proposed UQB acquisition function}
    \end{subfigure}
    \hfill
    \begin{subfigure}[b]{0.46\textwidth}
        \centering
        \includegraphics[width=\textwidth]{./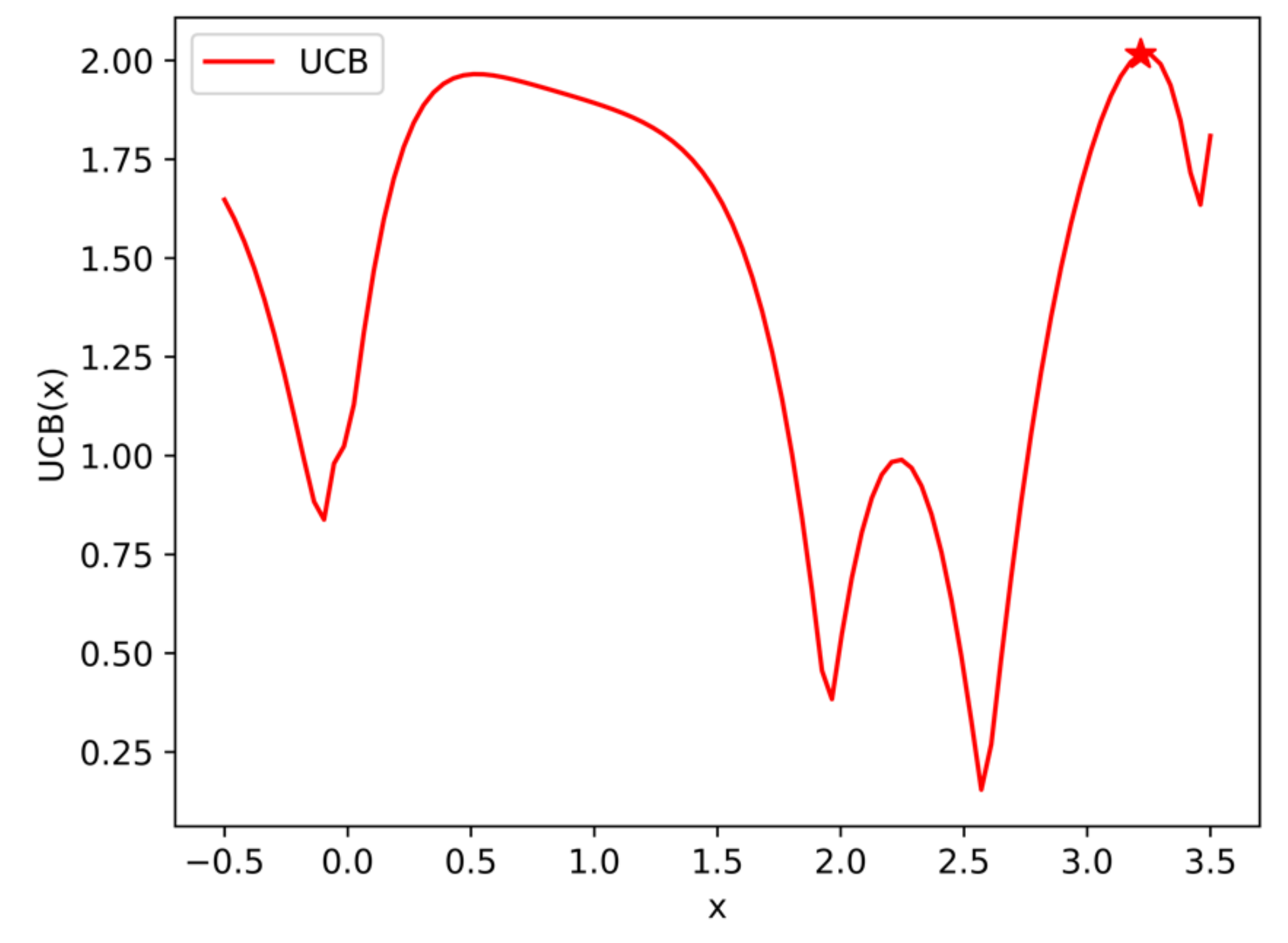}
        \caption{Classical UCB acquisition function}
    \end{subfigure}    
    \caption{Illustrative example comparing classical EI and UCB acquisition functions to their composite (grey-box) counterparts COBALT and UQB for a problem with scalar $h(x)$ and $f(x) = g(x,h(x)) = h(x)^2$. The observations of $h(x)$ provide more useful information about the location of potential global maxima, so that COBALT and UQB choose to evaluate points much closer to the global maxima.}
\label{fig:illustrative_example}
\end{figure}

Figure \ref{fig:constraints} demonstrates the value of CUQB from a constraint handling point of view by comparing its predicted feasible region (shown in red) to that of COBALT and EPBO (black-box version CUQB) for the following two-dimensional constraint
\begin{align}
    \{ x = (x_1,x_2) \in [-4,4]^2 : g(h(x_1,x_2)) \leq 100 \},
\end{align}
where $h(x_1,x_2) = (x_1^2 + x_2 - 11)^2 + (x_1 + x_2^2 - 7)^2$ is the unknown function and $g(h(x_1,x_2)) = h(x_1,x_2)^2$ is the known function.
The key observation is that the feasible region for COBALT is not guaranteed to contain the true feasible region (whose boundary is shown with white lines). The feasible region for EPBO does contain the true feasible region, however, due to the black-box representation of the functions, it has a significantly larger region than CUQB as a consequence of its higher degree of prediction uncertainty. CUQB thus achieves the best of both worlds: guaranteed containment of the true feasible region (with high probability) and reduced levels of uncertainty by exploiting prior structural knowledge. 

\begin{figure}[ht!] 
    \centering
    \begin{subfigure}[b]{0.49\textwidth}
        \centering
        \includegraphics[width=\textwidth]{./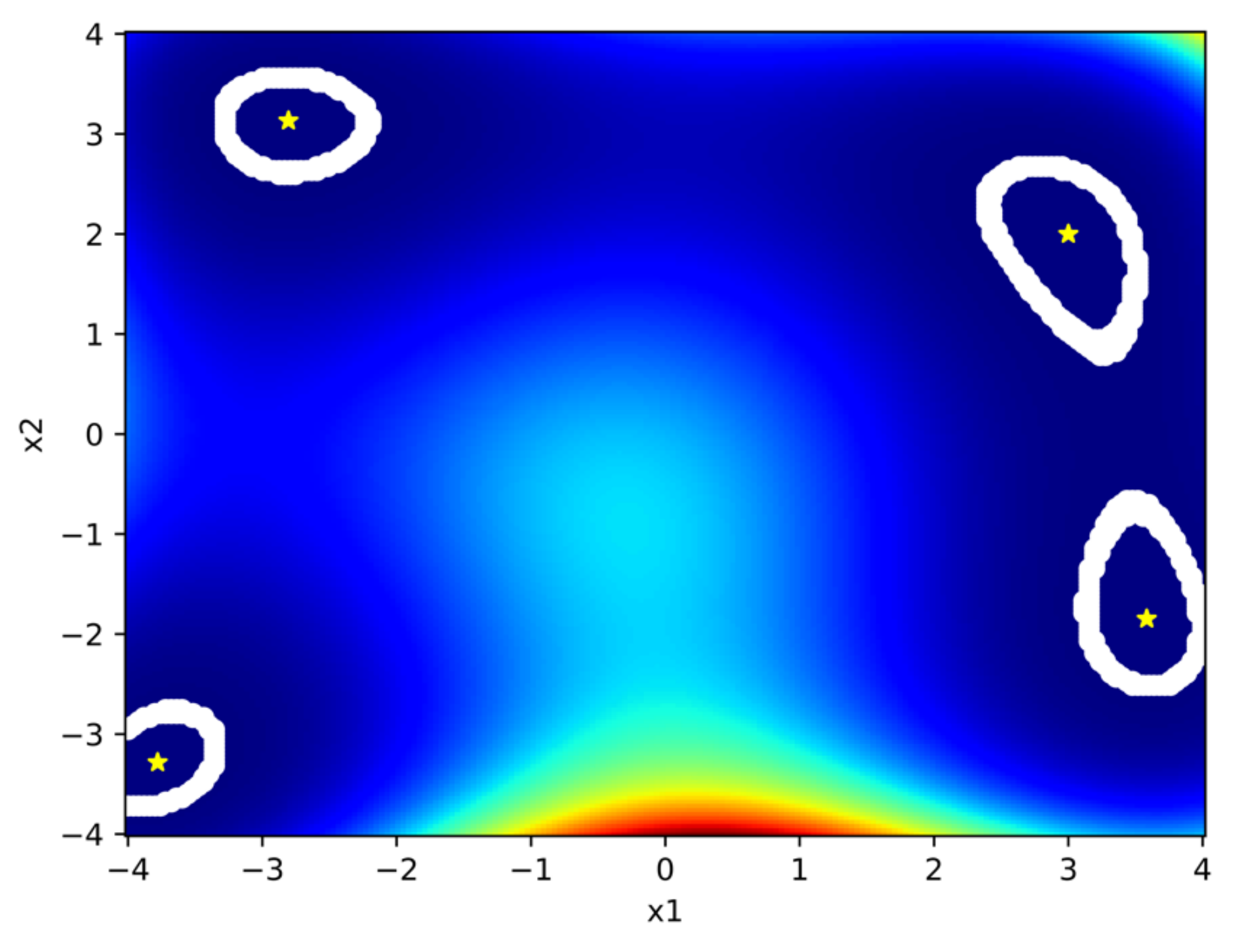}
        \caption{True feasible region}
    \end{subfigure}
    \hfill
    \begin{subfigure}[b]{0.49\textwidth}
        \centering
        \includegraphics[width=\textwidth]{./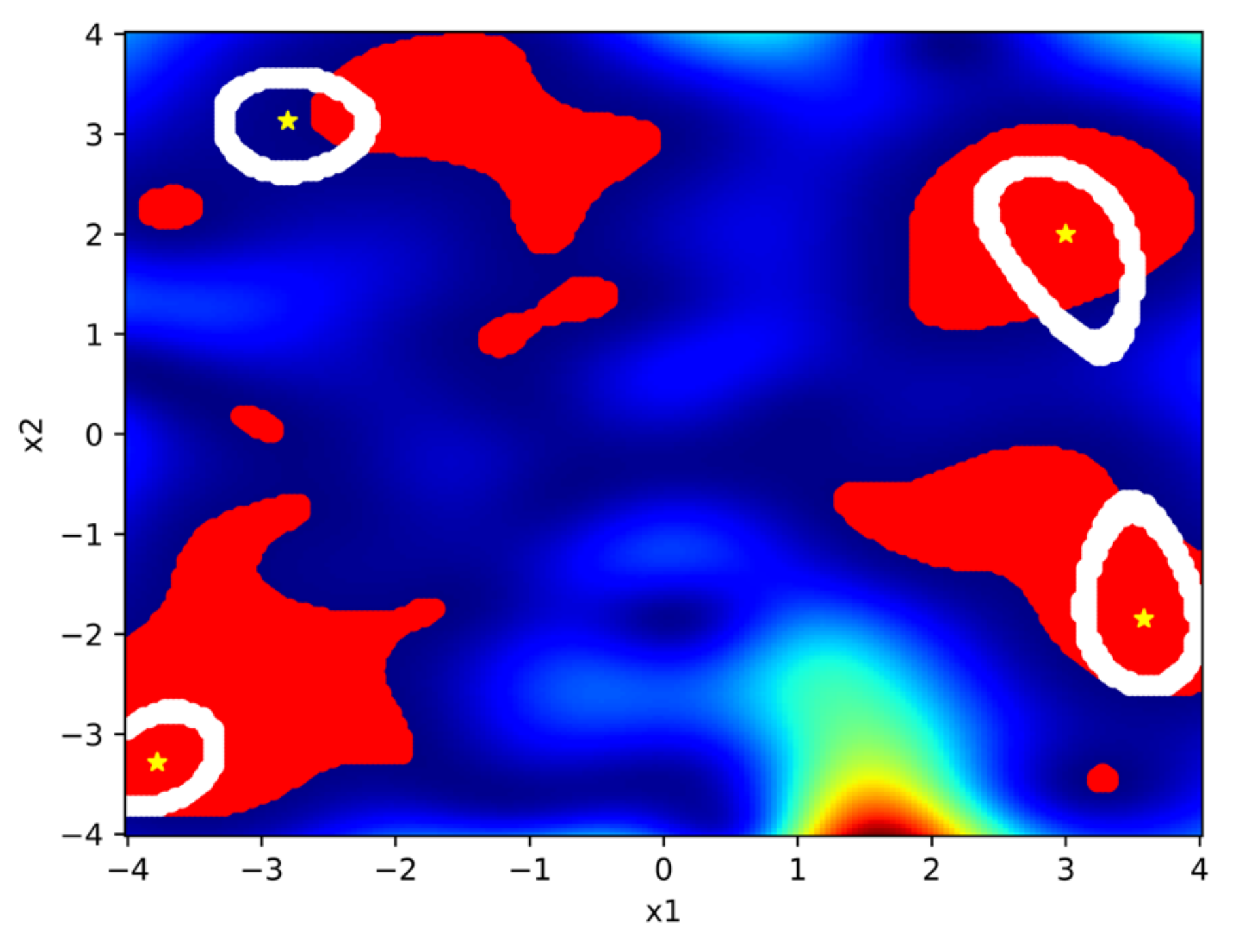}
        \caption{Predicted feasible region with COBALT}
    \end{subfigure}
    \vskip\baselineskip
    \vspace{-2mm}
    \begin{subfigure}[b]{0.49\textwidth}
        \centering
        \includegraphics[width=\textwidth]{./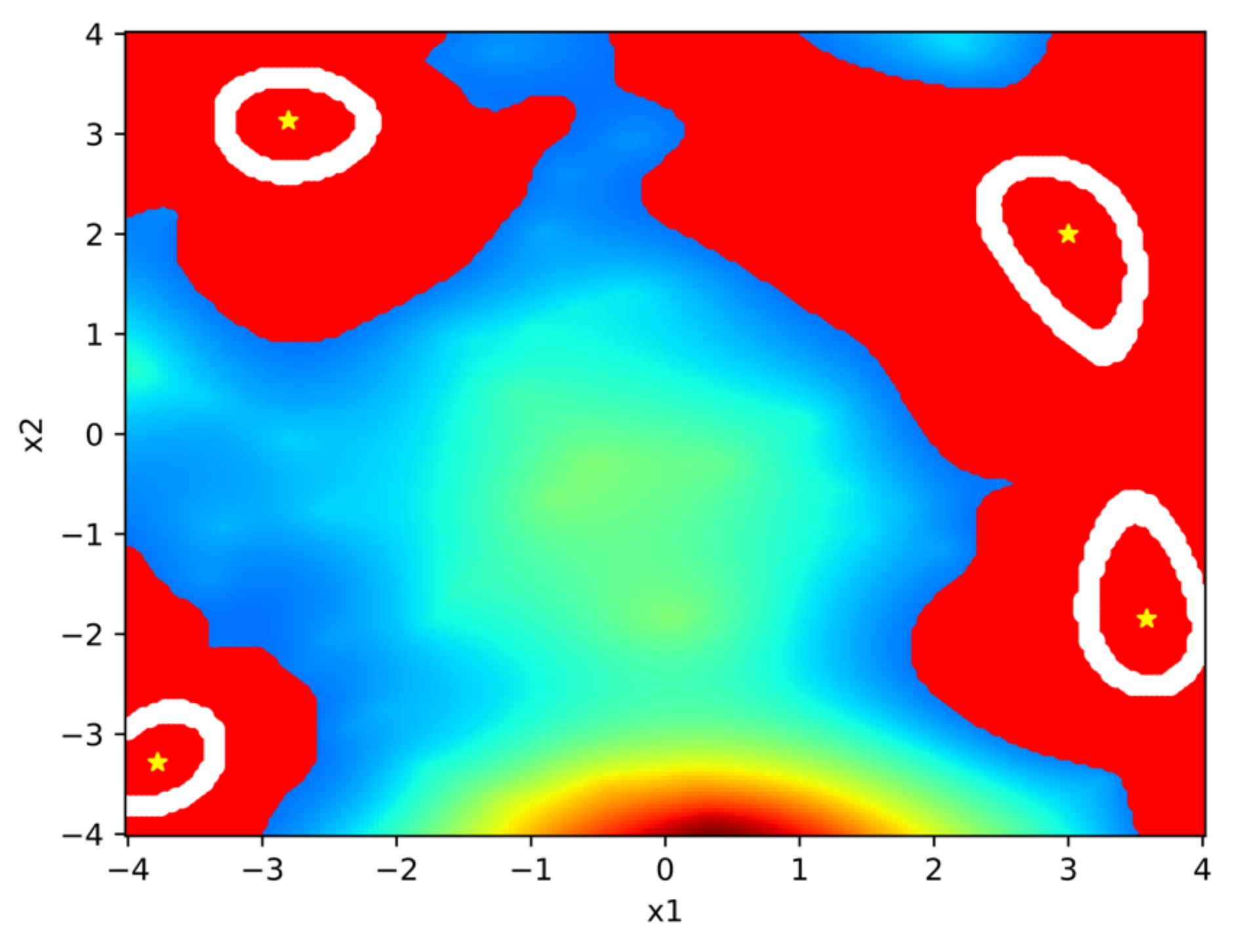}
        \caption{Predicted feasible region with EPBO}
    \end{subfigure}
    \hfill
    \begin{subfigure}[b]{0.49\textwidth}
        \centering
        \includegraphics[width=\textwidth]{./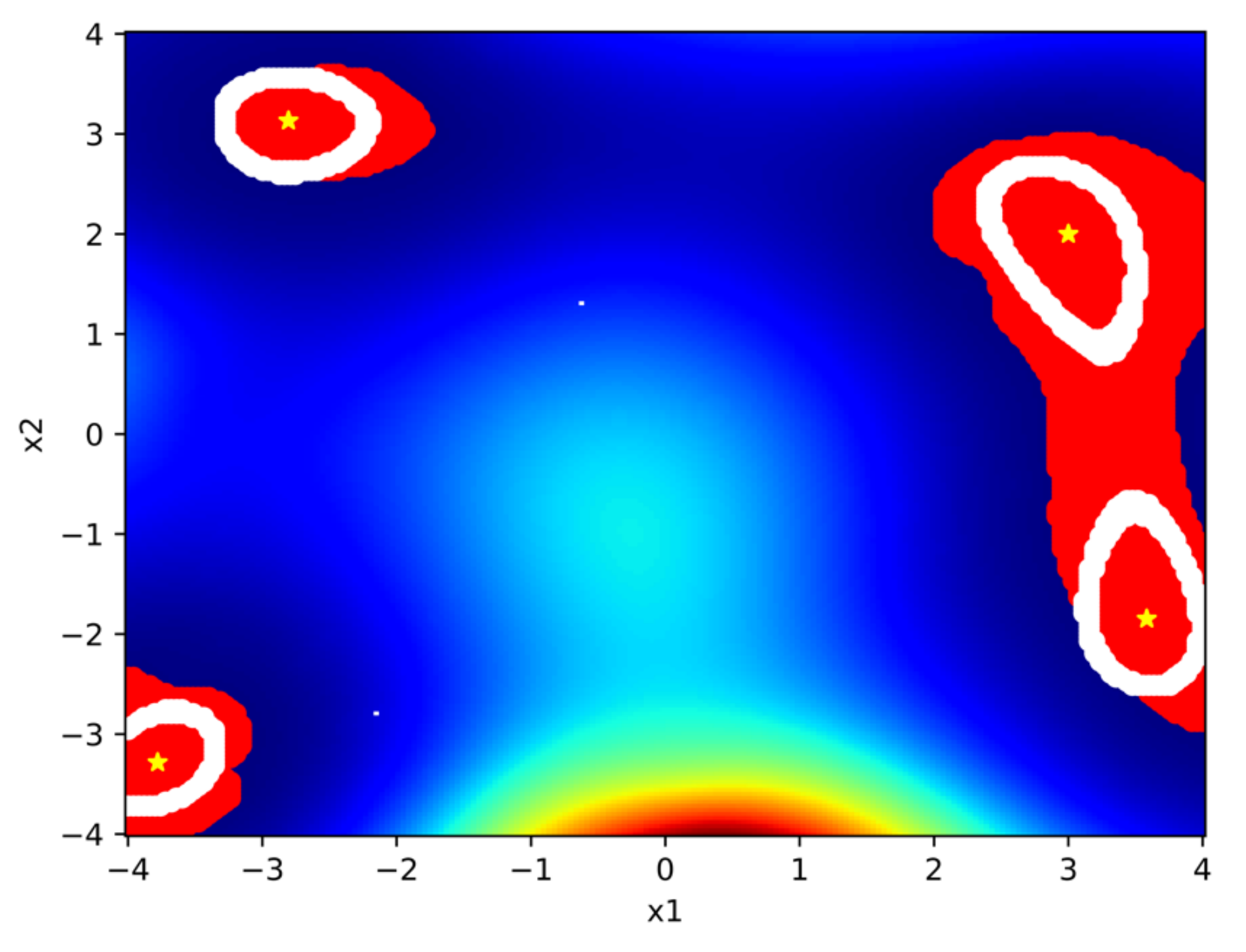}
        \caption{Predicted feasible region with CUQB}
    \end{subfigure}    
    \caption{Illustrative example comparing the constraint handling approaches in COBALT and EPBO to the proposed CUQB method. The filled red area corresponds to the predicted feasible region while the white lines represent the boundary of the true feasible region. We see that CUQB is able to more accurately contain the true feasible region than the alternative methods.}
\label{fig:constraints}
\end{figure}


\subsection{Practical optimization of CUQB using differentiable sorting}
\label{subsec:practicaloptcuqb}

The subproblem \eqref{eq:cuqb_subproblem} is difficult to solve since we do not even have an explicit expression for the quantile functions $\{ u_{i,t} \}_{i \in \mathbb{N}_0^n}$, meaning we cannot easily evaluate them for any specific $x \in \mathcal{X}$. This presents a huge barrier when attempting to apply guaranteed global optimization methods, such as spatial branch and bound \cite{epperly1997reduced}, that typically exploit structure of the function to compute upper and lower bounds in different domains of the search space. 
Instead, we resort to a more practical (heuristic) global optimization procedure. Multi-start methods that randomly sample initial conditions used to initialize a local gradient-based optimization algorithm are popular alternatives since they have been empirically shown to provide a good tradeoff between performance and computational cost. However, it turns out that multi-start is still challenging to apply in this context due to lack of a closed-form expression for the quantile functions. As such, we need to introduce two additional approximations that will allow us to build easy-to-differentiate representations of $\{ u_{i,t} \}_{i \in \mathbb{N}_0^n}$. In this section, we show that these approximations are convergent in the sense that they can be made to exactly match the true quantile functions as two independent parameters are refined. 

First, we must resort to some MC sampling approximation similar to that used for COBALT in \eqref{eq:cobalt-approx} since the distribution of $Y_{i,t}(x)$ is not known in closed-form. Estimating quantiles of a target response, however, is fundamentally different  than estimating its mean (expected value), as it provides additional information about the distribution of $Y_{i,t}(x)$. 
The most straightforward approach for estimating quantiles is based on the order statistics (sorted values) of a set of i.i.d. samples of $Y_{i,t}(x)$. Consider a set of $L$ i.i.d. samples from \eqref{eq:Yit}, i.e., $Y_{i,t}^{(l)}(x) = g_i(x, \mu_t(x) + C_t(x) Z^{(l)})$ for all $l \in \mathbb{N}_1^L$ where $Z^{(1)},\ldots,Z^{(L)} \sim \mathcal{N}(0,I_m)$. Let $\text{Sort} : \mathbb{R}^L \to \mathbb{R}^L$ denote an operator that sorts these samples in ascending order
\begin{align} \label{eq:sort}
    Y_{i,t}^{(1,L)}(x),\ldots, Y_{i,t}^{(L,L)}(x) = \text{Sort}(Y_{i,t}^{(1)}(x), \ldots Y_{i,t}^{(n)}(x)),
\end{align}
such that
\begin{align} \label{eq:sorted-sample}
    Y_{i,t}^{(1,L)}(x) \leq Y_{i,t}^{(2,L)}(x) \leq \cdots \leq Y_{i,t}^{(L,L)}(x).
\end{align}
We can then define the estimated (or empirical) quantile as 
\begin{align} \label{eq:estimated-quantile}
    \hat{Q}_{Y_{i,t}(x)}(p) = Y_{i,t}^{(\lceil pL \rceil, L)}(x),
\end{align}
where $\lceil a \rceil$ denotes the ceil operator (smallest integer greater than or equal to $a$). Therefore, the order $p$ approximated quantile in \eqref{eq:estimated-quantile} corresponds to the $\lceil pL \rceil^\text{th}$ sample in the sorted list \eqref{eq:sorted-sample}. 

One challenge remains when replacing the true UQB functions $\{ u_{i,t}(x) \}_{i \in \mathbb{N}_0^n}$ with their empirical counterparts $\{ \hat{u}^{(L)}_{i,t}(x) = \hat{Q}_{Y_{i,t}(x)}(1-\textstyle\frac{\alpha_{i,t}}{2}) \}_{i \in \mathbb{N}_0^n}$; the sorting operator in \eqref{eq:sort} is not differentiable everywhere such that it cannot be directly incorporated into gradient-based solvers. To better see this, we note that the sort operator can be formulated as a linear program (LP), as shown in \cite{blondel2020fast}. To keep the notation consistent, we introduce $s(\boldsymbol{\theta})$ as the sort operator in \textit{descending order} for any sequence $\boldsymbol{\theta} = (\theta_1, \ldots, \theta_L) \in \mathbb{R}^L$. This can be related to the required ascending sort operator in \eqref{eq:sort} via  $\text{Sort}(\boldsymbol{\theta}) = -s(-\boldsymbol{\theta})$. The LP formulation of $s(\boldsymbol{\theta})$ is
\begin{align} \label{eq:lp-sort}
    s( \boldsymbol{\theta} ) = \argmax_{ \boldsymbol{y} \in \mathcal{P}(\boldsymbol{\theta}) } \boldsymbol{y}^\top \boldsymbol{\rho},
\end{align}
where $\mathcal{P}(\boldsymbol{\theta})$ is the permutahedron of $\boldsymbol{\theta}$ (i.e., a convex polytope whose vertices correspond to permutations of $\boldsymbol{\theta}$) \cite{bowman1972permutation} and $\boldsymbol{\rho} = (L, L-1, \cdots, 1)$ is a vector of descending integers from $L$ to 1. The main idea for making a differentiable version of \eqref{eq:lp-sort} is to introduce a quadratic regularization term leading to the following soft sort operator
\begin{align}
    s_\varepsilon(\boldsymbol{\theta}) = \argmax_{ \boldsymbol{y} \in \mathcal{P}(\boldsymbol{\theta}) } \boldsymbol{y}^\top \boldsymbol{\rho} - \frac{\varepsilon}{2}\| \boldsymbol{y} \|^2 = \argmax_{ \boldsymbol{y} \in \mathcal{P}(\boldsymbol{\theta}) } \frac{1}{2}\left\| \boldsymbol{y} - \frac{\boldsymbol{\rho}}{\varepsilon} \right\|^2,
\end{align}
which corresponds to the Euclidean projection of $\boldsymbol{\rho}/\varepsilon$  onto $\mathcal{P}(\boldsymbol{\theta})$ for a given regularization strength $\varepsilon > 0$. By replacing the exact sort $s$ with the soft soft $s_\varepsilon$, we can efficiently compute derivatives of $\hat{u}^{(L)}_{i,t}(x)$ with respect to $x$, which enables \eqref{eq:cuqb_subproblem} to be solved relatively quickly using local gradient-based methods. We summarize the convergence properties of $\hat{u}^{(L)}_{i,t}(x)$ in the following proposition.
\begin{proposition} \label{prop:1}
The empirical quantile function $\hat{u}^{(L)}_{i,t}(x)$ in \eqref{eq:estimated-quantile} converges in probability to the true quantile function $u_{i,t}(x)$ for all $x \in \mathcal{X}$ as $L \to \infty$ and $\varepsilon \to 0$.
\end{proposition}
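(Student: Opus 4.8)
The plan is to split the total error into a \emph{soft-sorting error}, governed by $\varepsilon$, and a \emph{sampling error}, governed by $L$, and show each vanishes in its respective limit. Let $\hat{u}^{(L,\varepsilon)}_{i,t}(x)$ denote the estimate computed with the soft sort $s_\varepsilon$ and $\hat{u}^{(L)}_{i,t}(x)$ the one computed with the exact sort $s$ (so that $\hat{u}^{(L)}_{i,t}(x) = Y_{i,t}^{(\lceil pL\rceil,L)}(x)$ with $p = 1-\alpha_{i,t}/2$). The triangle inequality then gives
\begin{align*}
|\hat{u}^{(L,\varepsilon)}_{i,t}(x) - u_{i,t}(x)| \leq |\hat{u}^{(L,\varepsilon)}_{i,t}(x) - \hat{u}^{(L)}_{i,t}(x)| + |\hat{u}^{(L)}_{i,t}(x) - u_{i,t}(x)|,
\end{align*}
and I would bound the tail probability $\Pr(\,\cdot\,>\delta)$ of each term separately.

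For the sampling term, I would invoke the classical consistency of empirical order-statistic quantiles. Because $\mu_t(x)$ and $C_t(x)$ are fixed given $\mathcal{D}_t$, the samples $Y^{(l)}_{i,t}(x)$ are i.i.d.\ draws from the law with CDF $F_{Y_{i,t}(x)}$, so by the Glivenko--Cantelli theorem the empirical CDF $\hat F_L$ converges uniformly (almost surely) to $F_{Y_{i,t}(x)}$. Since $\hat{u}^{(L)}_{i,t}(x)$ equals the generalized inverse $\hat F_L^{-}(p)$, I would use a direct argument: $\Pr(\hat{u}^{(L)}_{i,t}(x) > u_{i,t}(x)+\delta) = \Pr(\hat F_L(u_{i,t}(x)+\delta) < p)$, and a Hoeffding bound on this average of i.i.d.\ indicators drives it to zero since $F_{Y_{i,t}(x)}(u_{i,t}(x)+\delta) > p$; the symmetric lower-tail estimate is analogous. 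This yields $\hat{u}^{(L)}_{i,t}(x)\to u_{i,t}(x)$ in probability, provided $Q_{Y_{i,t}(x)}$ is continuous at $p$ (equivalently $F_{Y_{i,t}(x)}$ strictly increases through level $p$).

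For the soft-sorting term, I would fix the realized sample vector $\boldsymbol\theta$ and appeal to the result of \cite{blondel2020fast} that $s_\varepsilon(\boldsymbol\theta)\to s(\boldsymbol\theta)$ as $\varepsilon\to 0$. Geometrically, $s_\varepsilon(\boldsymbol\theta)$ is the Euclidean projection of $\boldsymbol\rho/\varepsilon$ onto the permutahedron $\mathcal{P}(\boldsymbol\theta)$; whenever the entries of $\boldsymbol\theta$ are distinct, $\boldsymbol\rho$ lies in the interior of the normal cone of the sorting vertex, so for $\varepsilon$ small enough the projection coincides \emph{exactly} with that vertex, i.e.\ $s_\varepsilon(\boldsymbol\theta)=s(\boldsymbol\theta)$. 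Since $Y_{i,t}(x)$ has a continuous law, the samples are almost surely distinct, so for every fixed $L$ the soft-sort error vanishes almost surely (hence in probability) as $\varepsilon\to 0$.

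Combining the two pieces: given $\delta,\eta>0$, I would first pick $L$ large enough that the sampling term has probability below $\eta/2$, and then, for that $L$, pick $\varepsilon$ small enough that the soft-sort term has probability below $\eta/2$, establishing convergence in probability in the joint limit. The main obstacle I anticipate is the regularity required in the sampling step: the order-statistic estimator is only consistent when $Q_{Y_{i,t}(x)}$ is continuous at $p$, so I would either state this as a mild assumption or argue that it holds generically for the composite map $g_i(x,\mu_t(x)+C_t(x)Z)$ (for instance, whenever $Y_{i,t}(x)$ admits a density near its $p$-quantile). A secondary subtlety is the correct handling of the joint $(L,\varepsilon)$ limit, since the threshold below which the soft sort is exact depends on both the random realization and $L$; the iterated choice above (or letting $\varepsilon=\varepsilon(L)\to 0$ sufficiently slowly) resolves this cleanly.
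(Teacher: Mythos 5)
Your proposal is correct and follows the same two-part skeleton as the paper's own proof: the paper likewise splits the error into a sampling part, handled by an order-statistic consistency lemma built on the identity $\{y_{(k)} \leq y\} \Leftrightarrow \{\textstyle\sum_{l=1}^L \mathbb{I}(y_l \leq y) \geq k\}$ plus the law of large numbers, and a soft-sorting part, handled by citing \cite[Proposition 2]{blondel2020fast} that $s_\varepsilon(\boldsymbol{\theta}) \to s(\boldsymbol{\theta})$ as $\varepsilon \to 0$. Within that shared strategy, three differences are worth recording. (i) You use Hoeffding/Glivenko--Cantelli where the paper uses the SLLN; this is cosmetic, though yours yields quantitative tail bounds. (ii) Your geometric observation that the projection of $\boldsymbol{\rho}/\varepsilon$ coincides \emph{exactly} with the sorting vertex once $\varepsilon$ is small enough (entries distinct) is sharper than the limit statement the paper invokes; the only caveat is that $Y_{i,t}(x)$ need not have a continuous law (e.g., if $g_i$ is constant in its second argument), so the cited convergence result should remain the general fallback, as you indeed arrange. (iii) The substantive difference: the regularity condition you flag --- $F_{Y_{i,t}(x)}$ strictly increasing through level $p$, equivalently $Q_{Y_{i,t}(x)}$ continuous at $p$ --- is genuinely needed, whereas the paper states the proposition unconditionally and even remarks in Section~\ref{sec:cuqb} that uniqueness of the inverse CDF ``will not be required.'' The paper's upper-tail step passes the limit inside the probability, equating $\lim_{L\to\infty}\text{Pr}\{\frac{1}{L}\sum_{l}\mathbb{I}(y_l < Q(p)+\epsilon) \leq \frac{\lceil pL\rceil - 1}{L}\}$ with $\text{Pr}\{F_{-}(Q(p)+\epsilon) < p\}$; this exchange is invalid precisely when $F_{-}(Q(p)+\epsilon) = p$, i.e., when the CDF is flat at level $p$ to the right of $Q(p)$, and in that case consistency genuinely fails: for $Y$ uniform on $[0,1]\cup[2,3]$ and $p = \frac{1}{2}$, the sample median lies near $1$ or near $2$, each with probability tending to $\frac{1}{2}$, so it does not converge in probability to $Q(\frac{1}{2}) = 1$. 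So what you called an anticipated ``obstacle'' is not excess caution; stating the condition (or arguing it holds generically for the pushforward law of $g_i(x,\mu_t(x)+C_t(x)Z)$) repairs a gap that the paper's argument glosses over. Your explicit treatment of the joint $(L,\varepsilon)$ limit via the union bound is likewise more careful than the paper's purely iterated limit.
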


The proof of this result is given in Appendix \ref{appendix:new}. We investigate the impact of these approximations in Section \ref{sec:case-studies}; empirically we found that the approximations are quite accurate for $L$ values $\sim 50$ and $\varepsilon$ values $\sim 0.1$. Since the soft sort operator only requires $O(L\log L)$ complexity for the forward pass and $O(L)$ complexity for the backward differentiation pass, the cost of a single local optimization algorithm remains quite low in practice. 



\subsection{Simplification for linear transformations}

For general nonlinear transformation functions $\{ g_i \}_{i \in \mathbb{N}_0^n}$, we need to use the empirical quantile function \eqref{eq:estimated-quantile} described in the previous section. However, whenever these functions are linear with respect to their second argument, we can develop closed-form solutions similar to the classical UCB and its constrained extensions. 
We summarize this result in the following proposition.
\begin{proposition} \label{prop:2}
Suppose that $g_i(x, y) = a_i^\top(x) y + b_i(x)$ for some known functions $a_i : \mathbb{R}^d \to \mathbb{R}^m$ and $b_i : \mathbb{R}^d \to \mathbb{R}$ for all $i \in \mathbb{N}_0^n$. Then, the lower and upper quantile bound functions for $Y_{i,t}(x)$ in \eqref{eq:quantile-bounds} have the following closed-form representation
\begin{subequations}
\begin{align}
    l_{i,t}(x) &= \mu^Y_{i,t}(x) - \beta_{i,t}^{1/2} \sigma_{i,t}^Y(x), \\
    u_{i,t}(x) &= \mu^Y_{i,t}(x) + \beta_{i,t}^{1/2} \sigma_{i,t}^Y(x),
\end{align}
\end{subequations}
where $\mu_{i,t}^Y(x) = a_i^\top(x) \mu_{t}(x) + b_i(x)$, $\sigma_{i,t}^Y(x) = \sqrt{a_i^\top(x) \Sigma_{t}(x) a_i(x)}$, $\beta_{i,t}^{1/2} = \Phi^{-1}(1-\textstyle\frac{\alpha_{i,t}}{2})$, and $\Phi^{-1}$ is the inverse standard normal cumulative distribution function. 
\end{proposition}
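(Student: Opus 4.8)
The plan is to exploit the fact that an affine transformation of a Gaussian vector is again Gaussian, so that the composite random variable $Y_{i,t}(x)$ reduces to a univariate normal whose quantile function is available in closed form. First I would substitute the assumed linear form $g_i(x,y) = a_i^\top(x) y + b_i(x)$ directly into the whitening representation \eqref{eq:Yit}, which gives
\begin{align*}
    Y_{i,t}(x) = a_i^\top(x)\left[ \mu_t(x) + C_t(x) Z \right] + b_i(x) = \left[ a_i^\top(x)\mu_t(x) + b_i(x) \right] + a_i^\top(x) C_t(x) Z,
\end{align*}
where $Z \sim \mathcal{N}(0, I_m)$. The first bracketed term is deterministic (given $x$ and the data $\mathcal{D}_t$), while the second term is a fixed linear functional of a standard normal vector.

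Next I would identify the distribution of $Y_{i,t}(x)$. Since $Z$ is standard normal and $a_i^\top(x) C_t(x)$ is a (row) vector, the term $a_i^\top(x) C_t(x) Z$ is a scalar Gaussian with mean zero and variance $a_i^\top(x) C_t(x) C_t(x)^\top a_i(x)$. Invoking the Cholesky relation $\Sigma_t(x) = C_t(x) C_t(x)^\top$ from \eqref{eq:whitening-transform}, this variance equals $a_i^\top(x) \Sigma_t(x) a_i(x) = (\sigma_{i,t}^Y(x))^2$. Hence $Y_{i,t}(x) \sim \mathcal{N}(\mu_{i,t}^Y(x), (\sigma_{i,t}^Y(x))^2)$ with $\mu_{i,t}^Y(x) = a_i^\top(x)\mu_t(x) + b_i(x)$, exactly matching the claimed mean and standard deviation expressions.

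Finally I would apply the standard formula for the quantile function of a univariate normal: for $W \sim \mathcal{N}(\mu,\sigma^2)$ one has $Q_W(p) = \mu + \sigma \Phi^{-1}(p)$, which follows because $\Phi\big((w-\mu)/\sigma\big)$ is the CDF of $W$ and is strictly increasing (so the generalized inverse in \eqref{eq:quantile} coincides with the ordinary inverse). Evaluating at the two probability levels from \eqref{eq:quantile-bounds} and using the symmetry identity $\Phi^{-1}(\alpha_{i,t}/2) = -\Phi^{-1}(1-\alpha_{i,t}/2) = -\beta_{i,t}^{1/2}$ immediately yields $u_{i,t}(x) = \mu_{i,t}^Y(x) + \beta_{i,t}^{1/2}\sigma_{i,t}^Y(x)$ and $l_{i,t}(x) = \mu_{i,t}^Y(x) - \beta_{i,t}^{1/2}\sigma_{i,t}^Y(x)$, as desired.

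This argument is essentially routine, so I do not anticipate a genuine obstacle; the one point worth handling carefully is the degenerate case $\sigma_{i,t}^Y(x) = 0$, in which $Y_{i,t}(x)$ is a point mass and its CDF is a step function. There the strict inverse does not exist, but the generalized inverse defined in \eqref{eq:quantile} still returns $\mu_{i,t}^Y(x)$ at every interior probability level, so the stated closed forms continue to hold (both bounds collapse to the mean, consistent with $\sigma_{i,t}^Y(x)=0$). I would note this briefly to confirm the formula remains valid without the differentiability assumption mentioned after \eqref{eq:quantile}.
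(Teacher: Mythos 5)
Your proposal is correct and follows essentially the same route as the paper's proof: both reduce $Y_{i,t}(x)$ to a univariate Gaussian via closure of Gaussian distributions under affine transformations (you make this explicit through the whitening factor $C_t(x)$, the paper invokes it directly on the posterior of $h(x)$), then apply the normal quantile formula together with the symmetry identity $\Phi^{-1}(\alpha) = -\Phi^{-1}(1-\alpha)$. Your added remark on the degenerate case $\sigma_{i,t}^Y(x) = 0$ is a small refinement the paper omits, but it does not constitute a different argument.
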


The proof of Proposition \ref{prop:2} is given in Appendix \ref{appendix:new}. Proposition \ref{prop:2} does not imply that our approach is equivalent to constrained UCB methods, even when $\{ g_i \}_{i \in \mathbb{N}_0^n}$ are linear. This is because the posterior distribution of the function $a^\top(x) h(x) + b(x)$ given observations of $h(x)$ is different from the one given only lumped observations of $a^\top(x) h(x) + b(x)$. In fact, it has been shown that the former approach necessarily leads to a reduction in variance compared to the latter approach \cite[Theorem 1]{wang2020improving}, such that this allows us to determine tighter confidence bounds enabling improved search performance.


\section{Theoretical Analysis of CUQB}
\label{sec:theory}

In this section, we analyze the theoretical performance of CUQB (Algorithm \ref{alg:cuqb}). Our goal is to establish bounds on two important performance metrics: cumulative regret and cumulative constraint violation.
We then derive bounds on the cumulative performance metrics as a function of the total number of iterations $T$, which allow us to establish bounds on the rate of convergence to the optimal solution. Finally, we show that, with high probability (at least $1-\delta$ where $\delta \in (0,1)$ is a parameter chosen by the user), CUQB will declare infeasibility in a finite number of iterations if the original problem is infeasible. We note that the results presented in this section hold under the assumption that we identify a global maximizer of \eqref{eq:cuqb_subproblem} at every iteration. In Section \ref{sec:case-studies}, we empirically show that strong performance results can still be achieved using effective heuristic methods to solve this problem.

Note that the proofs of all results in this section are provided in Appendix \ref{appendix:A} to streamline the presentation.


\subsection{Performance metrics}

As in the standard bandit feedback setting, we want to minimize the gap between our queried point $f_0(x_{t+1})$ and the true solution $f_0^\star = \max_{x \in \mathcal{X}, f_i(x) \geq 0, \forall i \in \mathbb{N}_1^n} f_0(x)$, which is often referred to as the \textit{instantaneous regret}
\begin{align} \label{eq:instant-regret}
    r_{t+1} = f_0^\star - f_0(x_{t+1}),
\end{align}
where $x_{t+1}$ is the point queried by Algorithm \ref{alg:cuqb} at step $t \geq 0$. In the presence of unknown constraints, we may sample infeasible points that have a larger value than $f_0^\star$ such that we further define the positive instantaneous regret as follows
\begin{align}
    r_{t+1}^+ = \left[ f_0^\star - f_0(x_{t+1}) \right]^+ = \max\{ f_0^\star - f_0(x_{t+1}), 0 \}.
\end{align}
Since we are also interested in finding points that satisfy the unknown constraints, we define the violation of constraint $i \in \mathbb{N}_1^n$ at step $t \geq 0$ as
\begin{align} \label{eq:instant-violation}
    v_{i, t+1} = \left[ -f_i(x_{t+1}) \right]^+ = \max\{ -f_i(x_{t+1}),0\} = -\min\{ f_i(x_{t+1}),0 \}.
\end{align}
Ideally, we would achieve zero regret and constraint violation in a single step; however, this is only possible when $h(x)$ is perfectly known. Instead, we will focus on analyzing their cumulative values as the algorithm progresses.

\begin{definition}[Cumulative regret]
    Given a total of $T$ steps of CUQB, we define the cumulative regret and cumulative positive regret over $T$ to be
    \begin{subequations}        
    \begin{align}
        R_T = \textstyle\sum_{t=0}^{T-1} r_{t+1}, \\
        R_T^+ = \textstyle\sum_{t=0}^{T-1} r^+_{t+1}.
    \end{align}
    \end{subequations}
\end{definition}

\begin{definition}[Cumulative constraint violation]
    Given a total of $T$ steps of CUQB, we define the cumulative constraint violation of constraint $i$ over $T$ to be
    \begin{align}
        V_{i,T} = \textstyle\sum_{t=0}^T v_{i,t+1}, ~~~ \forall i \in \mathbb{N}_1^n.
    \end{align}
\end{definition}


\subsection{Required assumptions}

To derive bounds on $R_T$ and $V_{i,T}$ based on the sequence of samples selected by CUQB, we must first make some assumptions on our problem setup.

\begin{assumption} \label{assump:1}
The unknown function $h(x)$ is a sample of a GP with mean function $\mu_0(x)$ and covariance function $K_0(x,x')$ and its observations are corrupted by zero mean Gaussian noise. Without loss of generality, we will assume that the prior mean function is zero, i.e., $\mu_0(x) = 0$. 
\end{assumption}

Assumption \ref{assump:1} is required to ensure that the data collected on $h$ can be used to construct a reasonable surrogate model. As noted previously, the choice of the kernel function is very flexible, which enables such multi-output GP models to capture a large class of functions, as long as they satisfy some degree of smoothness/continuity. We will establish convergence for the most popular kernel choices later. 

\begin{assumption} \label{assump:2}
The feasible set $\mathcal{X}$ has finite cardinality $| \mathcal{X} | < \infty$.
\end{assumption}

Assumption \ref{assump:2} is made to simplify the construction of high-probability bounds on the predictions of the objective and constraint functions. This is a common assumption in the BO literature, as discussed in \cite{srinivas09}. There has been work on relaxing this assumption in the case of black-box functions, e.g., \cite{chowdhury2017kernelized,vakili2021information,wuthrich2021regret}, though how to effectively extend these bounds to the case of composite functions remains an open question that we plan to address in our future work. Although this may seem like a very restrictive assumption, it is only needed to simplify the theoretical analysis and thus does not prevent CUQB from being practically applied to continuous sets. Furthermore, we will show that the cumulative regret and constraint violation bounds exhibit a logarithmic dependence on the cardinality $| \mathcal{X} |$ such that we do not expect substantial performance losses in the case of continuous $\mathcal{X}$.

\begin{assumption} \label{assump:3}
The optimization problem \eqref{eq:grey-box-opt} is feasible and has an optimal solution $x^\star$ with corresponding optimal objective value $f^\star_0 = f_0(x^\star)$.
\end{assumption}

We initially make Assumption \ref{assump:3} to focus on the feasible case first. The behavior of CUQB when this assumption is relaxed is studied in Section \ref{subsec:infeas}. 

\begin{assumption} \label{assump:4}
The functions $\{ g_i(x,y) \}_{i \in \mathbb{N}_0^n}$ are Lipschitz continuous with respect to their second argument, i.e., 
\begin{align}
    | g_i(x, y) - g_i(x, z) | \leq L_i \| y - z \|_1, ~~\forall x \in \mathbb{R}^d,~ \forall y,z \in \mathbb{R}^m, ~\forall i \in \mathbb{N}_0^n,
\end{align}
where $L_i < \infty$ is a finite Lipschitz constant.
\end{assumption}

Assumption \ref{assump:4} ensures that the outer transformations of the unknown function $h(x)$ are well-behaved in the sense that the distance between the outputs are bounded some constant times the distance between the inputs. 

\subsection{Cumulative regret and violation probability bounds}

Our derived bounds depend on the \textit{maximum information gain} (MIG), which is a fundamental quantity in Bayesian experimental design \cite{chaloner1995bayesian} that provides a measure of informativeness of any finite set of sampling points. 

\begin{definition}[Maximum information gain]
Let $\mathcal{A} \subset \mathcal{X}$ denote any potential subset of points sampled from $\mathcal{X}$. The maximum information gain for the $j^\text{th}$ element of $h$ given $t$ noisy measurements is
\begin{align} \label{eq:information-gain}
\gamma_{j,t} &= \max_{ \mathcal{A} \subset \mathcal{X} : |\mathcal{A}| = t} \frac{1}{2} \log \det\left( I + \sigma^{-2} \boldsymbol{K}_{j,\mathcal{A}} \right),
\end{align}
where $\boldsymbol{K}_{j,\mathcal{A}} = [k_{j,0}(x, x')]_{x, x' \in \mathcal{A}}$ where $k_{j,0}$ is the $j^\text{th}$ diagonal element of $K_0$. 
\end{definition}

The first key step is to show that the lower and upper quantile bounds can be guaranteed to bound the true function with high probability for proper selection of the quantile parameters $\{ \alpha_{i,t} \}$. 

\begin{lemma} \label{lemma:1}
Let Assumptions \ref{assump:1} and \ref{assump:2} hold. Then,
\begin{align}
    \textnormal{Pr} \left\lbrace g_i(x, h(x)) \in [ l_{i,t}(x) , u_{i,t}(x) ], ~~\forall x \in \mathcal{X}, \forall t \geq 0, \forall i \in \mathbb{N}_0^n \mid \mathcal{D}_t \right\rbrace \geq 1 - \delta,
\end{align}
where $l_{i,t}(x)$ and $u_{i,t}(x)$ are given by \eqref{eq:quantile-bounds} with $\alpha_{i,t} = 6\delta/(\pi^2 (t+1)^2 (n+1) | \mathcal{X} |)$.
\end{lemma}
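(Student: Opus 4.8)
The plan is to reduce the joint containment statement (over all $x$, all $t$, and all $i$) to a union bound over per-term tail failures, where each per-term bound follows directly from the defining property of the quantile function, and then to check that the prescribed sequence $\{\alpha_{i,t}\}$ makes the union bound sum to exactly $\delta$.

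First I would establish the key distributional identity: conditioned on $\mathcal{D}_t$, the composite prediction $g_i(x,h(x))$ has the same law as the random variable $Y_{i,t}(x)$ defined in \eqref{eq:Yit}. This is immediate from Assumption \ref{assump:1} together with the whitening transformation \eqref{eq:whitening-transform}: since $h(x)\mid\mathcal{D}_t\sim\mathcal{N}(\mu_t(x),\Sigma_t(x))$ and $\mu_t(x)+C_t(x)Z\sim\mathcal{N}(\mu_t(x),\Sigma_t(x))$ for $Z\sim\mathcal{N}(0,I_m)$, applying the deterministic map $g_i(x,\cdot)$ to both shows that the conditional law of $g_i(x,h(x))$ equals the law of $Y_{i,t}(x)$. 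Next I would prove the per-term bound using the inf-characterization of the generalized inverse in \eqref{eq:quantile}, which gives $F_{Y}(Q_{Y}(p))\ge p$ and $F_{Y}(Q_{Y}(p)^-)\le p$ for every level $p$. Applying the second inequality with $p=\alpha_{i,t}/2$ yields $\mathrm{Pr}\{Y_{i,t}(x)<l_{i,t}(x)\}\le\alpha_{i,t}/2$, and applying the first with $p=1-\alpha_{i,t}/2$ yields $\mathrm{Pr}\{Y_{i,t}(x)>u_{i,t}(x)\}\le\alpha_{i,t}/2$. Since these two excluded events are disjoint, the conditional failure probability $\mathrm{Pr}\{g_i(x,h(x))\notin[l_{i,t}(x),u_{i,t}(x)]\mid\mathcal{D}_t\}$ is at most $\alpha_{i,t}$; because $\alpha_{i,t}$ is a deterministic constant, the tower property leaves the same unconditional bound.

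Finally I would apply a union bound over the finite set $\mathcal{X}$ (here Assumption \ref{assump:2} is essential), over the $n+1$ function indices, and over all $t\ge 0$, and verify the constant by summing the chosen sequence:
\begin{align*}
\sum_{i=0}^{n}\sum_{x\in\mathcal{X}}\sum_{t=0}^{\infty}\alpha_{i,t}
&=\sum_{i=0}^{n}\sum_{x\in\mathcal{X}}\sum_{t=0}^{\infty}\frac{6\delta}{\pi^2(t+1)^2(n+1)|\mathcal{X}|} \\
&=\frac{6\delta}{\pi^2}\sum_{t=0}^{\infty}\frac{1}{(t+1)^2}=\delta,
\end{align*}
using the Basel identity $\sum_{k\ge1}k^{-2}=\pi^2/6$. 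Taking complements gives the claimed lower bound of $1-\delta$ on the joint containment event.

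The main obstacle I anticipate is not any single computation but the measure-theoretic bookkeeping of a union bound across an adapted, time-growing filtration: the bounds $l_{i,t},u_{i,t}$ are $\mathcal{D}_t$-measurable and $\mathcal{D}_t$ itself depends on the realized GP sample, so the per-term statements live on different conditioning events. The clean way around this is precisely the step above — each per-term bound holds conditionally on $\mathcal{D}_t$ with a \emph{deterministic} constant, hence unconditionally by the tower property, after which the countable union bound applies directly on the common probability space. A secondary point requiring care is that the quantile bounds must be handled through the generalized inverse rather than CDF invertibility, since the composite law $Y_{i,t}(x)$ need not be continuous; relying on the inf-definition in \eqref{eq:quantile} avoids assuming differentiability of $g_i$ in its second argument.
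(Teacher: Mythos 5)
Your proof is correct and takes essentially the same route as the paper's: a union bound over $i \in \mathbb{N}_0^n$, $x \in \mathcal{X}$, and $t \geq 0$, with each per-term failure probability bounded by $\alpha_{i,t}$ via the defining property of the quantile function, and the series $\sum_{t \geq 0}(t+1)^{-2} = \pi^2/6$ making the total sum exactly $\delta$. If anything, your version is slightly more careful on two minor points the paper glosses over --- using the left-limit inequality $F_Y(Q_Y(p)^-) \leq p$ for the lower tail (the paper effectively asserts $F_{Y_{i,t}(x)}(l_{i,t}(x)) \leq \alpha_{i,t}/2$, which requires continuity of the CDF at that point) and de-conditioning the per-term bounds via the tower property --- but these are refinements of the same argument, not a different one.
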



Next, we can use Lemma \ref{lemma:1} to derive bounds on the instantaneous regret \eqref{eq:instant-regret} and constraint violation \eqref{eq:instant-violation} terms that are related to the width between the upper and lower quantiles for the given sample point $x_{t+1}$.

\begin{lemma} \label{lemma:2}
    Let Assumptions \ref{assump:1}, \ref{assump:2}, and \ref{assump:3} hold. Then, with probability at least $1-\delta$, feasibility is never declared in Line 3 of Algorithm \ref{alg:cuqb} and for all $t \geq 0$
    \begin{subequations}        
    \begin{align}
        r_{t+1} \leq r_{t+1}^+ &\leq w_{0,t}(x_{t+1}), \\
        v_{i,t+1} &\leq w_{i,t}(x_{t+1}), ~~~ \forall i \in \mathbb{N}_1^n, 
    \end{align}
    \end{subequations}
    where $w_{i,t}(x_{t+1}) = u_{i,t}(x_{t+1}) - l_{i,t}(x_{t+1})$ is the width of the $\alpha_{i,t}$-level quantile bound evaluated at the sample point $x_{t+1}$.
\end{lemma}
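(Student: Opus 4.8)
The plan is to work entirely on the high-probability event guaranteed by Lemma~\ref{lemma:1}. Let $E$ denote the event that $l_{i,t}(x) \le f_i(x) \le u_{i,t}(x)$ holds simultaneously for every $x \in \mathcal{X}$, every $t \ge 0$, and every $i \in \mathbb{N}_0^n$, where $f_i(x) = g_i(x,h(x))$. Because Lemma~\ref{lemma:1} is stated uniformly over $x$, $t$, and $i$ with the prescribed $\alpha_{i,t}$, a single invocation gives $\mathrm{Pr}(E) \ge 1-\delta$, and all three conclusions will then be established \emph{deterministically} on $E$. The structural fact I exploit throughout is that, under Assumption~\ref{assump:3}, the true optimizer $x^\star$ is feasible for the original problem, i.e.\ $f_i(x^\star) \ge 0$ for all $i \in \mathbb{N}_1^n$.

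First I would dispatch the infeasibility claim. The infeasibility test fires only if $\max_{x\in\mathcal X} u_{i,t}(x) < 0$ for some constraint $i$. On $E$ we have $u_{i,t}(x^\star) \ge f_i(x^\star) \ge 0$, so $\max_{x\in\mathcal X} u_{i,t}(x) \ge u_{i,t}(x^\star) \ge 0$ for every $i$ and every $t$; hence the test in Line~3 never triggers and the algorithm always proceeds to solve~\eqref{eq:cuqb_subproblem}. The same inequality certifies that the subproblem is feasible, since $x^\star$ satisfies $u_{i,t}(x^\star) \ge 0$ for all $i$, so $x_{t+1}$ is well defined.

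Next I would establish the regret bound. The inequality $r_{t+1} \le r_{t+1}^+$ is immediate from $a \le [a]^+$. For the upper bound I use an optimism argument: because $x^\star$ lies in the feasible set of~\eqref{eq:cuqb_subproblem} and $x_{t+1}$ maximizes $u_{0,t}$ over that set, $u_{0,t}(x_{t+1}) \ge u_{0,t}(x^\star) \ge f_0(x^\star) = f_0^\star$, where the last inequality is the upper sandwich bound on $E$. Combining with the lower sandwich bound $f_0(x_{t+1}) \ge l_{0,t}(x_{t+1})$ yields $r_{t+1} = f_0^\star - f_0(x_{t+1}) \le u_{0,t}(x_{t+1}) - l_{0,t}(x_{t+1}) = w_{0,t}(x_{t+1})$. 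Since $w_{0,t}(x_{t+1}) \ge 0$, taking positive parts gives $r_{t+1}^+ \le w_{0,t}(x_{t+1})$.

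Finally, the violation bound follows from the same two ingredients. The subproblem constraint forces $u_{i,t}(x_{t+1}) \ge 0$, and the lower sandwich bound gives $f_i(x_{t+1}) \ge l_{i,t}(x_{t+1})$. Hence $-f_i(x_{t+1}) \le -l_{i,t}(x_{t+1}) = w_{i,t}(x_{t+1}) - u_{i,t}(x_{t+1}) \le w_{i,t}(x_{t+1})$, and taking positive parts gives $v_{i,t+1} \le w_{i,t}(x_{t+1})$. I expect no serious analytic obstacle: once we condition on $E$, everything is a deterministic consequence of the sandwich property. The only point requiring care is that the uniformity in Lemma~\ref{lemma:1} lets a single event of probability $\ge 1-\delta$ cover all iterations at once, so the three bounds hold \emph{jointly} for all $t$ rather than merely marginally; and the conceptual crux is the optimism step $u_{0,t}(x_{t+1}) \ge f_0^\star$, which couples the maximization property of the subproblem solution to the feasibility of $x^\star$ in the relaxed feasible set.
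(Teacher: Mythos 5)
Your proposal is correct and follows essentially the same route as the paper's proof: condition on the uniform sandwich event from Lemma~\ref{lemma:1}, use feasibility of $x^\star$ (Assumption~\ref{assump:3}) to show $u_{i,t}(x^\star)\ge 0$ so that infeasibility is never declared and the subproblem admits $x^\star$, then combine the optimism step $u_{0,t}(x_{t+1})\ge u_{0,t}(x^\star)\ge f_0^\star$ with the lower sandwich bound at $x_{t+1}$, and for the violations use $u_{i,t}(x_{t+1})\ge 0$ together with $f_i(x_{t+1})\ge l_{i,t}(x_{t+1})$. The only difference is presentational: the paper threads the same inequalities through the subadditivity $[a+b]^+\le[a]^++[b]^+$, whereas you chain them directly and take positive parts at the end, which is equivalent.
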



Lemma \ref{lemma:2} shows that both the regret and violation must be less than the quantile width $w_{i,t}(x_{t+1})$ at every $t \geq 0$ of Algorithm \ref{alg:cuqb}. However, it is not obvious if $w_{i,t}(x_{t+1})$ is decaying at every iteration, which would imply that we are effectively ``closing in'' on the optimal point. Therefore, we next establish a relationship between $w_{i,t}(x_{t+1})$ and the standard deviations of the underlying GP predictions, which will be easier to show must decay as iterations increase.

\begin{lemma} \label{lemma:3}
    Let $Y = g(X)$ be a nonlinear transformation of a normal random vector $X \sim \mathcal{N}(\mu_X, \Sigma_X)$ where $g : \mathbb{R}^m \to \mathbb{R}$ is Lipschitz continuous with constant $L_g$. Then, 
    \begin{align}
        Q_Y(1 - \textstyle\frac{\alpha}{2}) - Q_Y(\textstyle\frac{\alpha}{2}) \leq 2L_g \beta^{1/2} \sum_{j=1}^m \sigma_{X,j},
    \end{align}
    where $\sigma_{X,j}$ is the $j^\text{th}$ diagonal element of $\Sigma_X$, $\alpha \in (0,1)$, and $\beta = 2\log(2m/\alpha)$.
\end{lemma}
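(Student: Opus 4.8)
The plan is to control the two quantiles of $Y$ \emph{separately} by sandwiching $Y$ between two shifted copies of a single nonnegative ``spread'' variable and then invoking a sharp Gaussian tail bound. Define $W = L_g \sum_{j=1}^m |X_j - \mu_{X,j}| = L_g\|X - \mu_X\|_1$. Since $g$ is Lipschitz with constant $L_g$ (in the $\ell_1$ norm, consistent with Assumption~\ref{assump:4}), every realization of $X$ obeys the deterministic sandwich
\[ g(\mu_X) - W \leq Y = g(X) \leq g(\mu_X) + W. \]
First I would use this to reduce the claim to a one-sided estimate on $W$: because $Y \le g(\mu_X) + W$ pointwise, $Y$ is stochastically dominated by $g(\mu_X)+W$, so $Q_Y(1-\tfrac\alpha2) \le g(\mu_X) + Q_W(1-\tfrac\alpha2)$; symmetrically, $Y \ge g(\mu_X) - W$ together with the reflection identity $Q_{g(\mu_X)-W}(\tfrac\alpha2) = g(\mu_X) - Q_W(1-\tfrac\alpha2)$ gives $Q_Y(\tfrac\alpha2) \ge g(\mu_X) - Q_W(1-\tfrac\alpha2)$. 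Subtracting yields $Q_Y(1-\tfrac\alpha2) - Q_Y(\tfrac\alpha2) \le 2\,Q_W(1-\tfrac\alpha2)$, so it remains only to prove the deterministic bound $Q_W(1-\tfrac\alpha2) \le L_g\beta^{1/2}\sum_j\sigma_{X,j}$.

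For that bound I would show $\Pr\{W > L_g\beta^{1/2}\sum_j \sigma_{X,j}\} \le \tfrac\alpha2$, which is precisely the statement that $L_g\beta^{1/2}\sum_j\sigma_{X,j}$ upper bounds the $(1-\tfrac\alpha2)$-quantile of $W$. Introduce the coordinatewise event $E = \{|X_j - \mu_{X,j}| \le \beta^{1/2}\sigma_{X,j}\text{ for all }j\}$; summing the defining inequalities over $j$ shows $E \subseteq \{\sum_j|X_j - \mu_{X,j}| \le \beta^{1/2}\sum_j\sigma_{X,j}\}$, hence $\{W > L_g\beta^{1/2}\sum_j\sigma_{X,j}\} \subseteq E^c$. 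A union bound then gives $\Pr\{E^c\} \le \sum_{j=1}^m \Pr\{|X_j - \mu_{X,j}| > \beta^{1/2}\sigma_{X,j}\} = m\,\Pr\{|Z| > \beta^{1/2}\}$, where $Z \sim \mathcal N(0,1)$, since each standardized coordinate $(X_j-\mu_{X,j})/\sigma_{X,j}$ is marginally standard normal; note the correlations encoded in $\Sigma_X$ are irrelevant because only marginals enter the union bound.

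The crux is the Gaussian tail estimate. Using the \emph{sharp} one-sided bound $\Pr\{Z > t\} \le \tfrac12 e^{-t^2/2}$, equivalently $\Pr\{|Z| > t\} \le e^{-t^2/2}$, and substituting $t = \beta^{1/2}$ with $\beta = 2\log(2m/\alpha)$ gives $\Pr\{|Z| > \beta^{1/2}\} \le e^{-\beta/2} = \alpha/(2m)$, so $\Pr\{E^c\} \le \alpha/2$ and the quantile bound follows. I expect the main obstacle to be bookkeeping the per-tail probability budget correctly: a naive ``two-sided confidence band of total probability $1-\alpha$'' argument only bounds the probability that $Y$ leaves the band and does \emph{not} separately control the $\tfrac\alpha2$- and $(1-\tfrac\alpha2)$-quantiles. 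The stochastic-dominance reduction to the single variable $W$ (which lets each quantile absorb a full $\tfrac\alpha2$ tail) together with the factor $\tfrac12$ in the Gaussian tail---which is exactly what makes $m\,e^{-\beta/2}$ land at $\alpha/2$ rather than $\alpha$---are the two points requiring care. A remaining technicality is the strict-versus-nonstrict inequality at the quantile boundary, which is immaterial here since $X$ is continuous and $W$ therefore has no atoms.
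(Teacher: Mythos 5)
Your proof is correct and takes essentially the same route as the paper's: per-coordinate Gaussian concentration at level $\beta^{1/2}$ with $\beta = 2\log(2m/\alpha)$ (giving each coordinate an $\alpha/(2m)$ failure budget), a union bound over the $m$ coordinates, the $\ell_1$-Lipschitz property to transfer the resulting band to $Y$, and comparison with the quantile definitions before subtracting. Your packaging through the spread variable $W = L_g\|X-\mu_X\|_1$ and stochastic dominance is just a tidier bookkeeping of the identical argument; the paper instead turns the same probability-$(1-\tfrac{\alpha}{2})$ event directly into the two one-sided quantile bounds on $Y$.
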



Now that we have a connection between quantiles of a transformed random variable and the standard deviation of the underlying Gaussian random vector in Lemma \ref{lemma:3}, it is useful to establish a bound on the cumulative sum of the posterior standard deviations in terms of the MIG. 

\begin{lemma}[Lemma 4, \cite{chowdhury2017kernelized}] \label{lemma:4}
    Let $x_1,\ldots, x_T$ be a sequence of query points selected by an algorithm. Then, the following scaling laws hold for common choices of kernels
    \begin{align}
        \textstyle\sum_{t=0}^{T-1} \sigma_{j,t}(x_{t+1}) \leq 2\sqrt{(T+2)\gamma_{j,T}}, ~~ \forall j \in \mathbb{N}_1^m.
    \end{align}
\end{lemma}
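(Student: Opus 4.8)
The plan is to reduce the cumulative sum of posterior standard deviations to the maximum information gain through three ingredients: the Cauchy--Schwarz inequality, an elementary logarithmic bound on each posterior variance, and the telescoping identity that expresses the realized (sequential) information gain as a sum of per-step log terms. First I would apply Cauchy--Schwarz to trade the sum of standard deviations for the square root of $T$ times the sum of variances,
\[
    \textstyle\sum_{t=0}^{T-1}\sigma_{j,t}(x_{t+1}) \leq \sqrt{T\,\textstyle\sum_{t=0}^{T-1}\sigma_{j,t}^2(x_{t+1})},
\]
which introduces the $\sqrt{T}$-type factor and reduces the task to controlling the cumulative posterior variance $\sum_t \sigma_{j,t}^2(x_{t+1})$.

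Next I would bound each variance term individually. Because the posterior variance never exceeds the prior variance, $\sigma_{j,t}^2(x_{t+1}) \leq k_{j,0}(x_{t+1},x_{t+1})$, so after the standard normalization the quantity $\sigma^{-2}\sigma_{j,t}^2(x_{t+1})$ lies in a bounded interval. On that interval the elementary inequality $s \leq C_\sigma \log(1+s)$ holds with $C_\sigma = \sigma^{-2}/\log(1+\sigma^{-2})$, giving
\[
    \sigma_{j,t}^2(x_{t+1}) \leq \sigma^2 C_\sigma \,\log\!\left(1+\sigma^{-2}\sigma_{j,t}^2(x_{t+1})\right).
\]
The crucial Gaussian-process fact I would then invoke is the telescoping identity obtained by conditioning the joint Gaussian sequentially,
\[
    \tfrac{1}{2}\log\det\!\left(I+\sigma^{-2}\boldsymbol{K}_{j,\mathcal{A}_T}\right) = \tfrac{1}{2}\textstyle\sum_{t=0}^{T-1}\log\!\left(1+\sigma^{-2}\sigma_{j,t}^2(x_{t+1})\right),
\]
where $\mathcal{A}_T = \{x_1,\ldots,x_T\}$ is the realized query set. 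Since $\mathcal{A}_T$ is one admissible set of cardinality $T$, the definition of the MIG in \eqref{eq:information-gain} bounds the left-hand side by $\gamma_{j,T}$. Summing the per-step variance bound and substituting this identity yields $\sum_t \sigma_{j,t}^2(x_{t+1}) \leq 2\sigma^2 C_\sigma\,\gamma_{j,T}$, and combining with the Cauchy--Schwarz step produces a bound of the stated form.

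The main obstacle I anticipate is matching constants precisely: the inequality $s \leq C_\sigma\log(1+s)$ is only valid on a bounded range, so one must first argue that the normalized posterior variances stay within it (using boundedness of the kernel diagonal), and then track how the prefactor $\sigma^2 C_\sigma = 1/\log(1+\sigma^{-2})$ and the Cauchy--Schwarz factor combine to yield exactly $2\sqrt{(T+2)\gamma_{j,T}}$ rather than a merely proportional bound. Since this lemma is quoted directly from \cite{chowdhury2017kernelized}, these constant-chasing details (including the additive $+2$) are handled there, while the entire conceptual content resides in the telescoping log-determinant identity and the reduction from posterior variance to information gain; note in particular that this chain is kernel-agnostic, and only the ultimate growth rate of $\gamma_{j,T}$ depends on the specific kernel.
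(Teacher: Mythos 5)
Your proposal is correct and is essentially the same argument as the source: the paper does not prove this lemma at all but quotes it verbatim from Chowdhury and Gopalan, whose proof is exactly your chain of Cauchy--Schwarz, an elementary bound of the form $s \leq C\log(1+s)$, and the telescoping identity $\sum_{t=0}^{T-1}\log\bigl(1+\sigma^{-2}\sigma_{j,t}^2(x_{t+1})\bigr) = \log\det\bigl(I+\sigma^{-2}\boldsymbol{K}_{j,\mathcal{A}_T}\bigr)$, capped by the MIG since the realized query set is admissible in \eqref{eq:information-gain}. On the one obstacle you flag: the additive $+2$ and the clean prefactor $2$ do not come from your $\sigma$-dependent constant $1/\log(1+\sigma^{-2})$ (which for large noise $\sigma$ would give only a proportional, not the stated, bound), but from running the identical argument with the regularizer $\lambda = 1+2/T$ in place of $\sigma^2$ together with $s \leq 2\log(1+s)$ on $[0,1]$, which yields $\sum_{t=0}^{T-1}\sigma_{j,t}^2(x_{t+1}) \leq 4(1+2/T)\gamma_{j,T}$ and hence exactly $2\sqrt{(T+2)\gamma_{j,T}}$ after the Cauchy--Schwarz step.
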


The relationship in Lemma \ref{lemma:4} is important because there are known iteration-dependent bounds on the MIG that have been established for the commonly used kernels. These bounds were derived in \cite{srinivas09}, which we repeat below for completeness.

\begin{lemma}[Theorem 5, \cite{srinivas09}] \label{lemma:5}
Let $\mathcal{X} \subset \Omega$ be a subset of a compact and convex set $\Omega$, $d \in \mathbb{N}$, and assume $k_{j,0}(x,x') \leq 1$. Then,
\begin{itemize}
\item Linear: $\gamma_{j,T} = \mathcal{O}(d \log T)$; 
\item Squared exponential: $\gamma_{j,T} = \mathcal{O}((\log T)^{d+1})$; 
\item Matern ($\nu > 1$): $\gamma_{j,T} = \mathcal{O}(T^{d(d+1)/(2\nu+d(d+1))}(\log T))$.
\end{itemize}
\end{lemma}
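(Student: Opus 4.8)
The plan is to bound $\gamma_{j,T}$ by reducing it to a statement about the spectral decay of the kernel $k_{j,0}$ and then treating each kernel family according to its eigenvalue behavior; this is precisely the route taken in \cite{srinivas09}, which relies on operator-spectrum bounds for the log-determinant.

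\textbf{Step 1 (spectral form of the information gain).} For any admissible set $\mathcal{A}$ with $|\mathcal{A}|=T$, let $\hat\lambda_1 \geq \cdots \geq \hat\lambda_T$ denote the eigenvalues of $\boldsymbol{K}_{j,\mathcal{A}}$. Then
\begin{align}
\tfrac{1}{2}\log\det\!\left( I + \sigma^{-2}\boldsymbol{K}_{j,\mathcal{A}} \right) = \tfrac{1}{2}\textstyle\sum_{t=1}^T \log\!\left( 1 + \sigma^{-2}\hat\lambda_t \right),
\end{align}
subject to the trace constraint $\sum_t \hat\lambda_t = \mathrm{tr}(\boldsymbol{K}_{j,\mathcal{A}}) \leq T$, which follows from the normalization $k_{j,0}(x,x)\leq 1$. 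This recasts the maximization over $\mathcal{A}$ as a maximization over admissible eigenvalue profiles.

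\textbf{Step 2 (head/tail bound via the Mercer spectrum).} By Mercer's theorem, write $k_{j,0}(x,x') = \sum_{s\geq 1}\lambda_s \phi_s(x)\phi_s(x')$ with $\lambda_1\geq\lambda_2\geq\cdots$. Controlling the empirical eigenvalues $\hat\lambda_t$ by the operator eigenvalues $\lambda_s$ and splitting the spectrum at a cutoff $T_*$ produces the general bound
\begin{align}
\gamma_{j,T} = \mathcal{O}\!\left( T_* \log T + \sigma^{-2} T \textstyle\sum_{s > T_*}\lambda_s \right),
\end{align}
where the first term collects the $T_*$ largest (head) eigenvalue contributions, each at most $\log(1+\sigma^{-2}T)$, and the second bounds the contribution of the tail mass uniformly over all admissible $\mathcal{A}$. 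The cutoff $T_*$ is then chosen to balance the two terms.

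\textbf{Step 3 (insert kernel-specific decay rates).} It remains to substitute the known spectral decays. The linear kernel has rank $d$, so the tail vanishes once $T_*=d$, giving $\gamma_{j,T}=\mathcal{O}(d\log T)$. For the squared-exponential kernel $\lambda_s = \mathcal{O}(e^{-c\,s^{1/d}})$ decays faster than any polynomial; taking $T_* \sim (\log T)^d$ renders the tail negligible, so that $\gamma_{j,T} \sim T_*\log T = \mathcal{O}((\log T)^{d+1})$. For the Mat\'ern kernel with $\nu>1$ the eigenvalues decay only polynomially, so $\sum_{s>T_*}\lambda_s$ decays as a negative power of $T_*$; inserting the tail estimate from \cite{srinivas09} and balancing $T_*\log T$ against the tail term at the optimal cutoff yields the stated exponent $\tfrac{d(d+1)}{2\nu+d(d+1)}$. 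The main obstacle is Step 2: rigorously passing from the finite kernel matrix $\boldsymbol{K}_{j,\mathcal{A}}$ to the operator spectrum $\{\lambda_s\}$ and controlling the tail \emph{uniformly} over all admissible sampling sets, since $\gamma_{j,T}$ is itself a maximum over $\mathcal{A}$. Once this reduction is secured, obtaining the precise eigenvalue asymptotics for each kernel and optimizing the cutoff $T_*$ is technical but routine.
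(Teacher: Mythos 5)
The paper itself offers no proof of this lemma: it is imported verbatim from \cite{srinivas09} (their Theorem 5), so your proposal has to be judged against the proof in that reference. At the level of architecture you have reconstructed it correctly: reduce $\gamma_{j,T}$ to a statement about the kernel operator spectrum, split the spectrum at a cutoff $T_*$, and insert kernel-specific eigenvalue decay. The linear and squared-exponential cases go through exactly as you describe.

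The genuine gap is in Step 2, and it is not the ``technical but routine'' matter you defer. The clean bound you assert, $\gamma_{j,T} = \mathcal{O}\bigl( T_* \log T + \sigma^{-2} T \sum_{s > T_*} \lambda_s \bigr)$, is \emph{not} what \cite{srinivas09} proves. Their spectral result (Theorem 8 there), obtained by discretizing the compact domain into $n_T \sim T^{d\tau}$ points to control the tail uniformly over sampling sets, carries a tail term inflated by $T^{\tau+1}$ rather than $T$, plus an additive discretization error $\mathcal{O}(T^{1-\tau/d})$. This matters decisively for the Mat\'ern case: with \emph{your} Step 2 bound and $\lambda_s \asymp s^{-(2\nu+d)/d}$, the tail sum is $\asymp T_*^{-2\nu/d}$, and balancing $T_* \log T$ against $T\,T_*^{-2\nu/d}$ gives $\gamma_{j,T} = \mathcal{O}\bigl(T^{d/(2\nu+d)}\log T\bigr)$ --- a strictly \emph{better} exponent than the stated $d(d+1)/(2\nu+d(d+1))$. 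The stated exponent arises only from the three-way balance of the head term, the inflated tail $\sigma^{-2}T^{\tau+1}T_*^{-2\nu/d}\log T$, and the discretization error $T^{1-\tau/d}$: writing $T_* = T^a$ and solving $a = \tau + 1 - 2\nu a/d$ together with $a = 1 - \tau/d$ yields precisely $a = d(d+1)/(2\nu + d(d+1))$. So your Step 3 claim that balancing your Step 2 bound ``yields the stated exponent'' is arithmetically false; Steps 2 and 3 are mutually inconsistent. Moreover, the clean bound you assert in Step 2, while true, was only established much later \cite{vakili2021information} by genuinely different arguments, so it cannot be obtained by ``inserting the tail estimate from \cite{srinivas09}.'' To repair the outline you must either (i) replace Step 2 by the actual discretization-based bound of \cite{srinivas09} and redo the balance as above, or (ii) prove your clean Step 2 (the hard route) and then observe that the resulting stronger Mat\'ern bound implies the stated one a fortiori.
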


We are now in a position to state our main theorem, which establishes explicit bounds on the cumulative regret and constraint violation as a function of the total number of iterations $T$. 

\begin{theorem} \label{thm:2}
    Under Assumptions \ref{assump:1}, \ref{assump:2}, \ref{assump:3}, and \ref{assump:4}, we have, with probability at least $1 - \delta$, that the sample points of CUQB (Algorithm \ref{alg:cuqb}) satisfy
    \begin{subequations}        
    \begin{align}
        R_T \leq R_T^+ &\leq 4 L_{0} \beta^{1/2}_{T} \Psi_T = \tilde{\mathcal{O}}(\sqrt{\gamma^\textnormal{max}_{T} T}), \\
        V_{i,T} &\leq 4 L_{i} \beta^{1/2}_{T} \Psi_T = \tilde{\mathcal{O}}(\sqrt{\gamma^\textnormal{max}_{T} T}), ~~ \forall i \in \mathbb{N}_1^n,
    \end{align}
    \end{subequations}
    where $\Psi_T = \textstyle\sum_{j=1}^m \sqrt{(T+2)\gamma_{j,T}}$, $\beta_{t} = 2\log( m\pi^2t^2(n+1)|\mathcal{X}| / (3\delta) )$, and $\gamma^\textnormal{max}_{T} = \max_{j \in \mathbb{N}_1^m} \gamma_{j,T}$ is the worst-case (slowest converging) MIG for unknown function $h$. Here, the notation $\tilde{\mathcal{O}}$ refers to a variation of the standard order of magnitude that suppresses logarithmic factors. 
\end{theorem}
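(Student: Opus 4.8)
The plan is to chain together Lemmas 2 through 5 in sequence, since they have been constructed precisely so that their conclusions compose into the stated bounds. First I would condition on the high-probability event furnished by Lemma \ref{lemma:2}, which holds with probability at least $1-\delta$ and simultaneously guarantees that infeasibility is never falsely declared and that the instantaneous quantities are dominated by the quantile widths, i.e. $r_{t+1} \leq r_{t+1}^+ \leq w_{0,t}(x_{t+1})$ and $v_{i,t+1} \leq w_{i,t}(x_{t+1})$. Summing these inequalities over $t \in \{0,\ldots,T-1\}$ immediately yields $R_T \leq R_T^+ \leq \sum_{t=0}^{T-1} w_{0,t}(x_{t+1})$ and $V_{i,T} \leq \sum_{t=0}^{T-1} w_{i,t}(x_{t+1})$, reducing the theorem to bounding the cumulative quantile widths.

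The second step controls each width via Lemma \ref{lemma:3}. Since $Y_{i,t}(x_{t+1}) = g_i(x_{t+1}, \mu_t(x_{t+1}) + C_t(x_{t+1})Z)$ is a Lipschitz transformation (with constant $L_i$, by Assumption \ref{assump:4}) of the Gaussian vector $h(x_{t+1}) \mid \mathcal{D}_t \sim \mathcal{N}(\mu_t(x_{t+1}), \Sigma_t(x_{t+1}))$, Lemma \ref{lemma:3} gives $w_{i,t}(x_{t+1}) \leq 2 L_i \beta_{i,t}^{1/2} \sum_{j=1}^m \sigma_{j,t}(x_{t+1})$, where the relevant parameter is $\beta_{i,t} = 2\log(2m/\alpha_{i,t})$. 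Substituting the choice $\alpha_{i,t} = 6\delta/(\pi^2(t+1)^2(n+1)|\mathcal{X}|)$ from Lemma \ref{lemma:1} and simplifying shows $\beta_{i,t} = 2\log\!\big(m\pi^2(t+1)^2(n+1)|\mathcal{X}|/(3\delta)\big) = \beta_{t+1}$, exactly the quantity in the theorem statement up to an index shift. Because $\beta_t$ is monotonically increasing in $t$, I can upper bound $\beta_{i,t} \leq \beta_T$ uniformly over $t \in \{0,\ldots,T-1\}$ and pull the factor $\beta_T^{1/2}$ outside the summation.

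The third step combines these with Lemma \ref{lemma:4}. Interchanging the order of summation gives $\sum_{t=0}^{T-1} w_{0,t}(x_{t+1}) \leq 2 L_0 \beta_T^{1/2} \sum_{j=1}^m \sum_{t=0}^{T-1} \sigma_{j,t}(x_{t+1})$, and the bound $\sum_{t=0}^{T-1} \sigma_{j,t}(x_{t+1}) \leq 2\sqrt{(T+2)\gamma_{j,T}}$ then produces exactly $R_T^+ \leq 4 L_0 \beta_T^{1/2} \Psi_T$ with $\Psi_T = \sum_{j=1}^m \sqrt{(T+2)\gamma_{j,T}}$; the identical computation with $L_i$ in place of $L_0$ delivers the violation bound. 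To extract the $\tilde{\mathcal{O}}(\sqrt{\gamma^\textnormal{max}_T T})$ rate I would bound $\Psi_T \leq m\sqrt{(T+2)\gamma^\textnormal{max}_T}$ and note that $\beta_T^{1/2} = \mathcal{O}(\sqrt{\log T})$ is logarithmic, so both the $\log$ factor and the constant $m$ are absorbed into $\tilde{\mathcal{O}}$.

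The individual steps are routine given the lemmas; the only real care is in the bookkeeping. I expect the main obstacle to be the parameter matching in the second step, namely verifying that the $\beta = 2\log(2m/\alpha)$ appearing in Lemma \ref{lemma:3}, instantiated at $\alpha = \alpha_{i,t}$, reproduces exactly the $\beta_t$ of the theorem, together with the monotonicity argument that licenses pulling $\beta_T^{1/2}$ out of the sum. One must also ensure the entire derivation is carried out conditional on the single good event of Lemma \ref{lemma:2}, so that the final ``probability at least $1-\delta$'' qualifier is inherited once rather than re-incurred at each inequality.
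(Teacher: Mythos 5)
Your proposal is correct and follows essentially the same route as the paper's proof: sum the width bounds from Lemma \ref{lemma:2}, convert widths to posterior standard deviations via Lemma \ref{lemma:3} (with the same $\alpha_{i,t}\mapsto\beta_{t+1}$ parameter matching, which you verify more explicitly than the paper does), pull out $\beta_T^{1/2}$ by monotonicity, and finish with Lemma \ref{lemma:4} and the definition of $\Psi_T$. The bookkeeping, including conditioning once on the single $1-\delta$ event, matches the paper's argument step for step.
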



We can combine the results shown in Theorem \ref{thm:2} with Lemma \ref{lemma:5} to derive kernel specific bounds on the growth of the cumulative regret and constraint violation as the number of iterations $T$ increases. Intuitively, one would expect that a \textit{sublinear} growth in $T$ would allow for convergence since this implies the regret/violation gap must be shrinking, i.e., $R_T/T \to 0$ and $V_{i,T}/T \to 0$ as $T \to \infty$. We more formally study the impact of Theorem \ref{thm:2} on convergence rate with and without noise next. 

\subsection{Convergence rate bounds}
\label{subsec:convergence-rate-bounds}

With slight abuse of notation, we let $r(x_t) = f_0^\star - f_0(x_t)$ and $v_{i}(x_t)$ denote the instantaneous regret and constraint violation as a function of $x_t$ (as opposed to being indexed by $t$). We can then establish the following bounds on the rate of convergence of CUQB to the optimal solution.

\begin{theorem} \label{thm:3}
    Let Assumptions \ref{assump:1}, \ref{assump:2}, \ref{assump:3}, and \ref{assump:4} hold and $\mathcal{L} = \sum_{i=0}^n L_i$. Then, with probability at least $1 - \delta$, there must exist a point $\tilde{x}_T \in \{ x_1,\ldots,x_T \}$ contained in the sequence generated by CUQB such that the following inequalities hold
    \begin{subequations}        
    \begin{align}
        f_0^\star - f_0(\tilde{x}_T) &\leq \frac{4\mathcal{L}\beta_{T}^{1/2} \Psi_T}{T} = \tilde{\mathcal{O}}\left( \sqrt{\frac{\gamma^\textnormal{max}_{T}}{T}} \right), \\
        \left[ -f_i(\tilde{x}_T) \right]^+ &\leq \frac{4\mathcal{L}\beta_{T}^{1/2} \Psi_T}{T} = \tilde{\mathcal{O}}\left( \sqrt{\frac{\gamma^\textnormal{max}_{T}}{T}} \right), ~~ \forall i \in \mathbb{N}_1^n.
    \end{align}
    \end{subequations}
\end{theorem}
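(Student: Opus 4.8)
The plan is to derive Theorem~\ref{thm:3} directly from the cumulative bounds of Theorem~\ref{thm:2} through an averaging (pigeonhole) argument. The one genuinely important idea is that, to guarantee a \emph{single} point that is simultaneously near-optimal and near-feasible for every constraint, I must aggregate the objective regret and all constraint violations into one scalar before averaging.

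First I would condition on the high-probability event of Theorem~\ref{thm:2}, which holds with probability at least $1-\delta$; everything that follows is then deterministic on that event, so the final probability statement carries over unchanged. On this event, define the per-step aggregate
\begin{align*}
\Phi_{t+1} = r_{t+1}^+ + \sum_{i=1}^n v_{i,t+1},
\end{align*}
and sum over $t = 0,\ldots,T-1$. Since all violation terms are nonnegative, $\sum_{t=0}^{T-1} v_{i,t+1} \leq V_{i,T}$, so by Theorem~\ref{thm:2}
\begin{align*}
\sum_{t=0}^{T-1}\Phi_{t+1} \leq R_T^+ + \sum_{i=1}^n V_{i,T} \leq 4\Big(\sum_{i=0}^n L_i\Big)\beta_T^{1/2}\Psi_T = 4\mathcal{L}\beta_T^{1/2}\Psi_T.
\end{align*}

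Next I would apply the pigeonhole principle: a sum of $T$ nonnegative numbers bounded by $4\mathcal{L}\beta_T^{1/2}\Psi_T$ must contain a term no larger than its average, so there exists $t^\star \in \{0,\ldots,T-1\}$ with $\Phi_{t^\star+1} \leq 4\mathcal{L}\beta_T^{1/2}\Psi_T/T$. Setting $\tilde{x}_T = x_{t^\star+1}$ and using that every summand of $\Phi_{t^\star+1}$ is nonnegative, each individual term is bounded by the aggregate. The definitions $r_{t+1}^+ = [f_0^\star - f_0(x_{t+1})]^+ \geq f_0^\star - f_0(x_{t+1})$ and $v_{i,t+1} = [-f_i(x_{t+1})]^+$ then yield exactly the claimed inequalities, and the $\tilde{\mathcal{O}}$ rates follow from those in Theorem~\ref{thm:2} after dividing by $T$.

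The manipulations themselves are elementary; the only step I would handle with care is the simultaneity requirement, which is also where the naive approach fails. Averaging $R_T^+$ and each $V_{i,T}$ separately would produce up to $n+1$ different indices, none of which is guaranteed to control the others. Folding all $n+1$ metrics into the single aggregate $\Phi_{t+1}$ and exploiting nonnegativity is precisely what forces a common index $t^\star$ to bound every metric at once, and it is also why the Lipschitz constants enter as the sum $\mathcal{L} = \sum_{i=0}^n L_i$ rather than individually.
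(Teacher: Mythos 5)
Your proposal is correct and follows essentially the same route as the paper's proof: both aggregate the positive regret and all constraint violations into a single per-step sum, bound its total by $4\mathcal{L}\beta_T^{1/2}\Psi_T$ via Theorem \ref{thm:2}, apply the minimum-below-average (pigeonhole) argument to extract a common index, and then use nonnegativity of each summand to bound every metric individually at that point. The emphasis you place on needing a single aggregate to guarantee simultaneity is exactly the idea implicit in the paper's argument.
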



Theorem \ref{thm:3} provides explicit convergence rate bounds to the constrained optimal solution in terms of the MIG for general kernels. However, one important point is that the result holds for $\tilde{x}_T = x_{t^\star}$ with $t^\star = \argmin_{t \in \{ 1, \ldots, T \}} ( r^+_{t+1} + \sum_{i=1}^n v_{i,t+1} )$. In practice, we cannot identify $t^\star$ since it depends on $f_0^\star$, which is unknown. To address this challenge, we proposed a practical recommendation procedure in \eqref{eq:recommended-point}. This procedure is derived from the fact, in the absence of noise, we can identify the index that minimizes a penalized version of this sequence, i.e., 
\begin{align} \label{eq:identify-tstar}
    \argmin_{t \in \{ 1, \ldots, T \}} \left( r_t + \rho \sum_{i=1}^n v_{i,t} \right) = \argmax_{t \in \{1,\ldots,T \}} \left( f_{0}(x_t) - \rho \sum_{i=1}^n [-f_{i}(x_t)]^+ \right),
\end{align}
for any penalty factor $\rho > 0$ since $r_t = f_0^\star - f_0(x_t)$, so $f_0^\star$ appears as a constant that will not change the location of the minimum. The result in \eqref{eq:identify-tstar} can only be used when observations of $h$ are noise-free since, otherwise, we do not have direct access to $\{ f_{i}(x_t) \}_{i \in \mathbb{N}_0^n}$. Therefore, we would like to understand the impact of this recommendation procedure on convergence, which we do in the following theorem.

\begin{theorem} \label{thm:4}
    Let Assumptions \ref{assump:1}, \ref{assump:2}, \ref{assump:3}, and \ref{assump:4} hold and $\bar{\rho}$ be large enough such that $f_0(x) - \rho\sum_{i=1}^n [-f_i(x)]^+$ is an exact penalty function for \eqref{eq:grey-box-opt} for any $\rho \geq \bar{\rho}$. Then, the recommended point \eqref{eq:recommended-point}, $\tilde{x}_T^r$, by CUBQ (Algorithm \ref{alg:cuqb}) must satisfy
    \begin{align} \label{eq:recommend-converge}
        0 \leq f_0^\star - f_0(\tilde{x}_T^r) + \rho\sum_{i=1}^n [-f_i(\tilde{x}_T^r)]^+ \leq \frac{4 \mathcal{L}_r(\rho) \beta_T^{1/2} \Psi_T}{T},~~~ \forall \rho \in [\bar{\rho}, \infty),
    \end{align}
    with probability at least $1 - \delta$, where $\mathcal{L}_r(\rho) = L_0 + \rho\sum_{i=1}^n L_i$.
\end{theorem}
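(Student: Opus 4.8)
The plan is to sandwich the penalized true objective value at the recommended point between $f_0^\star$ from above (yielding the left inequality) and $f_0^\star$ minus a vanishing error term from below (yielding the right inequality). Throughout, introduce the exact penalty objective $P(x) = f_0(x) - \rho \sum_{i=1}^n [-f_i(x)]^+$ and recognize the bracketed quantity maximized in \eqref{eq:recommended-point} as its lower-quantile surrogate $\hat{P}_t(x_t) = l_{0,t-1}(x_t) - \rho \sum_{i=1}^n [-l_{i,t-1}(x_t)]^+$, so that $\tilde{x}_T^r = x_{t^\star}$ with $t^\star = \argmax_{t \in \mathbb{N}_1^T} \hat{P}_t(x_t)$. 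The left inequality is immediate and deterministic: since $\rho \geq \bar{\rho}$ makes $P$ an exact penalty function for \eqref{eq:grey-box-opt}, $\max_{x \in \mathcal{X}} P(x) = f_0^\star$, hence $P(\tilde{x}_T^r) \leq f_0^\star$, which rearranges to $0 \leq f_0^\star - f_0(\tilde{x}_T^r) + \rho \sum_{i=1}^n [-f_i(\tilde{x}_T^r)]^+$.

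For the right inequality I would work entirely on the probability-$(1-\delta)$ event of Lemma \ref{lemma:1}, under which $l_{i,t}(x) \leq f_i(x) \leq u_{i,t}(x)$ holds uniformly. Two consequences drive the argument. First, a pointwise \emph{under-estimation}: because $l_{0,t-1} \leq f_0$ and $[-l_{i,t-1}(x)]^+ \geq [-f_i(x)]^+$ (monotonicity of $[\cdot]^+$ applied to $-l_{i,t-1}(x) \geq -f_i(x)$), one gets $\hat{P}_t(x) \leq P(x)$ for every $x$, so in particular $P(\tilde{x}_T^r) \geq \hat{P}_{t^\star}(x_{t^\star})$. Second, an \emph{optimism} property: since $x^\star$ is feasible (Assumption \ref{assump:3}) and $u_{i,t-1}(x^\star) \geq f_i(x^\star) \geq 0$, the point $x^\star$ is feasible for the auxiliary problem \eqref{eq:cuqb_subproblem} at iteration $t-1$, forcing $u_{0,t-1}(x_t) \geq u_{0,t-1}(x^\star) \geq f_0^\star$ and $u_{i,t-1}(x_t) \geq 0$ for the selected $x_t$ (this also guarantees infeasibility is never declared, so all $T$ iterations execute, exactly as in Lemma \ref{lemma:2}). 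Substituting $l = u - w$, where $w_{i,t-1}(x_t)$ is the quantile width, then gives $\hat{P}_t(x_t) \geq f_0^\star - w_{0,t-1}(x_t) - \rho \sum_{i=1}^n w_{i,t-1}(x_t)$.

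The remaining steps mirror the cumulative-regret machinery. Since $\hat{P}_{t^\star}(x_{t^\star})$ is the maximum over $t \in \mathbb{N}_1^T$, it dominates the average, so $\hat{P}_{t^\star}(x_{t^\star}) \geq f_0^\star - \frac{1}{T}\sum_{t=1}^T ( w_{0,t-1}(x_t) + \rho \sum_{i=1}^n w_{i,t-1}(x_t) )$. I would bound each width by Lemma \ref{lemma:3}, obtaining $w_{i,t-1}(x_t) \leq 2 L_i \beta_T^{1/2} \sum_{j=1}^m \sigma_{j,t-1}(x_t)$ after using that $\beta_t$ is increasing in $t$ to replace it uniformly by $\beta_T$, and then sum over $t$ via Lemma \ref{lemma:4} to turn $\sum_{t=1}^T \sigma_{j,t-1}(x_t)$ into $2\sqrt{(T+2)\gamma_{j,T}}$, so each width-sum becomes $4 L_i \beta_T^{1/2} \Psi_T$. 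Collecting coefficients gives the per-iteration error $\frac{4\mathcal{L}_r(\rho)\beta_T^{1/2}\Psi_T}{T}$ with $\mathcal{L}_r(\rho) = L_0 + \rho \sum_{i=1}^n L_i$, and chaining $P(\tilde{x}_T^r) \geq \hat{P}_{t^\star}(x_{t^\star}) \geq f_0^\star - \frac{4\mathcal{L}_r(\rho)\beta_T^{1/2}\Psi_T}{T}$ completes the bound. I expect the main obstacle to be conceptual bookkeeping rather than hard estimation: one must verify that the single Lemma \ref{lemma:1} event simultaneously delivers the \emph{lower}-bound under-estimation $\hat{P}_t \leq P$ used for the recommendation \emph{and} the \emph{upper}-bound optimism $u_{0,t-1}(x_t) \geq f_0^\star$ used for the search, correctly track the index shift pairing each queried point $x_t$ with the bounds indexed by $t-1$, and carry the penalty weight $\rho$ consistently through both the exact-penalty identity and the width coefficients.
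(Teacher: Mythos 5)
Your proposal is correct and follows essentially the same route as the paper's proof: the paper defines the pessimistic penalty-based regret $\bar{r}_{EP}(x_t;\rho) = f_0^\star - l_{0,t-1}(x_t) + \rho\sum_{i=1}^n[-l_{i,t-1}(x_t)]^+$ (your $f_0^\star - \hat{P}_t(x_t)$), bounds it by the quantile widths exactly as in Lemma \ref{lemma:2}/Theorem \ref{thm:2}, and uses the min-over-the-sequence-is-at-most-the-average argument together with the exact-penalty property for the left inequality. Your write-up is in fact slightly more explicit than the paper's, since you spell out the optimism step ($u_{0,t-1}(x_t) \geq u_{0,t-1}(x^\star) \geq f_0^\star$ and $u_{i,t-1}(x_t) \geq 0$) that the paper compresses into the remark that Lemma \ref{lemma:2}'s bounds ``also hold for their pessimistic counterparts.''
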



It is interesting to note that the established bounds in Theorems \ref{thm:3} and \ref{thm:4} have a similar structure, with the main difference being the constant out front ($\mathcal{L}$ versus $\mathcal{L}_r(\rho)$). Theorem \ref{thm:3}, however, provides a stronger result in the sense that both regret and constraint violation can be individually bounded by a smaller constant. The main value of Theorem \ref{thm:4} is that it allows us to establish convergence of our recommended point, i.e., $\tilde{x}_T^r \to x^\star$ as long as the bound in \eqref{eq:recommend-converge} decays to 0 as $T \to \infty$. From Lemma \ref{lemma:5}, we see that this holds for the three most popular kernel types including linear, squared exponential, and Matern with $\nu > 1$. The convergence rates of CUQB for each of these cases are summarized in Table \ref{tab:kernel} assuming that the elements of $h$ are modeled independently with the same choice of kernel. 

\begin{table}[ht!]
    \centering
    \caption{Convergence rate for specific covariance kernel choices for the proposed CUQB method.}
    \begin{tabular}{|c|c|}
    \hline
        \textbf{Kernel} & \textbf{Convergence rate}  \\ \hline
        Linear & $\tilde{\mathcal{O}}( T^{-1/2} )$ \\ \hline
        Squared exponential & $\tilde{\mathcal{O}}( T^{-1/2} \log(T)^{(d+1)/2} )$ \\ \hline
        Matern, $\nu > 1$ & $\tilde{\mathcal{O}}( T^{-\nu/(2\nu+d(d+1))} \log(T)^{1/2} )$ \\ \hline
    \end{tabular}
    \label{tab:kernel}
\end{table}

Theorem \ref{thm:4} holds for any large enough $\rho \geq \bar{\rho}$ but does not discuss how to practically compute a $\rho$ value satisfying this requirement. It turns out that we can use the quantile bounds established in Lemma \ref{lemma:1} to compute such a $\rho$, which we summarize in the following corollary. 

\begin{corollary} \label{cor:1}
Let $\rho \geq 0$ be any scalar value that satisfies the following inequality
\begin{align} \label{eq:rho-tilde}
    \max_{x \in \mathcal{X}} \{ u_{0,0}(x) - \rho\textstyle\sum_{i=1}^n [-u_{i,0}(x)]^+ \} \leq l_{0,0}^\star,
\end{align}
where $l_{0,0}^\star = \max_{x \in \mathcal{X}, l_{i,0}(x) \geq 0, \forall i \in \mathbb{N}_1^n} l_{0,0}(x)$ is a lower bound on $f_0^\star$. Then, $\rho$ must satisfy the assumption in Theorem \ref{thm:4}, i.e., $\rho \geq \bar{\rho}$ with probability at least $1-\delta$.
\end{corollary}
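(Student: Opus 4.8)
The plan is to show that the computable condition \eqref{eq:rho-tilde} forces the true (unknown) $\ell_1$-penalized objective $P_\rho(x) = f_0(x) - \rho\sum_{i=1}^n [-f_i(x)]^+$ to have unconstrained optimal value exactly equal to $f_0^\star$, which is precisely the defining property of an exact penalty weight and therefore certifies $\rho \geq \bar{\rho}$ in the sense of Theorem \ref{thm:4}. Everything is carried out on the high-probability event of Lemma \ref{lemma:1} specialized to the index $t=0$, on which $l_{i,0}(x) \leq g_i(x,h(x)) = f_i(x) \leq u_{i,0}(x)$ holds simultaneously for all $x \in \mathcal{X}$ and all $i \in \mathbb{N}_0^n$. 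Since this event has probability at least $1-\delta$, the final claim inherits the same probability.

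First I would verify the auxiliary fact that $l_{0,0}^\star$ is a genuine lower bound on $f_0^\star$. Letting $\hat{x}$ attain the constrained maximum defining $l_{0,0}^\star$, its feasibility constraints $l_{i,0}(\hat{x}) \geq 0$ combined with $f_i(\hat{x}) \geq l_{i,0}(\hat{x})$ imply that $\hat{x}$ is feasible for the true problem \eqref{eq:grey-box-opt}; hence $f_0^\star \geq f_0(\hat{x}) \geq l_{0,0}(\hat{x}) = l_{0,0}^\star$.

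The central step is a pointwise domination of $P_\rho$ by the computable ``upper-bound penalty.'' Using $f_0(x) \leq u_{0,0}(x)$ together with the monotonicity of $[\cdot]^+$ applied to $f_i(x) \leq u_{i,0}(x)$ (which yields $[-f_i(x)]^+ \geq [-u_{i,0}(x)]^+$) and $\rho \geq 0$, I obtain $P_\rho(x) \leq u_{0,0}(x) - \rho\sum_{i=1}^n [-u_{i,0}(x)]^+$ for every $x \in \mathcal{X}$. Maximizing over $\mathcal{X}$ and invoking \eqref{eq:rho-tilde} and the lower bound from the previous step gives $\max_{x\in\mathcal{X}} P_\rho(x) \leq l_{0,0}^\star \leq f_0^\star$. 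The reverse inequality is immediate: by Assumption \ref{assump:3} the optimizer $x^\star$ is feasible, so its penalty term vanishes and $P_\rho(x^\star) = f_0^\star$, whence $\max_{x\in\mathcal{X}} P_\rho(x) \geq f_0^\star$. Combining the two bounds yields $\max_{x\in\mathcal{X}} P_\rho(x) = f_0^\star$, attained at $x^\star$, which is exactly the statement that $P_\rho$ is an exact penalty function for \eqref{eq:grey-box-opt}; therefore $\rho \geq \bar{\rho}$.

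The main obstacle (conceptually, since each individual step is short) is the direction-preserving translation from the unknown $f_i$ to the computable quantile bounds in the central step: the penalty term enters with a minus sign, so I must insert the \emph{upper} bound $u_{i,0}$ inside $[\cdot]^+$ and rely on the monotonicity of the positive-part operator to keep the inequality oriented correctly. Pairing the upper bound on the objective with the upper bound inside the subtracted penalty is what simultaneously renders the right-hand side computable and an over-estimate of $P_\rho$. A secondary point to confirm is that the notion of ``exact penalty function'' invoked in Theorem \ref{thm:4} is equivalent to $\max_{x\in\mathcal{X}} P_\rho(x) = f_0^\star$ with the maximum attained at a constrained optimizer, which is precisely what the argument above delivers.
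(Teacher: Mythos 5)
Your proposal is correct and follows essentially the same route as the paper's proof: work on the high-probability event of Lemma \ref{lemma:1} at $t=0$, bound each term of the penalty function ($f_0(x) \leq u_{0,0}(x)$, $[-f_i(x)]^+ \geq [-u_{i,0}(x)]^+$ by monotonicity of the positive part, and $f_0^\star \geq l_{0,0}^\star$ via containment of the pessimistic feasible set in the true one), and conclude that \eqref{eq:rho-tilde} is a sufficient condition for the exact penalty inequality $f_0(x) - \rho\sum_{i=1}^n [-f_i(x)]^+ \leq f_0^\star$ on all of $\mathcal{X}$. Your added observations (explicit verification that $l_{0,0}^\star \leq f_0^\star$ via a maximizer $\hat{x}$, and the trivial reverse inequality $P_\rho(x^\star) = f_0^\star$ showing the penalized maximum equals $f_0^\star$) are harmless refinements of the same argument.
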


The inequality \eqref{eq:rho-tilde} is defined completely in terms of the initial quantile bounds (at iteration $t=0$), which can be efficiently evaluated and optimized, as discussed in Section \ref{sec:cuqb}. As such, we can practically find $\rho$ using \eqref{eq:rho-tilde} before selecting any new samples using Algorithm \ref{alg:cuqb}. In particular, one can start from an initial guess of $\rho$ and keep iteratively increasing it until \eqref{eq:rho-tilde} is satisfied. Under Assumptions \ref{assump:1}-\ref{assump:4}, a finite value for $\rho$ must exist that satisfies \eqref{eq:rho-tilde}.
Note that the result holds at every iteration and so a new $\rho$ can be computed as the algorithm proceeds at additional cost, though there is little benefit to updating this choice once a sufficiently large value has been found. Furthermore, we have found that a fixing $\rho$ to be a large value (on the order of $10^5$) is often sufficient in practice as long as the objective and constraint functions are reasonably scaled. 










\subsection{Infeasibility declaration}
\label{subsec:infeas}

Theorems \ref{thm:3} and \ref{thm:4} assume the original problem is feasible; however, this may not always hold in practice and thus we have incorporated an infeasibility detection scheme in Algorithm \ref{alg:cuqb}, similar to \cite{xu2022constrained}. Lemma \ref{lemma:2} showed that, given the problem is feasible, infeasibility will not be declared with high probability. It turns out that we can further show that this step will be triggered with high probability if the original problem is infeasible as shown next. 

\begin{theorem} \label{thm:5}
    Let Assumptions \ref{assump:1}, \ref{assump:2}, and \ref{assump:4} hold, assume the original problem \eqref{eq:grey-box-opt} is infeasible, and that $\gamma_T^\text{max} = o(\log(T)/T)$ (i.e., $\lim_{T \to \infty} \gamma_T^\text{max}\log(T) / T = 0$). Then, given a desired confidence level $\delta \in (0,1)$, the CUQB method (Algorithm \ref{alg:cuqb}) will declare infeasibility in Line 3 within a finite number of iterations
    \begin{align}
        \overline{T} = \min_{T \geq 1} \left\lbrace T : \sqrt{\frac{\log(T) \gamma_{T}^\text{max}}{T}} < C \epsilon \right\rbrace,
    \end{align}
    with probability at least $1 - \delta$, where $\epsilon = -\max_{x \in \mathcal{X}} f_i(x) > 0$ for some $i \in \mathbb{N}_1^n$ and $C$ is some positive constant.
\end{theorem}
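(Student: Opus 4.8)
The plan is to argue by contradiction: if infeasibility were never declared, then the cumulative width of the quantile bounds for the violated constraint, evaluated at the \emph{sampled} points, would have to grow at least linearly in $T$, which contradicts the sublinear bound implied by Lemmas~\ref{lemma:3} and~\ref{lemma:4}. First I would condition on the high-probability event of Lemma~\ref{lemma:1}, which holds with probability at least $1-\delta$ and guarantees $f_i(x) = g_i(x,h(x)) \in [l_{i,t}(x), u_{i,t}(x)]$ for all $x \in \mathcal{X}$, $t \geq 0$, and $i \in \mathbb{N}_0^n$. Since the problem is infeasible, there is an index $i^\star \in \mathbb{N}_1^n$ with $\max_{x \in \mathcal{X}} f_{i^\star}(x) = -\epsilon < 0$, so that $f_{i^\star}(x) \leq -\epsilon$ uniformly on $\mathcal{X}$.

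Next, suppose for contradiction that infeasibility is not declared during the first $T$ iterations. Then Line~2 never triggers and the algorithm keeps producing query points, so the auxiliary problem \eqref{eq:cuqb_subproblem} is feasible at each step and its maximizer $x_{t+1}$ satisfies $u_{i^\star,t}(x_{t+1}) \geq 0$ for every $t \in \{0,\ldots,T-1\}$. Combining this with $l_{i^\star,t}(x_{t+1}) \leq f_{i^\star}(x_{t+1}) \leq -\epsilon$ from the conditioning event gives the per-iteration lower bound on the quantile width
\[
  w_{i^\star,t}(x_{t+1}) = u_{i^\star,t}(x_{t+1}) - l_{i^\star,t}(x_{t+1}) \geq 0 - (-\epsilon) = \epsilon,
\]
and summing over $t$ yields $\sum_{t=0}^{T-1} w_{i^\star,t}(x_{t+1}) \geq T\epsilon$.

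On the other hand, I would apply Lemma~\ref{lemma:3} pointwise at each $x_{t+1}$ (with $g = g_{i^\star}(x_{t+1},\cdot)$ and Lipschitz constant $L_{i^\star}$ from Assumption~\ref{assump:4}) to obtain $w_{i^\star,t}(x_{t+1}) \leq 2 L_{i^\star}\beta_t^{1/2}\sum_{j=1}^m \sigma_{j,t}(x_{t+1})$, use monotonicity of $\beta_t$ to replace $\beta_t^{1/2}$ by $\beta_T^{1/2}$, and invoke Lemma~\ref{lemma:4} coordinatewise to get $\sum_{t=0}^{T-1}\sigma_{j,t}(x_{t+1}) \leq 2\sqrt{(T+2)\gamma_{j,T}}$, so that $\sum_{t=0}^{T-1} w_{i^\star,t}(x_{t+1}) \leq 4 L_{i^\star}\beta_T^{1/2}\Psi_T$. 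The two bounds force $T\epsilon \leq 4 L_{i^\star}\beta_T^{1/2}\Psi_T$, i.e. $\epsilon \leq 4 L_{i^\star}\beta_T^{1/2}\Psi_T/T$. Using $\Psi_T \leq m\sqrt{(T+2)\gamma_T^{\max}}$ and $\beta_T^{1/2} = \mathcal{O}(\sqrt{\log T})$, the right-hand side is bounded by $C'\sqrt{\log(T)\gamma_T^{\max}/T}$ for $T$ large, where $C'$ absorbs $L_{i^\star}$, $m$, and the logarithmic constants. Taking the contrapositive with $C = 1/C'$: whenever $\sqrt{\log(T)\gamma_T^{\max}/T} < C\epsilon$, infeasibility must already have been declared. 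Since $\gamma_T^{\max}\log(T)/T \to 0$ by hypothesis, the left-hand quantity tends to $0$, so the smallest such $T$, namely $\overline{T}$, is finite, which completes the argument.

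The step I expect to be the main obstacle is justifying that the $\epsilon$-width is carried by the \emph{sampled} point, i.e. that $u_{i^\star,t}(x_{t+1}) \geq 0$ holds at every iteration before infeasibility is declared. This is essential because Lemma~\ref{lemma:4} controls $\sum_t \sigma_{j,t}(x_{t+1})$ only at the queried locations; an argument based on $\argmax_{x} u_{i^\star,t}(x)$ would fail, since the posterior variance need not shrink where we never sample. The delicate point is that ``Line~2 does not trigger'' ($\max_x u_{i,t}(x)\geq 0$ for each $i$ separately) is strictly weaker than feasibility of \eqref{eq:cuqb_subproblem} (a single $x$ with $u_{i,t}(x)\geq 0$ for all $i$ simultaneously), so I would need to argue that the algorithm can only continue to generate queries when the subproblem admits such a common point, thereby tying the selected $x_{t+1}$ to the violated constraint $i^\star$ and closing the contradiction.
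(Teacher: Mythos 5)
Your proof follows essentially the same argument as the paper's: condition on the event of Lemma~\ref{lemma:1}, lower-bound the cumulative quantile width at the sampled points by $T\epsilon$ using feasibility of $x_{t+1}$ for \eqref{eq:cuqb_subproblem} together with $l_{i,t}(x_{t+1}) \leq f_i(x_{t+1}) \leq -\epsilon$, upper-bound the same sum by $4L_i\beta_T^{1/2}\Psi_T$ via Lemmas~\ref{lemma:3} and~\ref{lemma:4}, and let the hypothesis $\gamma_T^{\max}\log(T)/T \to 0$ force the contradiction, with $C$ the reciprocal of the absorbed constant. The subtlety you flag at the end---that Line~2 not triggering is strictly weaker than feasibility of \eqref{eq:cuqb_subproblem}---is genuine, but the paper's own proof makes exactly the same implicit identification (it asserts $u_{i,t}(x_{t+1}) \geq 0$ directly from ``infeasibility has yet to be declared''), so your treatment matches the published argument.
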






\section{Computational Setup}
\label{sec:comp-setup}

\subsection{Test problems}

We consider a total of 22 test problems that have been aggregated from the optimization and control literature. The problems have different number of input dimensions (ranging from 2 to 10) with bound constraints, different number of internal black-box functions (ranging from 1 to 24), and different number of constraints (ranging from 0 to 3). The test problems are distributed into two major groups: (i) synthetic problems and (ii) realistic application problems. The synthetic problems can be further divided into a set of 10 unconstrained and 10 constrained problems that are summarized in Tables \ref{tab:unconstrained_testproblems} and \ref{tab:constrained_testproblems}, respectively. Note that these synthetic problems have been modified from the original form to exhibit some degree of composite (grey-box) structure by splitting up the objective and/or constraint functions into known and unknown components. The full problem descriptions can be found in Appendices \ref{appendix:B} and \ref{appendix:C}. The realistic application problems include an environmental 
model calibration problem and a real-time reactor optimization problem. The first problem is meant to demonstrate that CUQB can scale to a problem with a large number of black-box functions. The second problem is meant to show how CUQB can be effective in the presence of measurement noise. 

\begin{table}[ht!]
\caption{Summary of the unconstrained set of synthetic test problems with input dimension $d$ and number of black-box functions $m$. The detailed formulation and solution of each problem is provided in Appendix \ref{appendix:B}.}
\begin{center}
\begin{tabular}{l c c c c l} \hline
Name &$d$ &$m$ &Appendix &Reference \\
\hline
Booth &2 &1 &\ref{subsec:booth} &\cite{subotic2011different} \\
Wolfe &3 &1 &\ref{subsec:wolfe} &\cite{jamil2013literature} \\
Rastrigin &3 &2 &\ref{subsec:rastrigin} &\cite{rastrigin1974systems} \\
Colville &4 &1 &\ref{subsec:colville} &\cite{rahnamayan2007novel} \\
Friedman &5 &1 &\ref{subsec:friedman} &\cite{gramacy2012cases} \\
Dolan &5 &2 &\ref{subsec:dolan} &\cite{jamil2013literature} \\
Rosenbrock &6 &4 &\ref{subsec:rb} &\cite{rosenbrock60} \\
Zakharov &7 &1 &\ref{subsec:zakharov} &\cite{chelouah2000tabu} \\
Powell &8 &4 &\ref{subsec:powell} &\cite{laguna2005experimental} \\
Styblinski-Tang &9 &4 &\ref{subsec:st} &\cite{styblinski1990experiments} \\
\hline
\end{tabular}
\label{tab:unconstrained_testproblems}
\end{center}
\end{table}

\begin{table}[ht!]
\caption{Summary of the constrained set of synthetic test problems with input dimension $d$, number of black-box functions $m$, and number of nonlinear constraints $n$. The detailed formulation and solution of each problem is provided in Appendix \ref{appendix:C}.}
\begin{center}
\begin{tabular}{l c c c c c} \hline
Name &$d$ &$m$ &$n$ &Appendix &Reference \\
\hline
Bazaraa &2 &2 &2 &\ref{subsec:bazaraa} &\cite{bazaraa2013nonlinear} \\
Spring  &3 &2 &4 &\ref{subsec:spring} &\cite{nekoo2022search} \\
Ex314  &3 &2 &3 &\ref{subsec:ex314} &\cite{floudas2013handbook} \\
Rosen-Suzuki  &4 &2 &3 &\ref{subsec:rosen-suzuki} &\cite{hock80} \\
st\_bpv1  &4 &3 &4 &\ref{subsec:st-bpv1} &\cite{tawarmalani2013convexification} \\
Ex211  &5 &2 &1 &\ref{subsec:ex211} &\cite{floudas2013handbook} \\
Ex212  &6 &2 &2 &\ref{subsec:ex212} &\cite{floudas2013handbook} \\
g09  &7 &2 &4 &\ref{subsec:g09} &\cite{karaboga2011modified} \\
Ex724  &8 &3 &4 &\ref{subsec:ex724} &\cite{floudas2013handbook} \\
Ex216  &10 &4 &5 &\ref{subsec:ex216} &\cite{floudas2013handbook} \\
\hline
\end{tabular}
\label{tab:constrained_testproblems}
\end{center}
\end{table}






\subsection{Baseline methods for comparison}
\label{subsec:baseline}

We compare a collection of seven different solvers to the proposed CUQB method, which are summarized in detail below. These solvers represent a mixture of the top performing DFO solvers from the comprehensive study performed in \cite{rios13} and more recently published state-of-the-art constrained BO methods. 

\begin{enumerate}
    \item \textbf{SNOBFIT \cite{huyer2008snobfit}:} This is a deterministic global search DFO solver that uses a combination of search spacing partitioning and surrogate model building. It uses the global selection rule to split the search domain and builds a quadratic model around the current best iterate and linear models for all other evaluated points. We use the Python implementation of SNOBFIT that has been made available in the \texttt{scikit-quant} package \cite{scikit-quant}. 
    \vspace{1mm}
    \item \textbf{DIRECT \cite{jones1993}:} This is a deterministic global search DFO solver that systematically divides the search domain into smaller rectangles and eliminates regions that are unlikely to contain the global optimum. The elimination process is based on concept of Lipschitz continuity, which ensures that the function's behavior within a rectangle is predictable. We use the Python implementation of DIRECT in the \texttt{scipy} package \cite{2020SciPy-NMeth}.    
    \vspace{1mm}
    \item \textbf{BOBYQA \cite{powell2009bobyqa}:} This is a deterministic local search DFO solver that belongs to the class of trust region algorithms. It iteratively constructs local quadratic models  within the trust region to guide the search toward an optimum. We use the Python implementation of BOBYQA in the \texttt{Py-BOBYQA} package \cite{cartis2019improving}. 
    \vspace{1mm}
    \item \textbf{CMA-ES \cite{hansen2003reducing}:} This is a stochastic global search DFO solver that belongs to the class of evolutionary algorithms. It maintains a population of candidate solutions and iteratively updates the mean and covariance matrix of a multivariate normal distribution to generate new candidate solutions. The algorithm adaptively adjusts the covariance matrix based on the success of previous iterations, allowing it to explore the search space efficiently. We use the Python implementation of CMA-ES in the \texttt{pycma} package \cite{hansen2019pycma}.
    \vspace{1mm}
    \item \textbf{EIC \cite{gardner2014bayesian}:} Expected improvement with constraints (EIC) is a deterministic global search constrained DFO solver that belongs to the family of black-box BO methods. It modifies the popular EI acquisition function to account for black-box constraints. In the absence of constraints, it simply reduces to the standard EI acquisition function \cite{jones1998efficient}. One can interpret EIC as a special case of the COBALT acquisition function in \eqref{eq:cobalt} when $g_i(x, h(x)) = h_i(x)$ (i.e., fully black-box instead of grey-box) for all $i \in \mathbb{N}_0^n$. In this case, one can derive a closed-form expression for the expectation in \eqref{eq:cobalt}, as shown in \cite{gardner2014bayesian}. We implemented our own version of EIC in the \texttt{BoTorch} package \cite{balandat2020botorch}.
    \vspace{1mm}
    \item \textbf{EPBO \cite{lu2022no}:} Exact penalty Bayesian optimization (EPBO) is a deterministic global search constrained DFO solver that belongs to the family of black-box BO methods. It is an extension of the well-known upper confidence bound (UCB) acquisition function \cite{srinivas09} to handle black-box constraints in a rigorous manner. One can show that the proposed CUQB method (without the infeasibility detection scheme) is equivalent to EPBO when $g_i(x, h(x)) = h_i(x)$ (i.e., fully black-box instead of grey-box) for all $i \in \mathbb{N}_0^n$. We implemented EPBO using the \texttt{BoTorch} package \cite{balandat2020botorch}.
    \vspace{1mm}
    \item \textbf{EIC-CF \cite{paulson2022cobalt}:} EIC with composite functions (EIC-CF) is a deterministic global search constrained DFO solver that belongs to the family of grey-box BO methods. It is an extension of the black-box EIC method to account for the known (grey-box) composite structure present in \eqref{eq:grey-box-opt}. EIC-CF is closely related to the COBALT method developed by the authors; the main difference is how one goes about practically optimizing over \eqref{eq:cobalt}. EIC-CF directly optimizes over \eqref{eq:cobalt-approx} with the non-smooth indicator function replaced by a differentiable sigmoid approximation whereas COBALT employs several heuristics that we have found lead to significant practical improvements (see numerical experiments in \cite[Section 4]{paulson2022cobalt}). We opted to use the vanilla EIC-CF for a fairer comparison to CUQB that does not incorporate any additional heuristics to improve the optimization of the acqusition function. We implemented EIC-CF using the \texttt{BoTorch} package \cite{balandat2020botorch}. 
\end{enumerate}


SNOBFIT, DIRECT, BOBYQA, and CMA-ES are not directly applicable to problems with unknown (black- or grey-box) constraints. 
To make them applicable to problems of the form \eqref{eq:black-box}, we use a standard penalty method (see, e.g., \cite{larson2019derivative}). In this approach, the following penalty function is defined
\begin{align}
    F(x) = f_0(x) - \rho \textstyle\sum_{i=1}^m [-f_i(x)]^+,
\end{align}
such that one can solve the box-constrained problem $\max_{x \in \mathcal{X}} F(x)$ in place of \eqref{eq:black-box}. We used a penalty weight factor of $\rho = 10^{5}$, which was found to provide a reasonable tradeoff between efficiency and constraint satisfaction. The other methods (including CUQB) handle constrained problems directly and so do not require further modifications. Unless otherwise stated, all solvers use their default settings. As discussed more in the next section, all constrained BO methods (EIC, EPBO, EIC-CF, and CUQB) were implemented by the authors in the same environment and therefore use the exact same GP model settings and same optimization strategy for tackling their respective acquisition subproblem. 

\subsection{Implementation details}

All BO methods assume that the black-box functions are independent and have a GP prior with a zero mean function and a 3/2 Matern covariance function with automatic relevance determination (ARD) \cite{williams2006gaussian}. The hyperparameters of the covariance function were estimated using the standard maximum likelihood estimation (MLE) approach in \cite[Chapter 2]{williams2006gaussian} using the \texttt{fit\_gpytorch\_mll} function in \texttt{BoTorch}.

The complete list of hyperparameters for the practical implementation of CUQB are summarized in Table \ref{tab:cuqb-param} that are divided into 4 major groups including termination. Since the focus of this work is on expensive functions, we impose an overall budget of only 100 function evaluations in every run and all algorithms are compared based on the quality of the recommended solution at every iteration. We briefly discuss the hyperparameters in the other three groups below.

We note that these practical choices are made to achieve a good tradeoff between computational cost of solving \eqref{eq:cuqb_subproblem} and high quality sampling performance. One could easily update these parameters to generate a more accurate solution to \eqref{eq:cuqb_subproblem} at additional cost. The theory in Section \ref{sec:theory} suggests that, the closer one gets to exactly solving \eqref{eq:cuqb_subproblem}, the better results one would observe in practice. This means that the ideal CUQB method in Algorithm \ref{alg:cuqb} should only further improve upon the performance results shown in Section \ref{sec:case-studies}.

A Python implementation of the proposed CUQB method, along with the other baseline solvers described previously, is available at \cite{lu23github}.

\addtolength{\tabcolsep}{+2pt}    
\begin{table}[ht!]
\caption{Default hyperparameter values for practical implementation of CUQB algorithm.}
\begin{center}
\begin{tabular}{p{0.25\linewidth} p{0.1\linewidth} p{0.35\linewidth} p{0.1\linewidth}} \hline
    &Parameter &Description &Value \\
\hline
Quantile approx. &$L$ &Number of randomly drawn Monte Carlo samples &50 \\
    &$\varepsilon$ &Regularization strength of soft sorting operator &0.1 \\
    &$\alpha$       &Probability level     &0.95 \\     
Optimization &$N_0$ &Number of initial random samples used to select multi-start candidates &8192 \\
&$N_\text{opt}$ &Number of multi-start initialization candidates for L-BFGS-B &3 \\
&$\rho$ &Penalty factor used for recommendation procedure and approximately solving \eqref{eq:cuqb_subproblem} &$10^5$ \\
Budget allocation &$T_0$ &Budget for initial randomly drawn samples &$2d+1$ \\
Termination &$T$ &Overall evaluation budget &100 \\
\hline
\end{tabular}
\label{tab:cuqb-param}
\end{center}
\end{table}
\addtolength{\tabcolsep}{-3pt}

\subsubsection{Quantile approximation hyperparameters}

As discussed in Section \ref{subsec:practicaloptcuqb}, the quantile functions $\{ u_{i,t}(x) \}_{i \in \mathbb{N}_0^n}$ must be estimated using some number of MC samples $L$ and sorting regularization parameter $\varepsilon$. Proposition \ref{prop:1} shows that the empirical estimate will converge to the true quantile function as $L \to \infty$ and $\varepsilon \to 0$; however, it is not clear what values will be work well enough in practice. 
Through numerical tests, we found that the results are fairly insensitive to values of $\varepsilon < 1$, which matches the observations from \cite{blondel2020fast}. This result was consistent across $L$ and $\alpha$ values, which led us to select a default value of $\varepsilon = 0.1$ that provides a nice tradeoff between accuracy and useful gradient information for the optimization process. In Figure \ref{fig:mc-prob}, we plot the empirical quantile function for a given test problem for a range of $L \in \{50, 100, 1000 \}$ and $\alpha \in \{ 0.75, 0.85, 0.95 \}$ with $\varepsilon=0.1$. Due to the randomness of the MC samples, we plot the resulting empirical quantile function for ten independent replicates. 
As expected, the variance in the estimator reduces as $L$ increases. Although we would like to select a large value for $L$, this increases the computational complexity of the optimization in \eqref{eq:cuqb_subproblem} such that it is practically useful to select as small an $L$ as possible without degrading the quality of the solution to \eqref{eq:cuqb_subproblem}. Even for $L=50$, the variability is reasonably small and does not change the location of the maximizer for this test problem. We found similar results across other problems, which led us to select 50 as the default value. 

Lemma \ref{lemma:1} provides a rigorous way to select $\alpha_{i,t}$; however, it tends to be conservative as it was derived through use of the union bound. A simpler choice that often works well in practice is to set $\alpha_{i,t} = \alpha$ to a constant $\alpha \in (0,1)$. The value $\alpha$ is a key factor impacting overall performance of CUQB. Empirically, if $\alpha$ is set to a small value, then there will be limited exploration and one is more likely to miss finding the globally optimal solution. On the other hand, if $\alpha$ is set too close to one, then the algorithm will overemphasize exploration, which may cause a reduction in the rate of convergence. 
Through the results of our performance tests in Section \ref{subsec-quantileapprox}, we found that setting $\alpha = 0.95$ works well across a variety of problems.

\begin{figure}[ht!] 
    \centering
    \begin{subfigure}[b]{0.32\textwidth}
        \centering
        \includegraphics[width=\textwidth]{./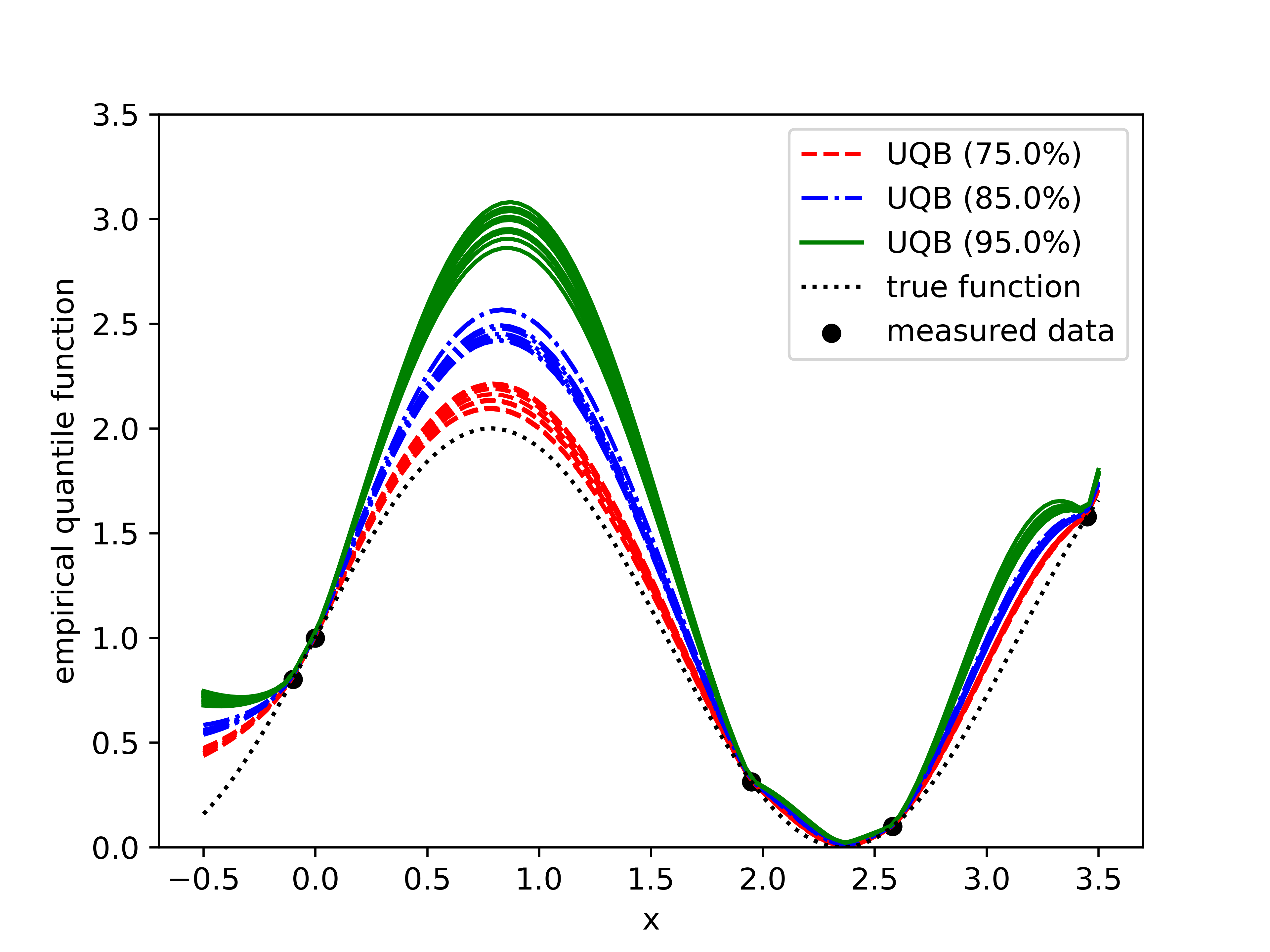}
        \caption{$L=50$}
    \end{subfigure}
    \hfill
    \begin{subfigure}[b]{0.32\textwidth}
        \centering
        \includegraphics[width=\textwidth]{./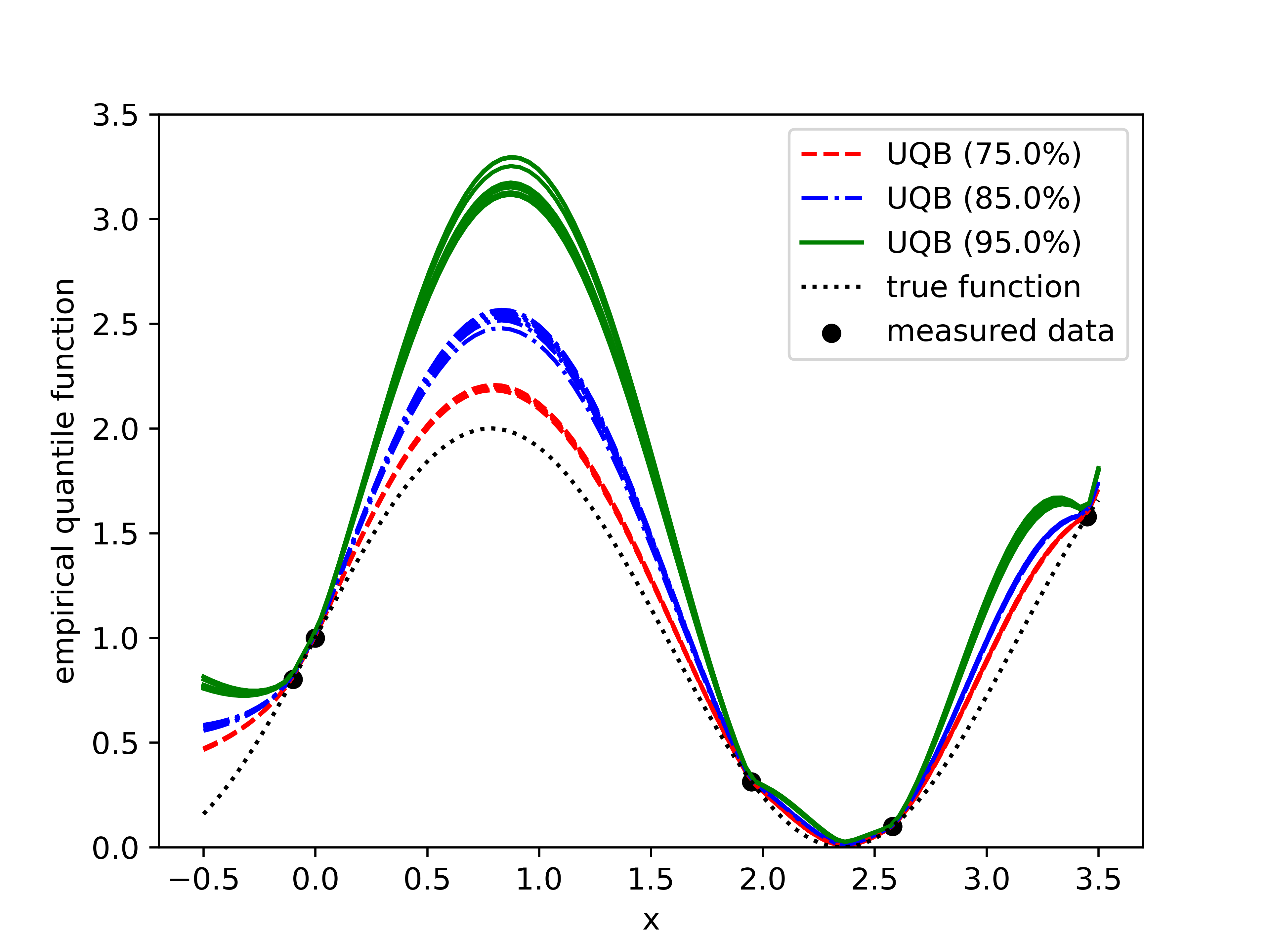}
        \caption{$L=100$}
    \end{subfigure}
    \hfill
    \begin{subfigure}[b]{0.32\textwidth}
        \centering
        \includegraphics[width=\textwidth]{./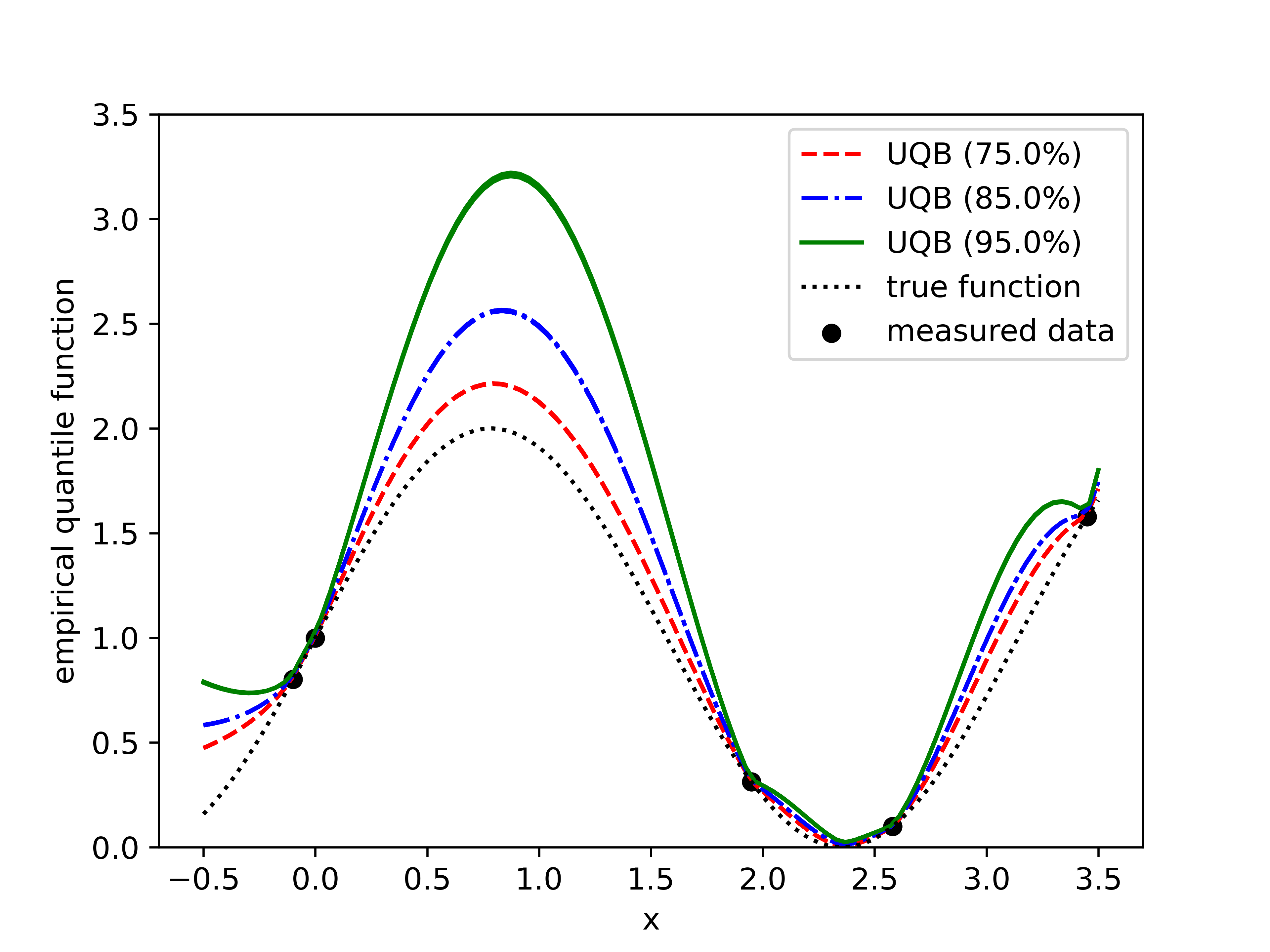}
        \caption{$L=1000$}
    \end{subfigure}    
    \vskip\baselineskip
    \vspace{-4mm}
    \caption{Empirical quantile function for a test problem estimated using $L = 50, 100, 1000$ Monte Carlo samples and $\alpha = 0.75, 0.85, 0.95$ probability levels for 10 independent replicates.}
\label{fig:mc-prob}
\end{figure}

\subsubsection{Optimization hyperparameters}

Although we can solve \eqref{eq:cuqb_subproblem} with a guaranteed global optimization method (such as branch and bound), it can be expensive in practice due to the generally nonconvex and multi-modal structure of the acquisition function. There are a large number of alternatives that one can pursue to tackle \eqref{eq:cuqb_subproblem}, with the simplest being to use DFO solvers that do not require gradient information. These are feasible for low-dimensional problems but do not scale to higher dimensions. A more scalable approach, used in \cite{balandat2020botorch}, is to rely on quasi-second order methods to provide strong convergence properties to a local solution. For simplicity, we focus on a penalty method for solving \eqref{eq:cuqb_subproblem} in practice (though this could easily be replaced by interior point or sequential quadratic programming methods)
\begin{align} \label{eq:penalty-subproblem}
    x_{t+1} \in \argmax_{x \in \mathcal{X}} a_t(x),
\end{align}
where $a_t(x) = u_{0,t}(x) - \rho \textstyle\sum_{i=1}^n [-u_{i,t}(x)]^+$.
As long as the upper quantile functions are monotonically non-increasing (i.e., $u_{i,t-1}(x) \geq u_{i,t}(x)$ for all $t \geq 0$), then \eqref{eq:penalty-subproblem} is a solution to the original problem \eqref{eq:cuqb_subproblem} for the $\rho$ derived in Corollary \ref{cor:1}. We can practically compute gradients of $a_t(x)$ since it is differentiable everywhere except the kink points in the non-smooth max operator $[-u_{i,t}(x)]^+$. As such, \eqref{eq:penalty-subproblem} can be solved to local optimality using the unconstrained L-BFGS-B solver \cite{zhu1997algorithm}, which is supported in \texttt{BoTorch} (thus directly enabling GPU-accelerated computations).

Due to the structure of the acquisition function $a_t(x)$, the initial conditions supplied to L-BFGS-B are very important since one may get stuck in highly suboptimal local solutions. To reduce this risk, it is common to employ multi-start optimization (i.e., start the solver from multiple initial conditions and select the best of the final converged solutions). We use the following effective heuristic developed in \cite{balandat2020botorch} to select a set of $N_\text{opt}$ initialization candidates:
\begin{enumerate}[1.]
    \item Sample $N_0$ points $\{ x_0^{(i)} \}_{i=1}^{N_0}$ from $\mathcal{X}$ using quasi-random Sobol sequences;
    \item Evaluate acquisition function at these candidate points $\{ v_t^{(i)} = a_t( x_0^{(i)} ) \}_{i=1}^{N_0}$;
    \item Let $\hat{\mu}$ and $\hat{\sigma}$ denote the empirical mean and standard deviation of the population $\{ v_t^{(i)} \}_{i=1}^{N_0}$, respectively. Sample $N_\text{opt}$ points from the set $\{ x_0^{(i)} \}_{i=1}^{N_0}$ where each point is selected with probability $p_i \propto \exp( \eta ( v_t^{(i)} - \hat{\mu}) / \hat{\sigma} )$ for some $\eta > 0$.
\end{enumerate}
This sampling procedure ensures that the initial conditions are selected to tradeoff exploration/exploitation based on the magnitude of $\eta$. This procedure converges to standard Sobol sampling as $\eta \to 0$ and corresponds to greedy sampling as $\eta \to \infty$. The latter approach is known to suffer from the clustering problem wherein all the high-valued points are near each other and thus converge to the same local solution. We therefore stick with the default value of $\eta = 1$. Interested readers are referred to \cite[Appendix E]{balandat2020botorch} for detailed comparison of this practical optimization approach to various alternative methods. 





\subsubsection{Budget allocation hyperparameters}

CUQB can be easily applied in the absence of initial data when an accurate GP prior is known. Since this is not the case in many real-world applications, we instead rely on the MLE training procedure discussed previously, which can be inaccurate when there is little-to-no initial data. Therefore, it is common in the BO literature to select a set of $T_0$ points uniformly at random to act as purely explorative samples in the early phase of the algorithm. There have been limited studies on the impact of $T_0$ on performance, though it is commonly suggested to use $d+1$ random points. We study the impact of $T_0$, along with the choice of $\alpha$, in Section \ref{subsec-quantileapprox} and find that a value of $2d+1$ provides a better overall balance between quick performance gains without getting stuck in local solutions. 

\subsubsection{Estimating average performance}

The performance of most solvers used in this study are affected by the selected set of starting points (e.g., $N_0$ initial samples for the BO-related methods). Therefore, each problem is solved for ten different randomly generated starting points. The same set of points are used for all considered solvers. We then compare the average behavior of each algorithm by the median recommended feasible objective value from the ten runs for each problem. 

\subsection{Performance metric}

To evaluate the performance of the different algorithms, we modify the so-called ``performance profiles'' suggested in \cite{more2009benchmarking} to work for constrained problems. Specifically, for a given tolerance $\tau \in [0,1]$ and starting point $x_0$, the following test is used to measure the increase in a penalized version of the objective function value:
\begin{align} \label{eq:perf-test}
    F_\omega - F(x_0) \geq (1 - \tau)( F_U - F(x_0) ),
\end{align}
where $F(x) = f_0(x) - \rho \sum_{i=1}^n [-f_i(x)]^+$ is the penalized objective function, $F(x_0)$ is the value of the penalized objective function at the starting point $x_0$, $F_\omega$ is the value of the penalized objective function at the solution reported by solver $\omega$, and $F_U$ is the largest penalized objective function obtained by any of the solvers in the set of interest within the budget of $T$ function evaluations. This condition implies that a problem is considered ``solved'' by $\omega$ if the median solution improved the starting point by at least a fraction of $(1-\tau)$ of the largest attained reduction.

For the set of test problems $\mathcal{K}$, the performance profile of any solver $\omega$ is defined as the fraction of problems that can be solved for a given tolerance $\tau$ within a given number of function evaluations $t$, i.e., 
\begin{align}
    \varrho_\omega(t) = \frac{\text{card}\{ \kappa \in \mathcal{K} : \pi_{\kappa, \omega} \leq t \}}{| \mathcal{K} |},
\end{align}
where $\pi_{\kappa, \omega}$ denotes the minimum number of function evaluations required by solver $\omega$ to satisfy \eqref{eq:perf-test} for problem $\kappa$.

\section{Results and Discussion}
\label{sec:case-studies}

\subsection{Choice of initial sampling and probability level parameter}
\label{subsec-quantileapprox}

The initial sample budget $T_0$ and the probability level $\alpha$ are key hyperparameters in CUQB. To illustrate their influence, we conduct numerical experiments by varying them over $T_0 \in \{ d+1, 2d+1, 3d+1 \}$ and $\alpha \in \{ 0.75, 0.85, 0.95 \}$. Figure \ref{fig:alpha-perf} shows the performance profiles for CUQB under these different settings for the set of 10 unconstrained problems (Table \ref{tab:unconstrained_testproblems}) with $\tau = 0.01$. The first major increase in performance happens around iteration 15, with $T_0 = d+1$ performing better than the alternatives that devote more of the initial budget to random samples. However, we see $T_0 = d+1$ has slightly worse performance in the later iterations, which is likely due to inaccurate estimates of the GP hyperparameters causing it to under-explore. Similarly, the lower $\alpha$ values of 0.75 and 0.85 show slightly worse performance after iteration 40 than 0.95 as a result of under-exploration (due to the quantile function not providing an accurate upper bound the true function). As such, the combination of $T_0 = 2d+1$ and $\alpha = 0.95$ appears to due to the best at providing balanced data for learning accurate covariance hyperparameters while not under- or over-exploring the design space. 
We emphasize that, under these settings, the median CUQB result was within 1\% of the best improvement for 9 out of 10 problems of varying size and difficulty within only 40 function evaluations.


\begin{figure}[ht!] 
    \centering
    \begin{subfigure}[b]{0.49\textwidth}
        \centering
        \includegraphics[width=\textwidth]{./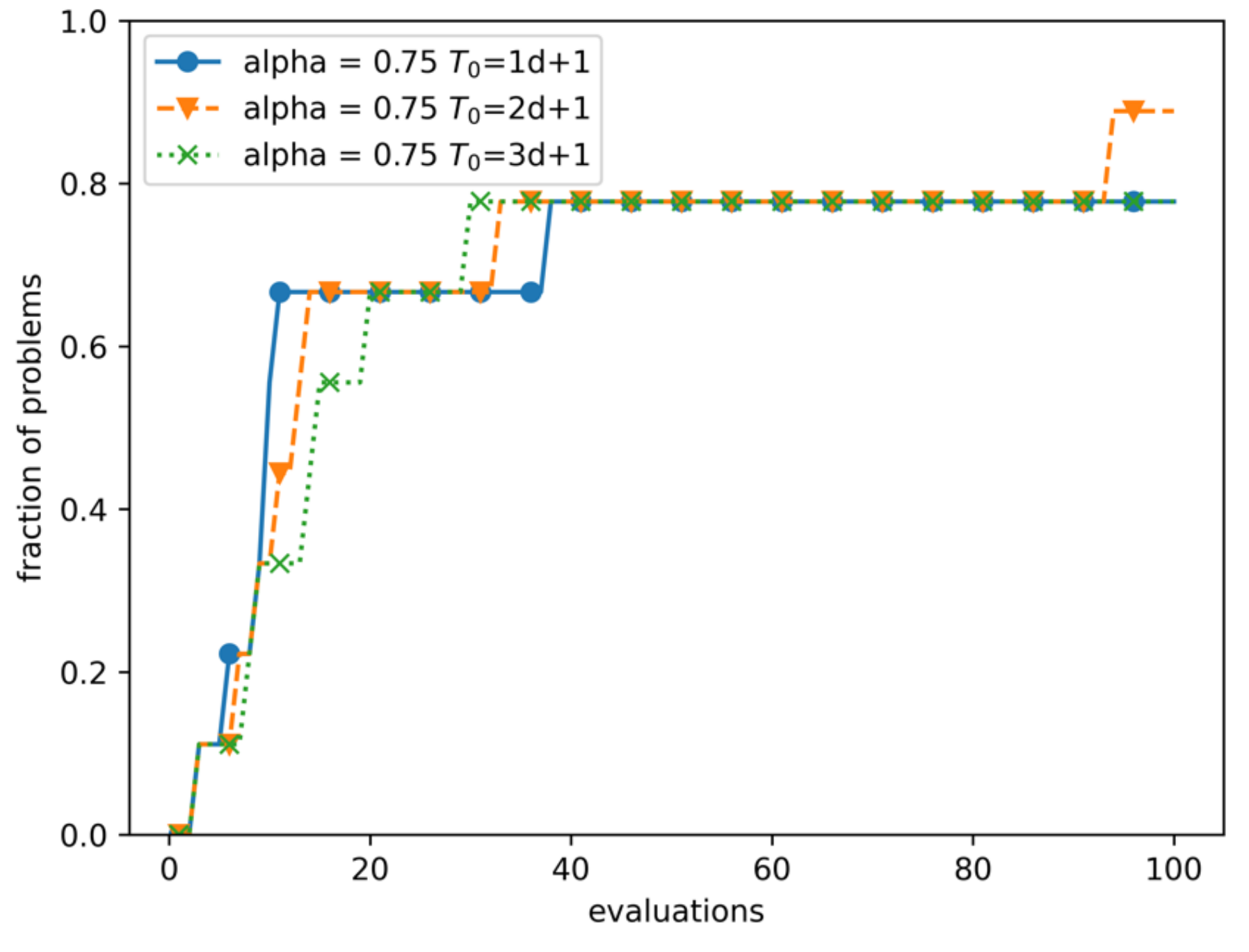}
        \caption{$\alpha = 0.75$}
    \end{subfigure}
    \hfill
    \begin{subfigure}[b]{0.49\textwidth}
        \centering
        \includegraphics[width=\textwidth]{./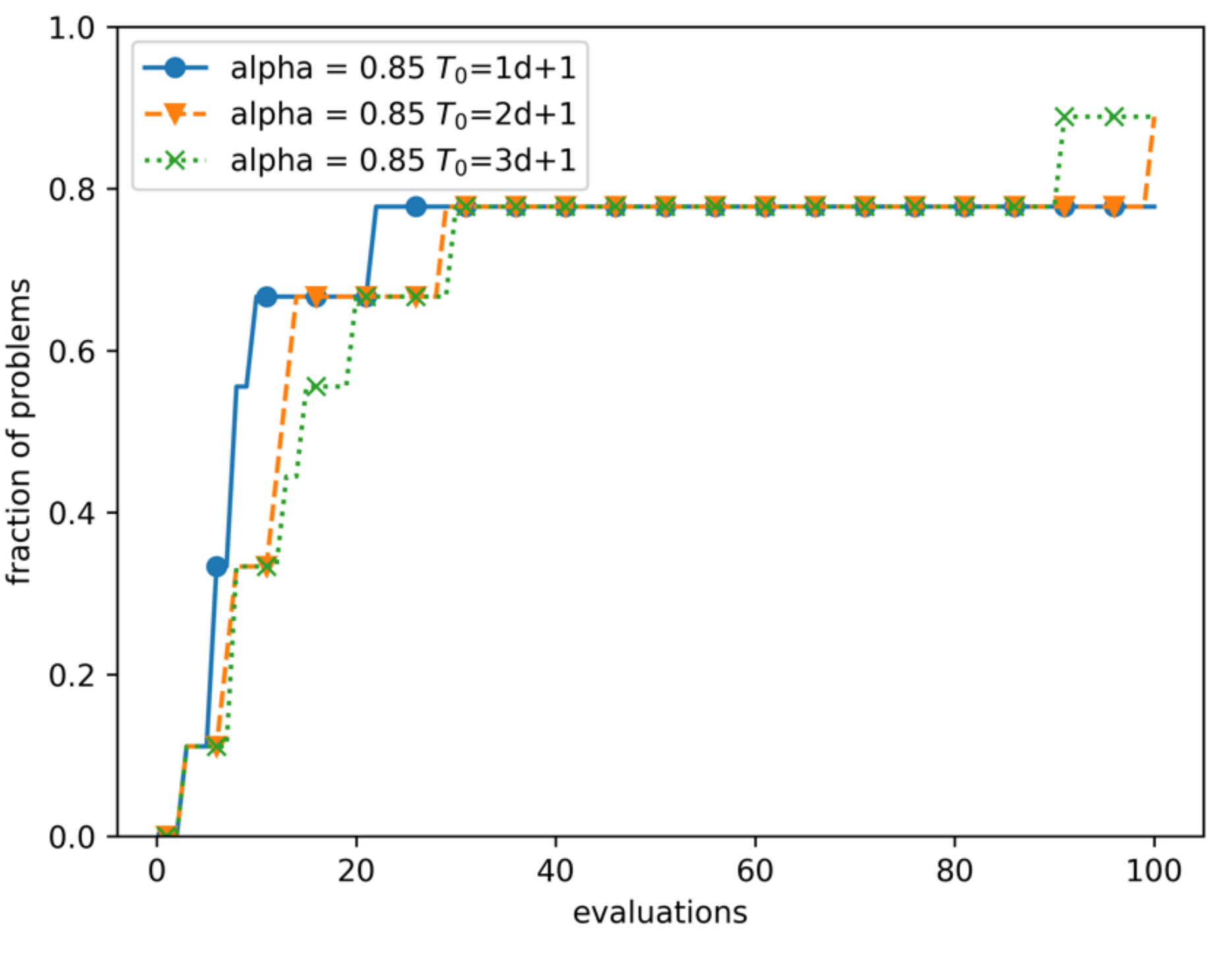}
        \caption{$\alpha = 0.85$}
    \end{subfigure}
    \hfill
    \begin{subfigure}[b]{0.49\textwidth}
        \centering
        \includegraphics[width=\textwidth]{./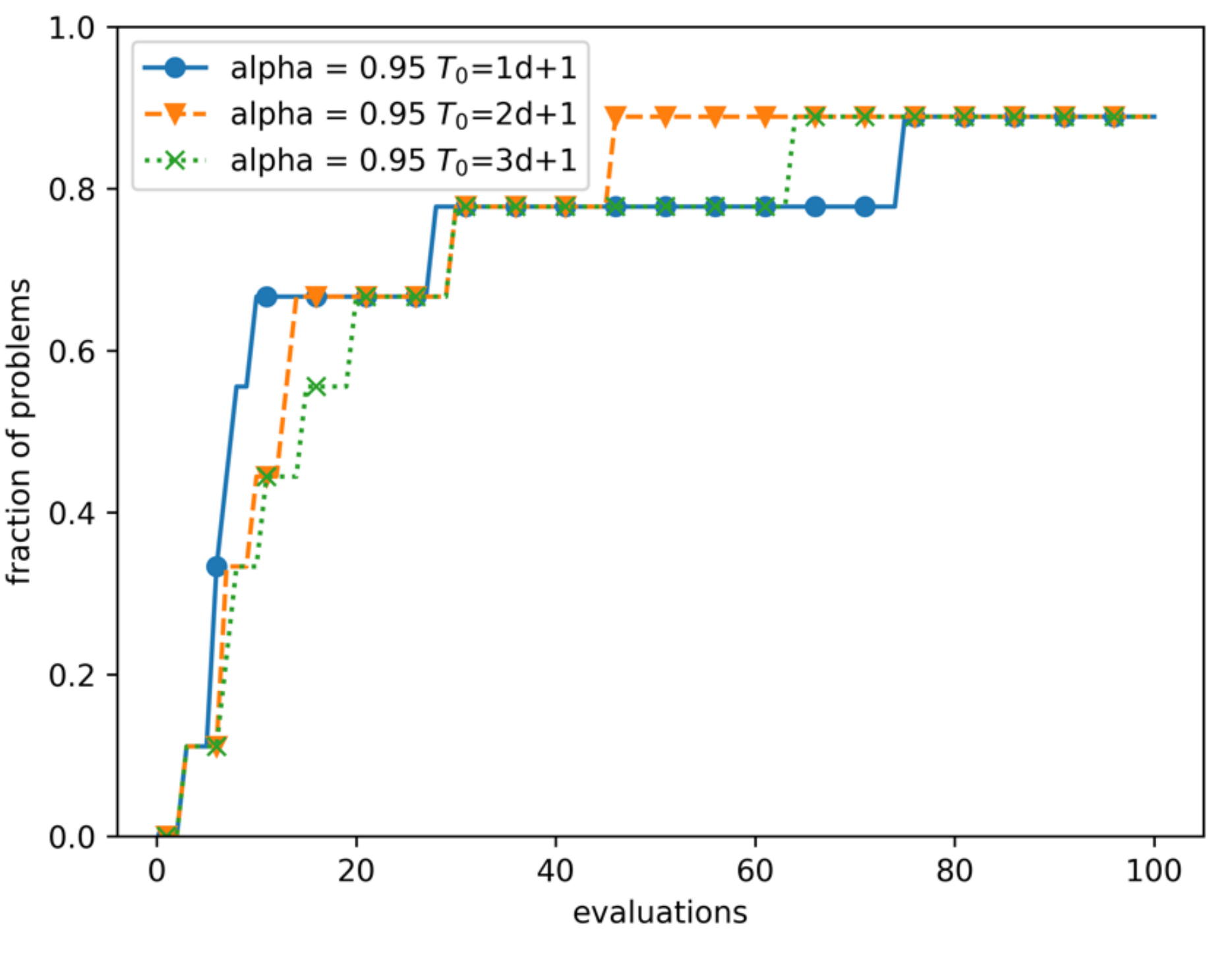}
        \caption{$\alpha = 0.95$}
    \end{subfigure}    
    \vskip\baselineskip
    \vspace{-4mm}
    \caption{Performance profiles comparing CUQB under different initial sampling and probability level settings over the set of unconstrained test problems for $\tau = 0.01$.}
\label{fig:alpha-perf}
\end{figure}

\subsection{Computational results for unconstrained problems}

Figure \ref{fig:unconstrained-perf} compares the performance of CUQB to the solvers described in Section \ref{subsec:baseline} for the set of unconstrained test problems for $\tau = 0.01$. We see that CUQB substantially outperforms all tested solvers, as it is the only solver capable of solving $>70$\% of the test problems (according to \eqref{eq:perf-test}). All BO-related methods (CUQB, EIC-CF, EIC, EPBO) perform similarly for the first 25 iterations and clearly outperform the non-BO solvers, solving $70$\% of the test problems within around 15 iterations. SNOBFIT and BOBYQA, which are both model-based DFO solvers, end up solving $50$\% and $60$\% of test problems, respectively, compared to only $40$\% for DIRECT and CMA-ES. This is not unexpected since model-based DFO solvers are known to be quite efficient for small- to medium-sized problems. It is interesting to note that the BO methods show faster convergence than deterministic model-based methods likely due to a better balance between exploration and exploitation. However, EIC-CF, EIC, and EPBO are unable to solve $30\%$ of the test problems whereas CQUB ends up solving $100\%$ of them. EIC and EPBO do not take advantage of prior knowledge and so their performance is fundamentally limited especially on higher-dimensional problems. Although EIC-CF does account for prior knowledge, the acquisition surface is much more challenging to optimize, which leads to poor performance in the later iterations. CQUB is able to overcome both of these limitations, making it the best solver among the tested solvers. 


\begin{figure}[tb!]
\centering
\includegraphics[width=0.8\linewidth]{./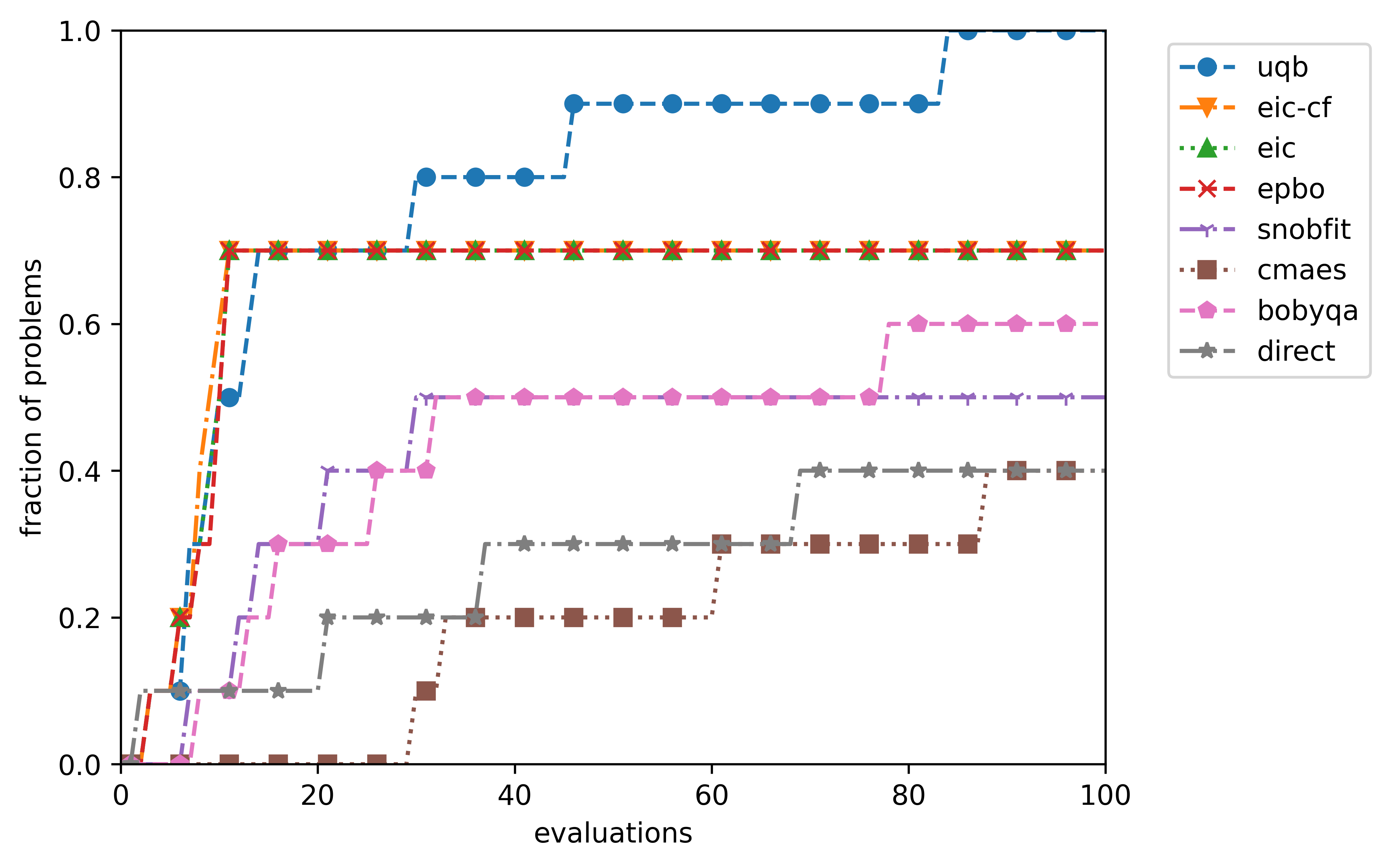}
\caption{Performance profiles comparing CUQB and the selected set of comparison solvers in Section \ref{subsec:baseline} over the set of unconstrained test problems for $\tau = 0.01$.}
\label{fig:unconstrained-perf}
\end{figure}

\subsection{Computational results for constrained problems}

Next, we analyze the performance of CUQB on problems with black- and grey-box constraints. Figure \ref{fig:constrained-perf} shows the performance profiles for CUQB and the selected baseline solvers for the set of constrained test problems with $\tau = 0.01$. CUQB remains the best solver in terms of both speed of finding a solution and number of problems solved, being the only solver to satisfy \eqref{eq:perf-test} for $> 80\%$ of the test problems. Similarly to the unconstrained case, we see that the BO-related solvers continue to outperform the non-BO solvers by a significant margin. Furthermore, due to the lumped penalty approach employed by SNOBFIT, DIRECT, BOBYQA, and CMA-ES, all of these methods struggle to select feasible solutions during the search process, resulting in much worse performance. In particular, DIRECT and CMA-ES are unable to solve any constrained test problem well enough to satisfy \eqref{eq:perf-test} while BOBYQA and SNOBFIT can only solve 10\% and 20\%, respectively. All BO-related methods, on the other hand, are able to solve $\geq 50\%$ of the constrained test problems since they explicitly model the constraints to better shape the acquisition function to identify feasible points. An important difference between the results for the constrained and unconstrained test problems is that there is a much bigger gap between the black-box methods (EIC and EPBO) and the grey-box methods (EIC-CF and CUQB) on the constrained problems. This suggests that prior knowledge is even more helpful when it comes to finding feasible points that satisfy unknown constraint functions, which is a key advantage of the grey-box perspective adopted in this work. EIC-CF continues to underperform relative to CUQB, likely due to the complexity of the acquisition surface such that CUQB excels over all other tested solvers. CUQB solves $80\%$ of the constrained test problems within the first 25 iterations and $100\%$ of the constrained problems in the 100 evaluation budget.

\begin{figure}[tb!]
\centering
\includegraphics[width=0.8\linewidth]{./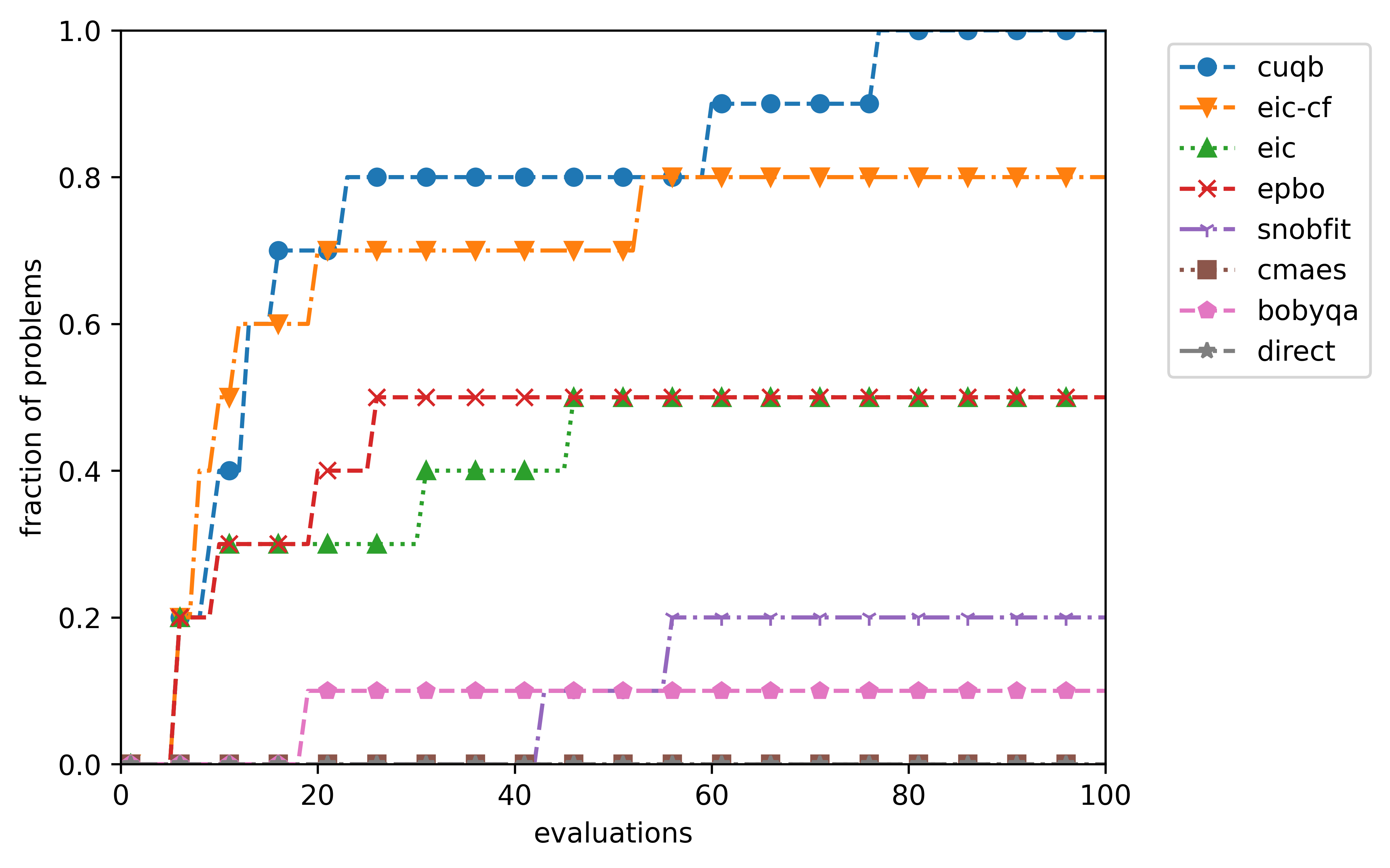}
\caption{Performance profiles comparing CUQB and the selected set of comparison solvers in Section \ref{subsec:baseline} over the set of constrained test problems for $\tau = 0.01$.}
\label{fig:constrained-perf}
\end{figure}




\subsection{Large-scale environmental model function}

We further compare CUQB and the selected solvers on a more realistic case related to Bayesian calibration of expensive computer models, originally proposed in \cite{bliznyuk2008bayesian}. The main purpose of this study is to show that CUQB can effectively scale to problems with a large number of black-box functions. 

Consider the spill of a chemical pollutant at two locations along a long narrow channel, which leads to the following concentration profile 
\begin{align}
    c(s,t ; M, D, \Lambda, \tau) = \frac{M}{\sqrt{4\pi{Dt}}}\text{exp}\left( \frac{-s^2}{4Dt} \right) + \frac{M \mathbb{I}(t > \tau)}{\sqrt{4\pi D(t-\tau)}}\text{exp}\left( \frac{-(s-\Lambda)^2}{4D(t-\tau)} \right),
\end{align}
where $s$ and $t$ are, respectively, the space and time points, $M$ the mass of the pollutant spilled at each location, $D$ the diffusion coefficient, $\Lambda$ is the location of the second spill, and $\tau$ the time of the second spill.
The true underlying parameters $M_0 = 10$, $D_0 = 0.07$, $\Lambda_0 = 1.505$, and $\tau_0 = 30.1525$ are unknown; however, we can estimate these parameters based on observations obtained from various monitoring stations. Specifically, we observe $c(s,t ; M_0, D_0, \Lambda_0, \tau_0)$ on a $4 \times 6$ grid of values at $s \in \mathcal{S} = \{ 1, 1.5, 2.5, 3 \}$ and $t \in \mathcal{T} = \{ 10, 20, 30, 40, 50, 60 \}$. Therefore, our goal is to find the parameter values $x = (M,D,\Lambda,\tau)$ that minimize the mean squared error between our model predictions and available observations
\begin{align} \label{eq:env-composite}
    g_0(x, h(x)) = -\sum_{(s,t) \in \mathcal{S} \times \mathcal{T}} \left( c(s,t ; M_0, D_0, \Lambda_0, \tau_0) - c(s,t ; M, D, \Lambda, \tau) \right)^2,
\end{align}
where $h(x) = \{ c(s,t ; M, D, \Lambda, \tau) : (s,t) \in \mathcal{S} \times \mathcal{T} \} \in \mathbb{R}^{24}$ is our black-box function. 
In summary, the environmental calibration problem has the following dimensions: $d=4$, $m=24$, and $n=0$.

The average simple regret values $\min_{t' = 1,\ldots, t} \{ r_{t'} = f_0^\star - f(x_{t'}) \}$ versus the number of iterations $t$ for all solvers are shown in Figure \ref{fig:env-regret}. CUQB remains the best solver converging to a regret value less than $10^{-6}$ within the first 20 simulations and a value of less than $10^{-8}$ by the end of 100 simulation budget (implying it quickly and accurately identified the true global solution). EIC-CF, which also exploits the composite structure of the objective function, performs similarly to CUQB, with only slightly worse performance after 20 iterations. The same trend remains for the other solvers, with EIC and EPBO coming in third and fourth, respectively, with a substantially slower rate of convergence than CUQB and EIC-CF due to neglected problem structure. However, these model-based solvers outperform the alternatives (DIRECT, CMA-ES, SNOBFIT, and BOBYQA), which end up with around 6 orders of magnitude worse regret values than CQUB after 100 simulations. These results demonstrate CUQB's ability to effectively exploit problem structure to reduce simulation cost in an important and common application problem.

\begin{figure}[tb!]
\centering
\includegraphics[width=0.8\linewidth]{./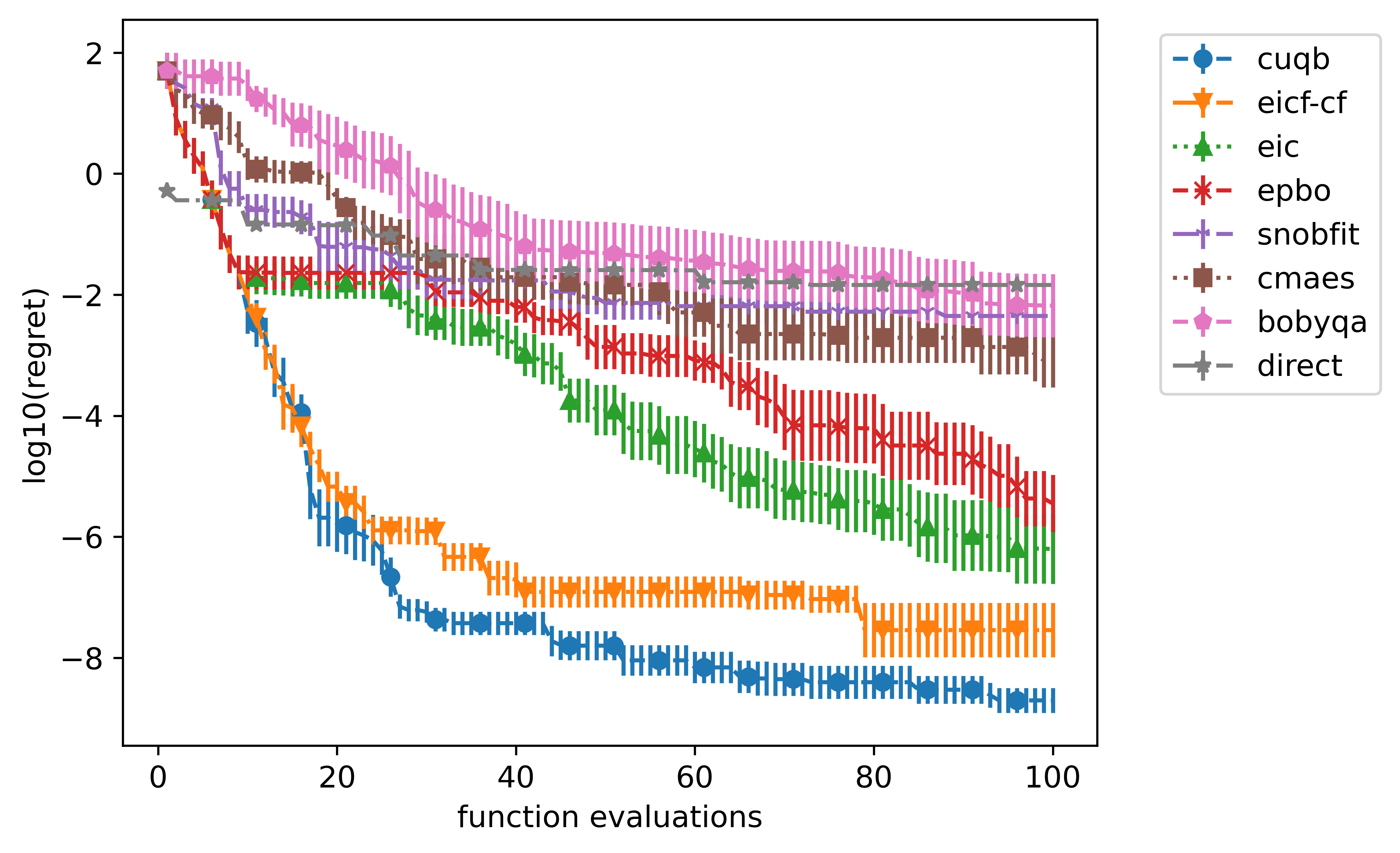}
\caption{Expected simple regret (on a logarithmic scale) for the environmental model problem with approximate confidence bounds estimated from 10 independent realizations. Results are shown for all comparison solvers discussed in Section \ref{subsec:baseline}.}
\label{fig:env-regret}
\end{figure}

\subsection{Application to real-time reactor optimization}

As a final case study, we consider the application of CUQB to the real-time optimization (RTO) of a chemical reactor system. The main purpose of this study is to demonstrate CUQB's ability to effectively handle measurement noise, which commonly occurs in RTO and other process control applications. Noise affects both the sample selection \eqref{eq:cuqb_subproblem} and recommendation \eqref{eq:recommended-point} processes, so we analyze both impacts in the results below, after providing a short description of the problem. 

We consider the Williams-Otto system from \cite{del2021real}, which is a benchmark problem in the RTO literature, in which a continuously stirred tank reactor (CSTR) is fed with two streams of pure reactants $A$ and $B$ with inlet flowrates $F_A$ and $F_B$. The reactor operates at a fixed temperature $T_r$, with the following three reactions occurring
\begin{align*}
A+B &\longrightarrow C \\
B+C &\longrightarrow P + E \\
C+P &\longrightarrow G 
\end{align*}
where $C$ is an intermediate, $G$ is an undesirable byproduct, and $P$ and $E$ are two main products. The complete set of mass balance and kinetic rate equations can be found in \cite{mendoza2016assessing}. 
Let $X_A$, $X_G$, $X_P$, and $X_E$ denote the mass fraction of species $A$, $G$, $P$, and $E$, respectively, which depend on the two input variables $x = (F_B, T_r)$. The unknown black-box function is $h(x) = ( X_A(x), X_G(x), X_P(x), X_E(x) ) \in \mathbb{R}^4$. We assume that we only have access to noisy measurements of these mass fractions
\begin{align}
y = h(x) + \epsilon, ~~~ \epsilon \sim \mathcal{N}(0, \sigma^2 I_4),
\end{align}
where $\sigma$ denotes the standard deviation of the noise. 
We are interested in maximizing the economic profit of this system by manipulating $F_B$ and $T_r$ subject to operating constraints on the mass fraction of unreacted $A$ and byproduct $G$. We can formulate these as composite functions as follows
\begin{subequations}
\begin{align}
    g_0(x,y) &= (1043.38X_P + 2092X_E)(F_A+F_B) - 79.23F_A - 118.34F_B, \\
    g_1(x,y) &= 0.12 - X_A, \\
    g_2(x,y) &= 0.08 - X_G.
\end{align}
\end{subequations}
In summary, the Williams-Otto RTO problem has the following dimensions: $d=2$, $m=4$, and $n=2$.

For comparison purposes, we first present results for the ideal perfect measurement case, i.e., $\sigma = 0$. As discussed in Section \ref{subsec:convergence-rate-bounds}, an advantage of the noise-free case is that we can easily identify the point in the sample sequence $\{ x_1, \ldots, x_t \}$ that has minimum penalty-based regret according to \eqref{eq:identify-tstar}. Figure \ref{fig:wo-noiseless-regret} shows the average simple penalty-based regret (minimum over the sequence) versus the number of iterations for all solvers. CUQB continues to perform the best, achieving penalty-based regret values of $\sim 10^{-2}$ within 50 iterations, which is more than a factor of 100 better than any other tested solver. EIC-CF performs significantly worse than CUQB in this case, which is likely due to a combination of how it handles constraints and the complexity of optimizing the acquisition function. Another interesting observation is that the black-box BO methods (EIC and EPBO) are outperformed by SNOBFIT, which is likely due to the problem being low-dimensional ($d=2$) and the sensitivity of the constraint functions with respect to the inputs that results in over-exploration of points with relatively small amounts of constraint violation. 

\begin{figure}[tb!]
\centering
\includegraphics[width=0.8\linewidth]{./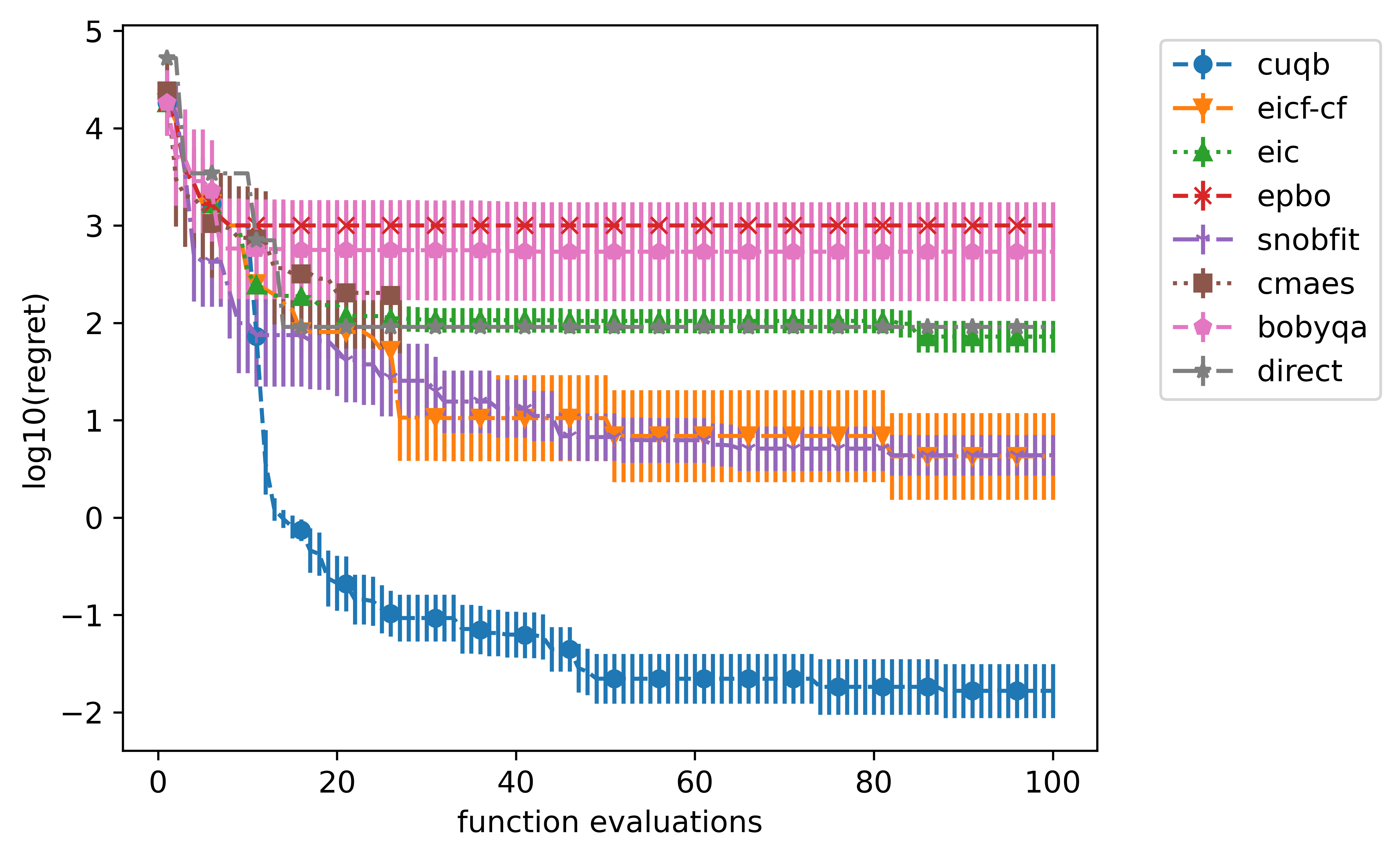}
\caption{Expected simple regret (on a logarithmic scale) for the Williams-Otto RTO problem without measurement noise ($\sigma = 0$) with approximate confidence bounds estimated from 10 independent realizations. Results are shown for all comparison solvers discussed in Section \ref{subsec:baseline}.}
\label{fig:wo-noiseless-regret}
\end{figure}

Next, we look at how the performance changes when a small ($\sigma = 0.01$) and large ($\sigma = 0.05$) amount of noise is included in the observations. Since we can no longer identify the best sample point in the sequence using \eqref{eq:identify-tstar}, we need an updated recommendation process. We proposed a quantile bound recommender \eqref{eq:recommended-point} for which we establish convergence rates in Theorem \ref{thm:3}. To demonstrate the practical importance of the proposed recommender, we also compare it to a ``naive recommender'' that replaces the noise-free values in \eqref{eq:identify-tstar} with their noisy counterparts. The true penalty-based regret values (evaluated at the recommended point) versus number of iterations for both recommenders and both noise levels are shown in Figure \ref{fig:wo-noise-regret}. The CUQB search process shows the best performance (lowest true penalty-based regret values) in all cases. We see a considerable increase in the regret values (more than a factor of 10) when using the naive recommender (Figure \ref{fig:wo-noise-regret}ac) compared to the proposed quantile bound recommender (Figure \ref{fig:wo-noise-regret}bd) -- this highlights the quantile bound recommender's ability to adequately filter the measurement noise when making a final recommendation, which is critical in the presence of constraints. As expected, we do see a drop in performance of all algorithms due to the presence of noise; however, CUQB appears to be the least impacted by noise since the mean and standard deviation remain the smallest out of all tested solvers for both noise levels. 
These results demonstrate CUQB's unique capability of simultaneously exploiting problem structure and filtering noise in an industrially-relevant application. 

\begin{figure}[ht!] 
    \centering
    \begin{subfigure}[b]{0.49\textwidth}
        \centering
        \includegraphics[width=\textwidth]{./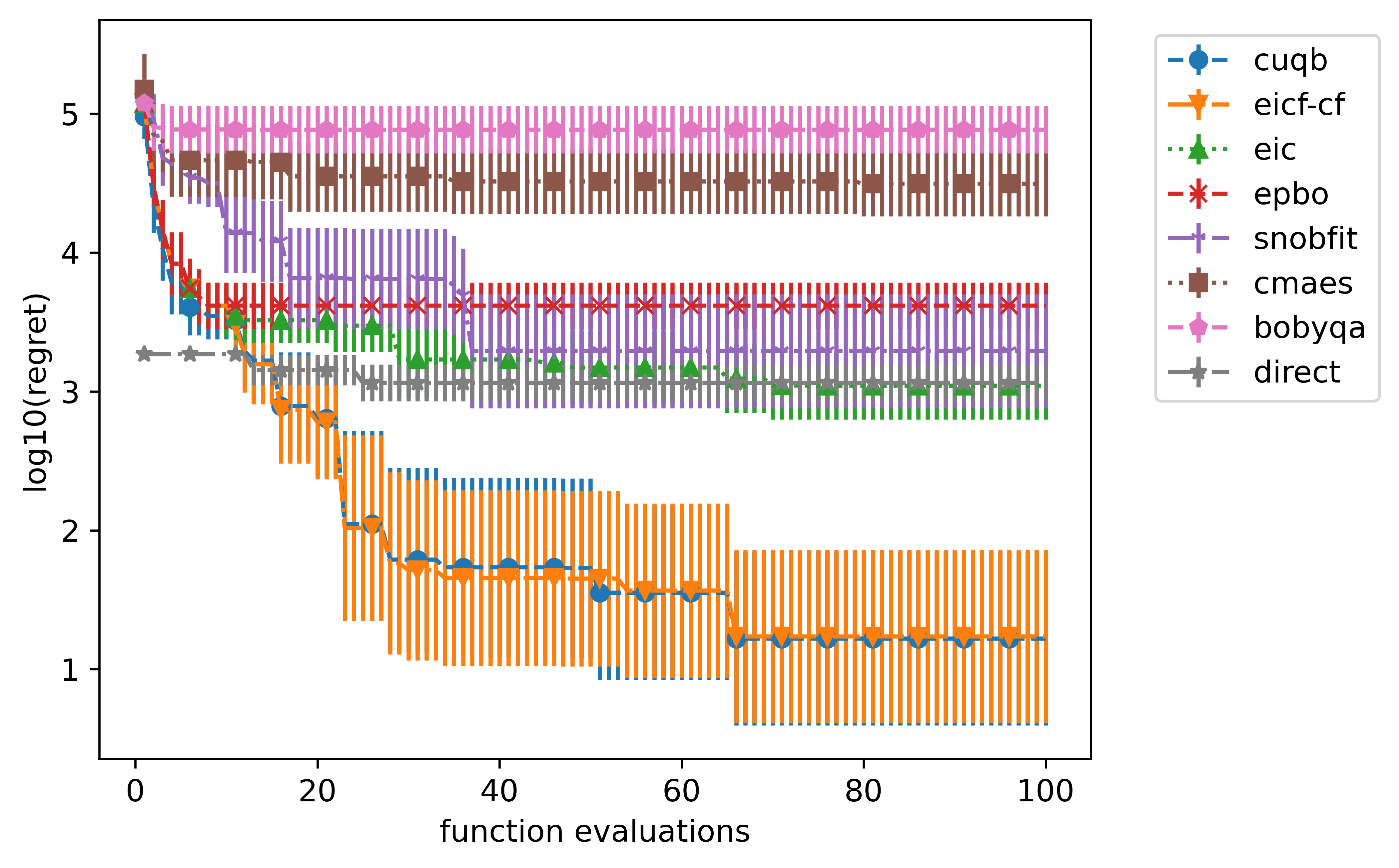}
        \caption{$\sigma = 0.01$, naive recommender}
    \end{subfigure}
    \hfill
    \begin{subfigure}[b]{0.49\textwidth}
        \centering
        \includegraphics[width=\textwidth]{./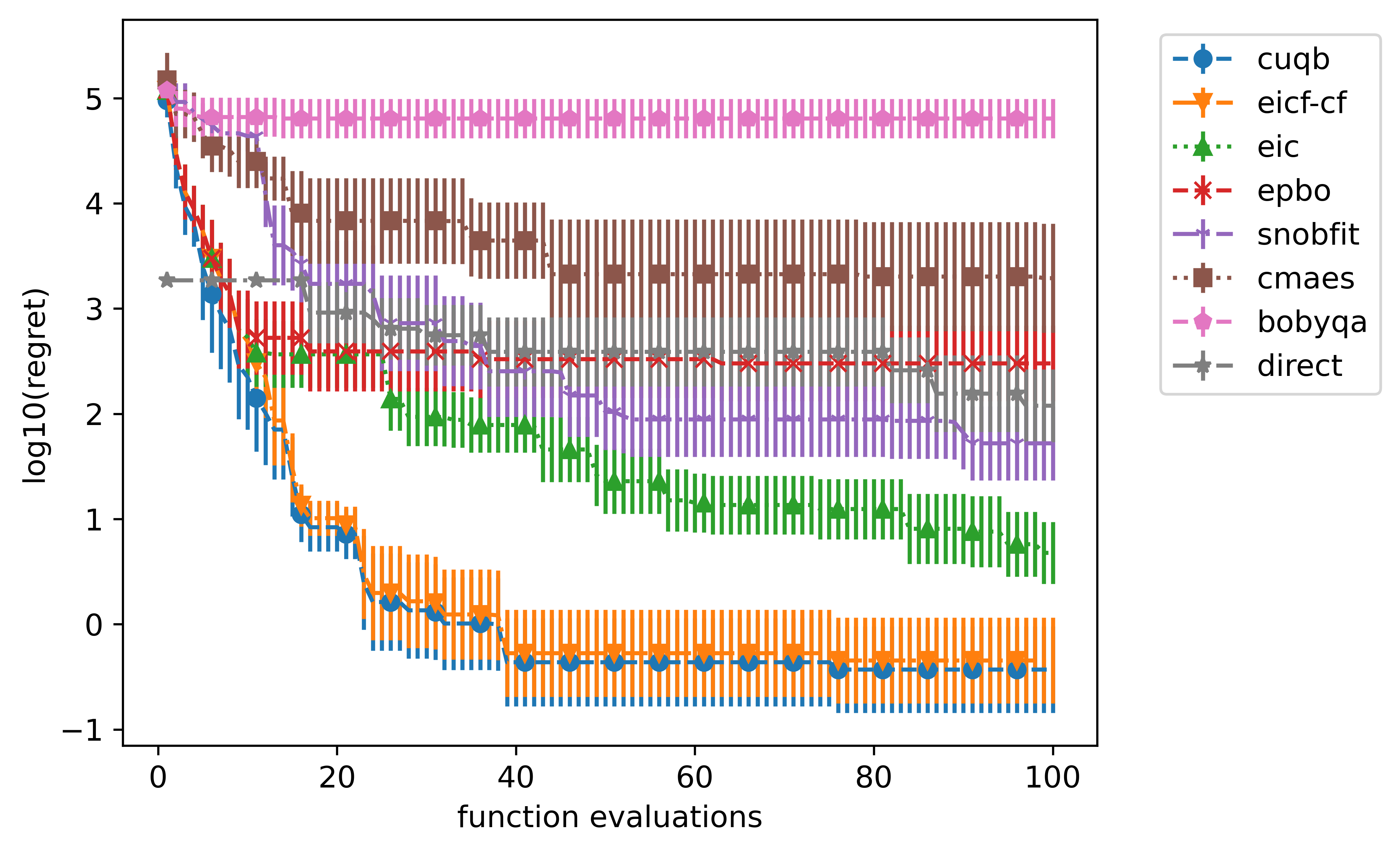}
        \caption{$\sigma = 0.01$, proposed recommender}
    \end{subfigure}
    \hfill
    \begin{subfigure}[b]{0.49\textwidth}
        \centering
        \includegraphics[width=\textwidth]{./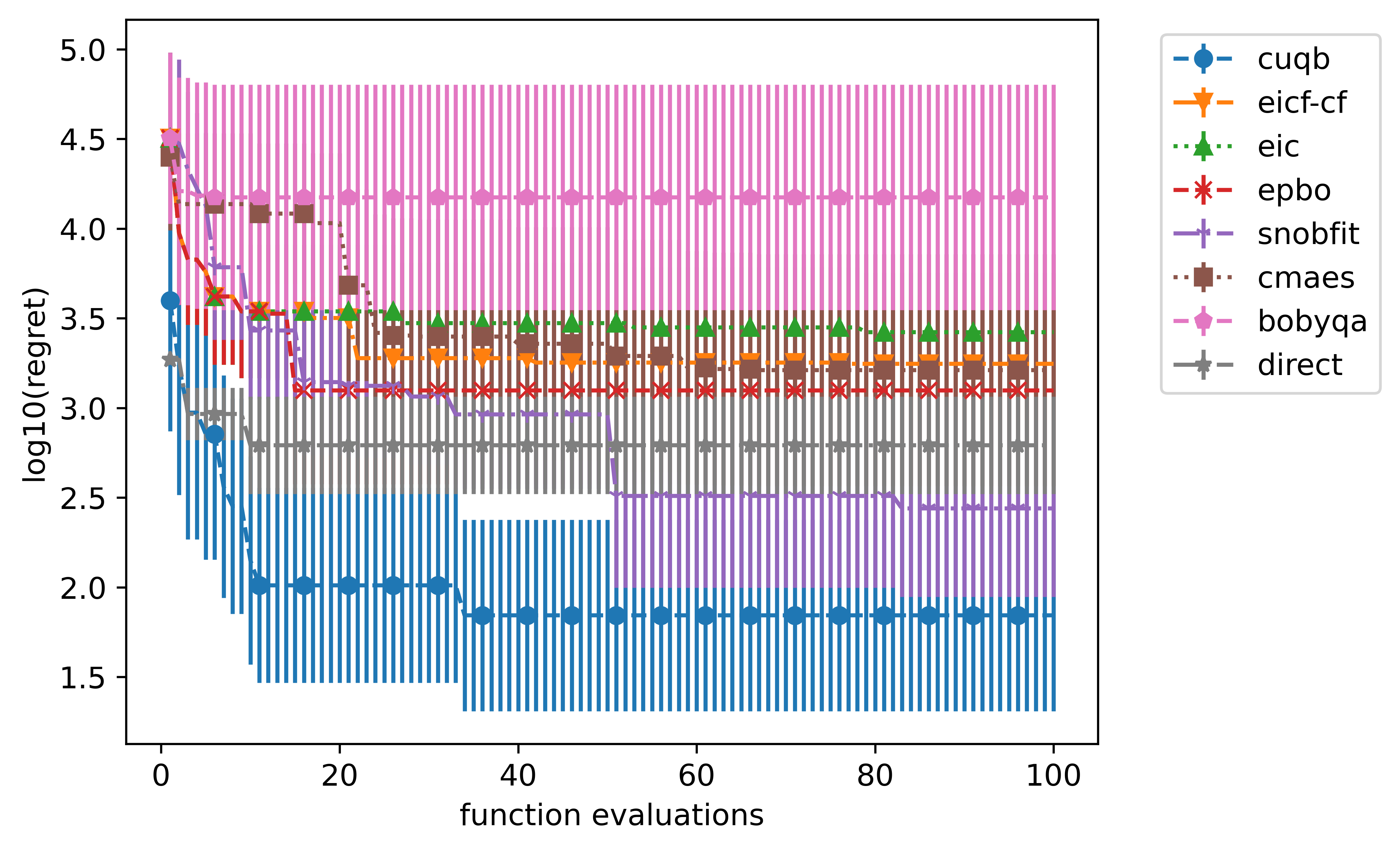}
        \caption{$\sigma = 0.05$, naive recommender}
    \end{subfigure}    
    \hfill
    \begin{subfigure}[b]{0.49\textwidth}
        \centering
        \includegraphics[width=\textwidth]{./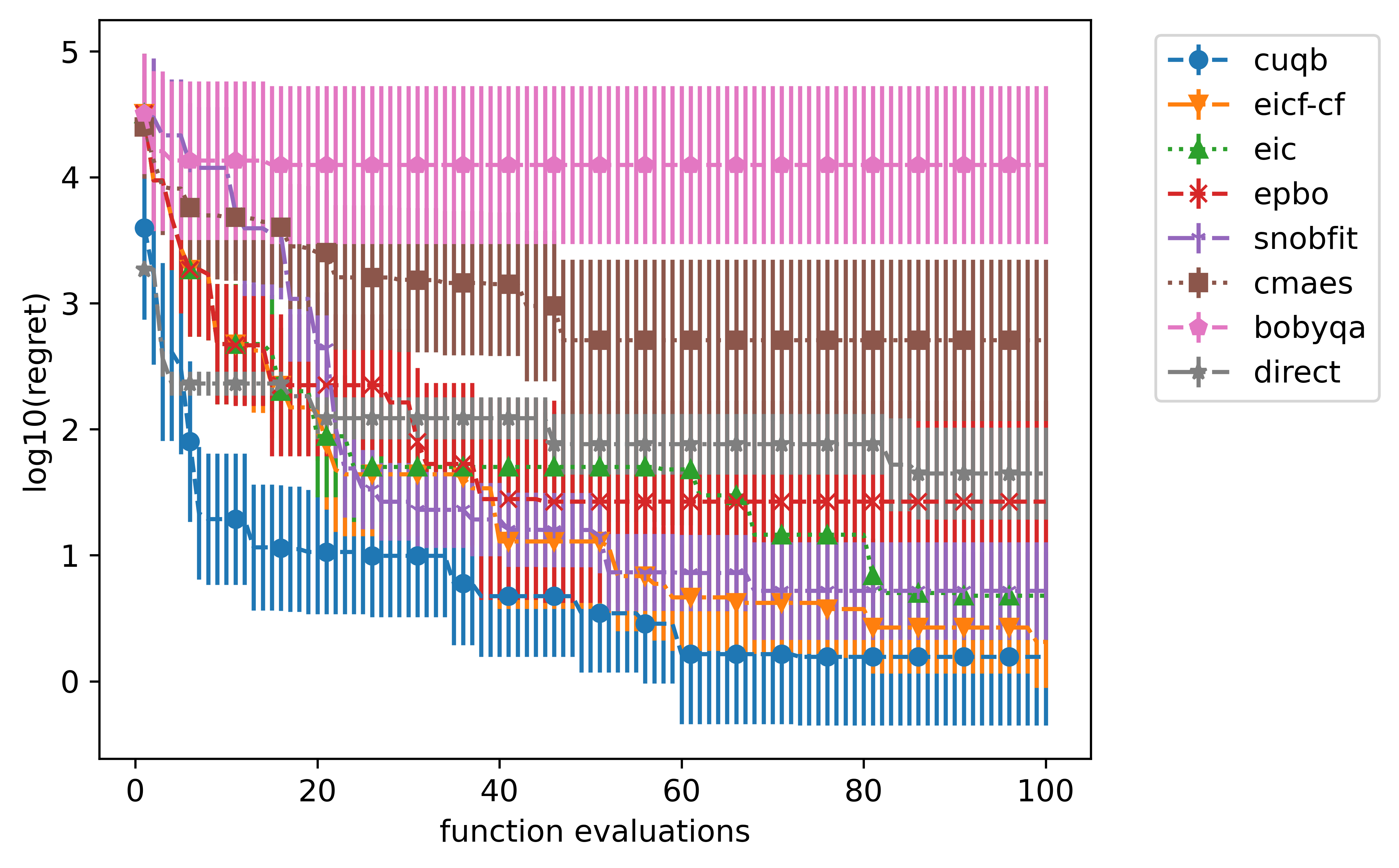}
        \caption{$\sigma = 0.05$, proposed recommender}
    \end{subfigure}
    \vskip\baselineskip
    \vspace{-4mm}
    \caption{Expected simple penalty-based regret (on a logarithmic scale) for the Williams-Otto RTO problem with measurement noise with approximate confidence bounds estimated from 10 independent realizations. Results are shown for two measurement noise levels $\sigma = 0.01$ (top) and $\sigma = 0.05$ (bottom), the naive (left) and proposed quantile bound recommender (right), and for all comparison solvers discussed in Section \ref{subsec:baseline}.}
\label{fig:wo-noise-regret}
\end{figure}

\section{Conclusions}
\label{sec:conclusions}

This paper proposes a new algorithm, CUQB, as a simple and effective approach for efficient constrained global optimization of composite functions whose input is an expensive vector-valued black-box function. These types of problems commonly arise when modeling real-world science, engineering, and control applications and allows for easy incorporation of so-called hybrid (grey-box) model structures. CUQB takes advantage of the principle that one should be optimistic in the face of uncertainty, which has been successfully exploited in its unconstrained and black-box conuterparts. 
In particular, our algorithm solves a constrained auxiliary problem defined in terms of quantile bound functions that can be used to identify a high probability range of prediction outcomes for the objective and constraints given all available data and prior knowledge. In addition to developing an efficient optimization procedure for CUQB, we also perform extensive theoretical analysis on its performance. Specifically, we develop bounds on the cumulative regret and constraint violation that can be translated into convergence rate bounds to the global solution under certain regularity assumptions. We also derive an infeasibility detection scheme as an extension of recent work for the black-box case. Several detailed numerical experiments are performed that demonstrate the effectiveness of CUQB, which we find to be highly competitive with state-of-the-art alternative methods.

Three future directions for extending upon the version of CUQB presented in this paper are briefly discussed next. First, the acquisition optimization subproblem is generally nonconvex and thus is difficult to solve to global optimality. In fact, there has been limited work on global optimization involving Gaussian process surrogate models, with \cite{schweidtmann2021deterministic} being one of the only works to tackle this challenging problem using spatial branch-and-bound combined with generalized McCormick relaxations. Although finite-time convergence to a global solution can be guaranteed under relatively mild assumption, this implementation is currently too slow to apply in practice and so additional work is needed to make global optimization viable in realistic settings. Second, the theoretical results technically hold for design sets with finite cardinality. Although we show that the impact on the number of design points on the rate of convergence is quite weak (logarithmic), it remains an open question exactly how CUQB will theoretically perform in the worst-case in continuous design spaces. We conjecture that similar results can be derived for CUQB in continuous spaces as long as the quantile bound is appropriately related to GP bounds in continuous spaces that have been recently established in \cite{chowdhury2017kernelized}. Third, while technically possible to apply CUQB to equality-constrained problems by transforming $c(x) = 0$ into two inequality constraints $c(x) \geq 0$ and $-c(x) \geq 0$, it is unclear how effective such a reformulation will work in practice. Noisy observations of the black-box functions will further complicate this task, so that additional modifications may be needed to overcome the challenges of a limited feasible region.

\appendix
\numberwithin{equation}{section}
\section{Appendix: Proofs for Theoretical Results in Section \ref{sec:cuqb}}
\label{appendix:new}

\subsection{Proof of Proposition \ref{prop:1}}

We start by proving the following lemma.

\begin{lemma} \label{lemma:quantile}
    Let $y_{(\lceil \alpha L \rceil)}$ denote the $\lceil p L \rceil$-th order statistic of the dataset $y_1,\ldots, y_L$ for $p \in (0,1)$. Let the samples $\{ y_i \}_{i = 1}^L$ be i.i.d. with cumulative distribution function $F(y) = \text{Pr}\{ Y \leq y \}$ and $F_{-}(y) = \text{Pr}\{ Y < y \}$. Let $Q(p) = \inf\{ y : F(y) \geq p \}$ be the corresponding $p$-quantile. Then, $y_{(\lceil p L \rceil)} \to Q(p)$ in probability as $L \to \infty$.
\end{lemma}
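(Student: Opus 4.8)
The plan is to reduce the asymptotics of the order statistic to a binomial tail probability and then apply an elementary concentration (Hoeffding/Chernoff) argument. The starting point is the exact equivalence between order statistics and counts: for any threshold $y$, the $k$-th ascending order statistic satisfies $y_{(k)} \le y$ if and only if at least $k$ of the samples fall at or below $y$. Writing $N_L(y) = \sum_{i=1}^L \mathbb{I}(y_i \le y)$, which is a $\text{Binomial}(L, F(y))$ random variable by the i.i.d. assumption, this yields the identity
\begin{align}
\text{Pr}\{ y_{(\lceil pL \rceil)} \le y \} = \text{Pr}\{ N_L(y) \ge \lceil pL \rceil \}.
\end{align}
All the remaining work amounts to evaluating this identity at $y = Q(p) \pm \epsilon$ and checking that the resulting binomial tails behave as desired.

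Write $q = Q(p)$ and fix $\epsilon > 0$. For the lower tail I would take $y = q - \epsilon$. By definition of $q$ as the infimum of $\{ y : F(y) \ge p \}$, any point strictly below $q$ satisfies $F(q-\epsilon) =: p^- < p$. Since $\lceil pL \rceil \ge pL > L p^-$, the event $\{ N_L(q-\epsilon) \ge \lceil pL \rceil \}$ is a deviation of the binomial count above its mean $L p^-$ by a positive fraction, so Hoeffding's inequality bounds it by $\exp(-2L(p - p^-)^2) \to 0$; hence $\text{Pr}\{ y_{(\lceil pL \rceil)} \le q - \epsilon \} \to 0$. For the upper tail I would take $y = q + \epsilon$ and show $\text{Pr}\{ y_{(\lceil pL \rceil)} \le q + \epsilon \} \to 1$. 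Setting $p^+ := F(q+\epsilon)$ and using $\lceil pL \rceil - 1 \le pL$, we get $\text{Pr}\{ N_L(q+\epsilon) < \lceil pL \rceil \} \le \text{Pr}\{ N_L(q+\epsilon) \le pL \}$, which is a lower deviation of a $\text{Binomial}(L,p^+)$ count below its mean $L p^+$ and is therefore at most $\exp(-2L(p^+ - p)^2)$, vanishing provided $p^+ > p$. Combining both tails by the union bound gives $\text{Pr}\{ |y_{(\lceil pL \rceil)} - q| > \epsilon \} \to 0$, i.e., convergence in probability.

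The delicate step — and the one I expect to be the main obstacle — is precisely the strict inequality $p^+ = F(q + \epsilon) > p$ required in the upper tail. The corresponding lower-side inequality $F(q-\epsilon) < p$ is automatic from the infimum definition, but $F(q+\epsilon) > p$ can fail if $F$ is flat at level exactly $p$ immediately to the right of $q$ (equivalently, if $Y$ places no mass in some interval $(q, q']$ while $\text{Pr}\{Y \le q\} = p$), in which case the order statistic genuinely oscillates and does not concentrate at $q$. I would therefore close the argument by invoking the well-separatedness of the $p$-quantile, i.e., that $F(y) > p$ for every $y > q$; this holds whenever the pushforward law of $Y$ admits a positive density near $q$, which is exactly the differentiable-transformation regime flagged after the definition of the quantile function in the main text. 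Under this mild non-degeneracy both tails decay exponentially in $L$, and since the bounds are summable one can in fact strengthen the conclusion to almost-sure convergence via Borel--Cantelli if desired.
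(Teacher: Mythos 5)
Your proof is correct and follows the same skeleton as the paper's: the order-statistic/count equivalence, the two-tail split at $Q(p)\pm\epsilon$, and a separate argument per tail. The difference in tooling is that you bound the binomial tails with Hoeffding's inequality where the paper invokes the law of large numbers and then passes to the limit informally; your version buys explicit exponential rates and, as you note, summability plus Borel--Cantelli upgrades the conclusion to almost-sure convergence, which the paper's soft argument does not give.

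More importantly, the ``delicate step'' you isolate is a genuine gap --- but in the lemma and in the paper's own proof, not in yours. The paper's upper-tail step asserts $\lim_{L\to\infty}\text{Pr}\{ y_{(\lceil pL\rceil)} \geq Q(p)+\epsilon \} = \text{Pr}\{ F_{-}(Q(p)+\epsilon) < p \} = 0$, justified only by $F_{-}(Q(p)+\epsilon) \geq F(Q(p)) \geq p$. That exchange of limits is valid only when the inequality is strict: if $F_{-}(Q(p)+\epsilon) = p$, the empirical fraction and the threshold $\lceil pL\rceil/L - 1/L$ converge to the same constant $p$, and the probability need not vanish. Indeed the lemma as stated is false in general: for $Y$ taking values $0$ and $1$ with probability $1/2$ each and $p = 1/2$, we have $Q(p) = 0$, yet the sample median equals $1$ with probability tending to $1/2$, so it does not converge in probability to $Q(p)$. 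Some well-separation hypothesis like yours ($F(y) > p$ for all $y > Q(p)$) must therefore be added. One refinement: your positive-density sufficient condition is stronger than what the paper's application actually requires. The lemma is invoked for $Y_{i,t}(x) = g_i(x,\mu_t(x) + C_t(x)Z)$ with $g_i$ Lipschitz (hence continuous) and $Z$ Gaussian, so the support of $Y_{i,t}(x)$ is an interval; flatness of $F$ at level exactly $p$ immediately to the right of $Q(p)$ can then occur only when $Q(p)$ equals the essential supremum, in which case the upper-tail event has probability zero and convergence is trivial. So continuity of the transformation already repairs the lemma for every instance the paper needs, while the general statement for arbitrary distribution functions requires the assumption you flag.
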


\begin{proof}
    By definition, the $k$-th order statistic must satisfy
    \begin{align} \label{eq:order-stat-eqiv-1}
        y_{(k)} \leq y ~ &\Leftrightarrow ~ \textstyle\sum_{l=1}^L \mathbb{I}(y_{l} \leq y) \geq k, \\ \label{eq:order-stat-eqiv-2}
        y_{(k)} \geq y ~ &\Leftrightarrow ~ \textstyle\sum_{l=1}^L \mathbb{I}(y_{l} \geq y) \geq L - k + 1.
    \end{align}
    For $L \geq 1$ and $p \in (0,1)$, $\lceil pL \rceil > 0$ so that $y_{(\lceil p L \rceil)}$ is well-defined. 
    Recall the formal definition of convergence in probability, i.e., we must show that
    \begin{align}
        \lim_{L \to \infty} \text{Pr}\left\lbrace | y_{(\lceil p L \rceil)} - Q(p) | \geq \epsilon \right\rbrace = 0, ~~~ \forall \epsilon > 0.
    \end{align}
    We can rewrite the probability expression as follows
    \begin{align}
        &\text{Pr}\left\lbrace | y_{(\lceil p L \rceil)} - Q(p) | \geq \epsilon \right\rbrace \\\notag
        & = \text{Pr}\left\lbrace y_{(\lceil p L \rceil)} \leq Q(p)-\epsilon \right\rbrace + \text{Pr}\left\lbrace y_{(\lceil p L \rceil)} \geq Q(p) + \epsilon \right\rbrace.
    \end{align}
    We will analyze these two terms separately. Using the equivalence in \eqref{eq:order-stat-eqiv-1}, we can establish the following
    \begin{align}
        \text{Pr}\left\lbrace y_{(\lceil p L \rceil)} \leq Q(p)-\epsilon \right\rbrace = \text{Pr}\left\lbrace \frac{1}{L}\sum_{l=1}^L \mathbb{I}(y_{l} \leq Q(p) - \epsilon) \geq \frac{\lceil p L \rceil}{L} \right\rbrace.
    \end{align}
    By the strong law of large numbers, $\frac{1}{L}\sum_{l=1}^L \mathbb{I}(y_{l} \leq Q(p) - \epsilon) \to F(Q(p)-\epsilon)$ in probability and $\frac{\lceil p L \rceil}{L} \to p$ as $L \to \infty$. Therefore, taking the limit of the above expression leads to
    \begin{align}
        \lim_{L \to \infty} \text{Pr}\left\lbrace y_{(\lceil p L \rceil)} \leq Q(p)-\epsilon \right\rbrace = \text{Pr}\left\lbrace F(Q(p)-\epsilon) \geq p \right\rbrace = 0,
    \end{align}
    which follows from the definition of $Q(p)$ (smallest $y$ value such that $F(y) \geq p$) and the fact that $F(y)$ is non-decreasing such that $F(Q(p) - \epsilon) < p$ for any $\epsilon >0$. We can repeat a similar analysis for the second term after algebraic manipulations
    \begin{align}
        &\text{Pr}\left\lbrace y_{(\lceil p L \rceil)} \geq Q(p) + \epsilon \right\rbrace \\\notag
        &= \text{Pr}\left\lbrace \frac{1}{L}\sum_{l=1}^L \mathbb{I}(y_{l} \geq Q(p) + \epsilon) \geq \frac{L - \lceil p L \rceil + 1}{L} \right\rbrace, \\\notag
        &= \text{Pr}\left\lbrace \frac{1}{L}\sum_{l=1}^L \mathbb{I}(y_{l} < Q(p) + \epsilon) \leq \frac{\lceil p L \rceil}{L} - \frac{1}{L} \right\rbrace,
    \end{align}
    where we have used $\mathbb{I}(y_{l} \geq y) = 1 - \mathbb{I}(y_{l} < y)$. Again, applying the strong law of large numbers, we have $\frac{1}{L}\sum_{l=1}^L \mathbb{I}(y_{l} < Q(p) + \epsilon) \to F_{-}(Q(p) + \epsilon)$ and $\frac{\lceil p L \rceil}{L} - \frac{1}{L} \to p$ as $L \to \infty$. Taking the limit of then yields
    \begin{align}
        \lim_{L \to \infty} \text{Pr}\left\lbrace y_{(\lceil p L \rceil)} \geq Q(p) + \epsilon \right\rbrace = \text{Pr}\left\lbrace F_{-}(Q(p) + \epsilon) < p \right\rbrace = 0,
    \end{align}
    which follows from the fact that the limit approaches from below $p$ and $F$ is increasing with $F(Q(p)) \geq p$. It follows that $y_{(\lceil p L \rceil)} \in (Q(p) - \epsilon, Q(p) + \epsilon)$ with probability approaching 1 for any $\epsilon > 0$.
\end{proof}

From \cite[Proposition 2]{blondel2020fast}, we have that $s_\varepsilon(\boldsymbol{\theta}) \to s(\boldsymbol{\theta})$ as $\varepsilon \to 0$ for all $\boldsymbol{\theta} \in \mathbb{R}^L$. This means that the soft version of the empirical quantile will converge to the true empirical quantile value in \eqref{eq:estimated-quantile} as $\varepsilon \to 0$ for any $p \in (0,1)$, any $x \in \mathcal{X}$, and any $L \in \mathbb{N}$. Then, according to Lemma \ref{lemma:quantile}, we must have that $\hat{u}^{(L)}_{i,t}(x) \to u_{i,t}(x)$ in probability for all $x \in \mathcal{X}$. $\hfill \square$

\subsection{Proof of Proposition \ref{prop:2}}

This result follows from the fact that the posterior distribution of $h(x)$ is an $m$-dimensional normal with mean vector $\mu_t(x)$ and covariance matrix $\Sigma_t(x)$. By the closure of Gaussian distributions under linear transformations, the posterior distribution of $a_i^\top(x) h(x) + b_i(x)$ must be a univariate normal with mean $a_i^\top(x) \mu_{t}(x) + b_i(x)$ and variance $a_i^\top(x) \Sigma_{t}(x) a_i(x)$. The stated result then follows from the analytic expression for the quantile function of normal distribution and the fact that $\Phi^{-1}(\alpha) = -\Phi^{-1}(1-\alpha)$ for any $\alpha \in (0,1)$. $\hfill \square$

\section{Appendix: Proofs for Theoretical Results in Section \ref{sec:theory}}
\label{appendix:A}

\subsection{Proof of Lemma \ref{lemma:1}}
\label{subsec:lem1}

    The quantile bounds $l_{i,t}(x)$ and $u_{i,t}(x)$ are random variables since they depend on the noisy observations $y_t$. Therefore, we can define the following event that the true function is contained in these bounds for all $x \in \mathcal{X}$ and $t \geq 0$ as follows
    \begin{align}
        \mathcal{E}_i = \cap_{x \in \mathcal{X}}\cap_{t \geq 0}\{ l_{i,t}(x) \leq g_i(x, h(x)) \leq u_{i,t}(x) \}, ~~~ \forall i \in \mathbb{N}_0^n.
    \end{align}
    We can then establish the following sequence of inequalities on the joint probability of these events holding for all functions
    \begin{subequations}        
    \begin{align} \label{eq:complement}
        \text{Pr}\left\lbrace \cap_{i=0}^n \mathcal{E}_i \right\rbrace &= 1 - \text{Pr}\left\lbrace \cup_{i=0}^n \overline{\mathcal{E}_i} \right\rbrace, \\ \label{eq:booles_n}
        &\geq 1 - \sum_{i=0}^n \text{Pr}\left\lbrace \overline{\mathcal{E}_i} \right\rbrace, \\ \label{eq:booles_xt}
        &\geq 1 - \sum_{i=0}^n\sum_{x \in \mathcal{X}} \sum_{t \geq 0} \left( F_{Y_{i,t}(x)}(l_{i,t}(x)) + 1 - F_{Y_{i,t}(x)}(u_{i,t}(x)) \right), \\ \label{eq:cdf_quant_def}
        &\geq 1 - \sum_{i=0}^n\sum_{x \in \mathcal{X}} \sum_{t \geq 0} \left( \frac{\alpha_{i,t}}{2} + 1 - \left(1 - \frac{\alpha_{i,t}}{2} \right) \right), \\ \label{eq:inf_series}
        & = 1 - \delta,
    \end{align}
    \end{subequations}
    where \eqref{eq:complement} follows from De Morgan's complement law; \eqref{eq:booles_n} follows from Boole's inequality (union bound) being applied across the different functions; \eqref{eq:booles_xt} follows from Boole's inequality being applied across $x \in \mathcal{X}$, $t \geq 0$ and the definition of the CDF; \eqref{eq:cdf_quant_def} follows from the identity $F_Y(Q_Y(\alpha)) \geq \alpha$ for any random variable $Y$ and probability value $\alpha \in (0,1)$; and \eqref{eq:inf_series} follows from the fact that the infinite series $\sum_{t \geq 0} \frac{6}{\pi^2(t+1)^2} = 1$ is convergent such that $\sum_{t \geq 0} \alpha_{i,t} = \frac{\delta}{(n+1)|\mathcal{X}|}$. $\hfill \square$

\subsection{Proof of Lemma \ref{lemma:2}}
\label{subsec:lem2}

    According to Lemma \ref{lemma:1}, the set of functions $\{ g_i(x,h(x)) \}_{i \in \mathbb{N}_0^n}$ are bounded by their lower $l_{i,t}(x)$ and upper quantile bounds $u_{i,t}(x)$ for all $x \in \mathcal{X}$ and iterations $t \geq 0$ with probability at least $1 - \delta$. The statements below are conditioned on this joint event being true, so must also hold with probability at least $1 - \delta$.

    Under Assumption \ref{assump:3}, $x^\star$ is feasible and therefore must satisfy $g_i(x^\star,h(x^\star)) \geq 0$ for all $i \in \mathbb{N}_1^n$. Since $u_{i,t}(x^\star) \geq g_i(x^\star,h(x^\star)) \geq 0$, there must exist at least one feasible solution to \eqref{eq:cuqb_subproblem}, meaning infeasibility is not declared. We then establish the following sequence of inequalities on the instantaneous regret
    \begin{subequations}        
    \begin{align}
        r_t &\leq r_t^+, \\ \label{eq:add_u0}
        &= \left[ f_0^\star - u_{0,t}(x_{t+1}) + u_{0,t}(x_{t+1}) - f_0(x_{t+1}) \right]^+, \\ \label{eq:poslin_prop}
        &\leq \left[ f_0^\star - u_{0,t}(x_{t+1}) \right]^+ + \left[ u_{0,t}(x_{t+1}) - f_0(x_{t+1}) \right]^+, \\ \label{eq:upp_low_bound}
        &\leq \left[ u_{0,t}(x^\star) - u_{0,t}(x_{t+1}) \right]^+ + \left[ u_{0,t}(x_{t+1}) - l_{0,t}(x_{t+1}) \right]^+, \\ \label{eq:defn_width}
        &= u_{0,t}(x_{t+1}) - l_{0,t}(x_{t+1}) = w_{0,t}(x_{t+1}),
    \end{align}
    \end{subequations}
    where \eqref{eq:add_u0} follows from the definition of positive instantaneous regret; \eqref{eq:poslin_prop} follows from $[a + b]^+ \leq [a]^+ + [b]^+$, $\forall a,b \in \mathbb{R}$; \eqref{eq:upp_low_bound} follows from the assumed upper and lower bounds on the objective function; and \eqref{eq:defn_width} follows from \eqref{eq:cuqb_subproblem}, which implies $u_{0,t}(x^\star) \leq u_{0,t}(x_{t+1})$ (since we are maximizing over the feasible region). We can perform a similar analysis on the constraint violation
    \begin{subequations}        
    \begin{align}
        v_{i,t} &= [-f_i(x_{t+1})]^+, \\
        &= [-f_i(x_{t+1}) + u_{i,t}(x_{t+1}) - u_{i,t}(x_{t+1})]^+, \\
        &\leq [-f_i(x_{t+1}) + u_{i,t}(x_{t+1})]^+ + [- u_{i,t}(x_{t+1})]^+, \\ \label{eq:feas-prob}
        &= [-f_i(x_{t+1}) + u_{i,t}(x_{t+1})]^+, \\
        &\leq [u_{i,t}(x_{t+1}) - l_{i,t}(x_{t+1})]^+, \\
        &= w_{i,t}(x_{t+1}),
    \end{align}
    \end{subequations}
    where \eqref{eq:feas-prob} follows from feasibility of $x_{t+1}$ for the CUQB subproblem \eqref{eq:cdf_quant_def}. $\hfill \square$

\subsection{Proof of Lemma \ref{lemma:3}}
\label{subsec:lem3}

    From \cite[Lemma 5.1]{srinivas09}, the following concentration inequality must hold for each element of the random vector $X$
    \begin{align}
        \text{Pr}\{ | X_j - \mu_{X,j} | \leq \beta^{1/2} \sigma_{X,j} \} \geq 1 - e^{-\beta/2} = 1 - \textstyle\frac{\alpha}{2m}.
    \end{align}
    where the latter equality follows by substituting $\beta = 2\log(2m/\alpha)$.
    Applying Boole's inequality across $j \in \mathbb{N}_1^m$, we have
    \begin{align} \label{eq:xbound}
        \text{Pr}\{ | X_j - \mu_{X,j} | \leq \beta^{1/2} \sigma_{X,j}, ~ \forall j \in \mathbb{N}_1^d \} \geq 1 - \textstyle\frac{\alpha}{2}.
    \end{align}
    From the Lipschitz continuity of $g$, we have that $g(x) - g(\mu) \leq L_g\sum_{j=1}^m | x_j - \mu_{X,j} |$ holds for any $x$, which can be combined with \eqref{eq:xbound} to yield
    \begin{align}
        \text{Pr}\{ g(X) - g(\mu) \leq L_g \beta^{1/2} \textstyle\sum_{j=1}^m \sigma_{X,j}  \} \geq 1 - \textstyle\frac{\alpha}{2}.
    \end{align}
    We can rearrange this expression to be in terms of $Y$
    and compare it to the definition of the quantile function
    \begin{align} 
        \text{Pr}\{ Y \leq Q_Y(1 - \textstyle\frac{\alpha}{2}) \} = 1 - \textstyle\frac{\alpha}{2} \leq \text{Pr}\{ Y \leq g(\mu) + L_g \beta^{1/2} \textstyle\sum_{j=1}^m \sigma_{X,j} \}.
    \end{align}
    Since these events are one-sided, this can be true if and only if
    \begin{align} \label{eq:upperquant}
    Q_Y(1 - \textstyle\frac{\alpha}{2}) \leq g(\mu) + L_g \beta^{1/2} \textstyle\sum_{j=1}^m \sigma_{X,j}.    
    \end{align}
    We can repeat a similar analysis for the lower bound to find 
    \begin{align} \label{eq:lowerquant}
      Q_Y(\textstyle\frac{\alpha}{2}) \geq g(\mu) - L_g \beta^{1/2} \textstyle\sum_{j=1}^m \sigma_{X,j}.
    \end{align}
    Subtracting \eqref{eq:lowerquant} from \eqref{eq:upperquant} then yields the stated result. $\hfill \square$

\subsection{Proof of Theorem \ref{thm:2}}
\label{subsec:thm2}

    Using previous results, we can establish the following sequence of inequalities for the cumulative regret
    \begin{subequations}
    \begin{align}
        R_T &= \textstyle \sum_{t=0}^{T-1} r_t, \\
        &\leq \textstyle\sum_{t=0}^{T-1} r^+_t = R_T^+, \\
        &\leq \textstyle\sum_{t=0}^{T-1} w_{0,t}(x_{t+1}), \\
        &\leq \textstyle\sum_{t=0}^{T-1} 2 L_0 \beta_{t+1}^{1/2} \sum_{j=1}^m \sigma_{j,t}(x_{t+1}), \\
        &\leq 2 L_0 \beta_{T}^{1/2} \textstyle\sum_{j=1}^m \textstyle\sum_{t=0}^{T-1} \sigma_{j,t}(x_{t+1}), \\
        &\leq 4 L_0 \beta_{T}^{1/2} \textstyle\sum_{j=1}^m \sqrt{(T+2)\gamma_{j,T}} = 4 L_0 \beta_{T}^{1/2} \Psi_T.
    \end{align}
    \end{subequations}
    The third and fourth line follow from Lemmas \ref{lemma:2} and \ref{lemma:3}, respectively. The fifth line follows from monotonicity of the sequence $\{ \beta^{1/2}_{t} \}_{t \geq 1}$ and the final line follows from Lemma \ref{lemma:4} and the definition of $\Psi_T$. We can repeat a similar procedure for the cumulative violation of constraint $i$ to show
    \begin{subequations}
        \begin{align}
            V_{i,T} &= \textstyle\sum_{t=0}^{T-1} v_{i,t+1}, \\
            &\leq \textstyle\sum_{t=0}^{T-1} w_{i,t}(x_{t+1}), \\
            &\leq \textstyle\sum_{t=0}^{T-1} 2L_i \beta_{t+1}^{1/2}\sum_{j=1}^m \sigma_{j,t}(x_{t+1}), \\
            &\leq 2L_i\beta_{T}^{1/2} \textstyle\sum_{j=1}^{m}\textstyle\sum_{t=0}^{T-1} \sigma_{j,t}(x_{t+1}), \\
            &\leq 4 L_i \beta_T^{1/2} \textstyle\sum_{j=1}^{m} \sqrt{(T+2)\gamma_{j,T}} = 4L_i\beta_{T}^{1/2} \Psi_T.
        \end{align}
    \end{subequations}
    Lastly, note $\beta^{1/2}_{T}$ grows logarithmically with $T$ and $\Psi_T$ is a finite summation over terms $\mathcal{O}(\sqrt{\gamma_{j,T} T})$ such that the claimed scaling laws must hold true, which completes the proof. $\hfill \square$

\subsection{Proof of Theorem \ref{thm:3}}
\label{subsec:thm3}

    Consider the following sum
    \begin{align}
         \sum_{t=0}^{T-1} \left( r^+(x_{t+1}) + \sum_{i=1}^n v_{i}(x_{t+1}) \right) &= R_T^+ + \sum_{i=1}^n V_{i,T} \leq 4 \mathcal{L} \beta_T^{1/2} \Psi_T,
    \end{align} 
    The minimum of a sequence of points must be less than or equal to their average
    \begin{align}
        \min_{t \in \{ 1, \ldots, T \}} \left( r^+(x_{t+1}) + \sum_{i=1}^n v_{i}(x_{t+1}) \right) &\leq \frac{1}{T} \sum_{t=0}^{T-1} \left( r^+(x_{t+1}) + \sum_{i=1}^n v_{i}(x_{t+1}) \right), \\\notag
        &\leq \frac{4 \mathcal{L} \beta_T^{1/2} \Psi_T}{T}.
    \end{align}
    This implies that there must exist a point $\tilde{x}_T \in \{ x_1,\ldots,x_T \}$ that satisfies
    \begin{align}
        r^+(\tilde{x}_{T}) + \sum_{i=1}^n v_{i}(\tilde{x}_{T}) \leq \frac{4 \mathcal{L} \beta_T^{1/2} \Psi_T}{T}.
    \end{align}
    Since each of these terms is non-negative, the bound must hold for all terms, i.e., 
    \begin{subequations}        
    \begin{align}
        r(\tilde{x}_T) \leq r^+(\tilde{x}_{T}) &\leq \frac{4 \mathcal{L} \beta_T^{1/2} \Psi_T}{T}, \\
        v_{i}(\tilde{x}_{T}) &\leq \frac{4 \mathcal{L} \beta_T^{1/2} \Psi_T}{T}, ~~ \forall i \in \mathbb{N}_1^n.
    \end{align}
    \end{subequations}
    The stated result then immediately follows. $\hfill \square$

\subsection{Proof of Theorem \ref{thm:4}}
\label{subsec:thm4}

    The lower bound holds since $f_0(x) - \rho\sum_{i=1}^n [-f_i(x)]^+ \leq f_0^\star$ for all $x \in \mathcal{X}$ by the exact penalty property for any $\rho \geq \bar{\rho}$. To derive the upper bound, we first define a quantity that we refer to as instantaneous penalty-based regret
    \begin{align}
        r_{EP}(x_t;\rho) &= r_t + \rho \textstyle\sum_{i=1}^n v_{i,t} = f_0^\star - f_0(x_t) + \rho \textstyle\sum_{i=1}^n [-f_i(x_t)]^+.
    \end{align}
    Next, we define a conservative (pessimistic) estimate of $r_{EP}(x_t;\rho)$ by replacing the true function with their lower quantile bounds
    \begin{align}
        \bar{r}_{EP}(x_t;\rho) &= f_0^\star - l_{0,t-1}(x_t) + \rho \textstyle\sum_{i=1}^n [-l_{i,t-1}(x_t)]^+,
    \end{align}
    such that $r_{EP}(x_t;\rho) \leq \bar{r}_{EP}(x_t;\rho)$ for any choice of $x_t$ with probability $\geq 1 - \delta$ by Lemma \ref{lemma:1}. Following the same logic used in Theorem \ref{thm:2}, we can derive
    \begin{align} \label{eq:boundrEPT}
        \textstyle\sum_{t=1}^T \bar{r}_{EP}(x_t;\rho) \leq 4 \mathcal{L}_r(\rho) \beta_T^{1/2} \Psi_T,
    \end{align}
    since the bounds established on $r_{t+1}$ and $v_{i,t+1}$ in Lemma \ref{lemma:2} also hold for their pessimistic counterparts. The final step is to recognize that
    \begin{align} \label{eq:boundrEPmin}
        r_{EP}(\tilde{x}_{T}^r;\rho) \leq \bar{r}_{EP}(\tilde{x}_{T}^r;\rho) = \min_{t \in \{1,\ldots,T\}} \bar{r}_{EP}(x_t;\rho) \leq \frac{1}{T} \sum_{t=1}^T \bar{r}_{EP}(x_t;\rho),
    \end{align}
    where $\tilde{x}_T^r = x_{t^\star}$ with index $t^\star$ given by \eqref{eq:recommended-point}. The upper bound then follows by combining \eqref{eq:boundrEPT} and \eqref{eq:boundrEPmin} and substituting the definition of $r_{EP}(\tilde{x}_{T}^r;\rho)$. $\hfill \square$

\subsection{Proof of Corollary \ref{cor:1}}
\label{subsec:cor1}

The following bounds must hold with probability $\geq 1-\delta$
\begin{align} \label{eq:bounds-cor1}
   l_{i,t}(x) \leq f_i(x) \leq u_{i,t}(x), ~~~ \forall i \in \mathbb{N}_0^n, \forall t \geq 0, \forall x \in \mathcal{X},
\end{align}
based on Lemma \ref{lemma:1}. By definition, an exact penalty function for \eqref{eq:grey-box-opt} must satisfy
\begin{align} \label{eq:exact-pen-cor1}
    f_0(x) - \rho\textstyle\sum_{i=1}^n [-f_i(x)]^+ - f_0^\star \leq 0, ~~~ \forall x \in \mathcal{X},
\end{align}
for some $\rho \geq 0$. We can construct an upper bound for the left-hand side of this inequality by individually bounding each term in \eqref{eq:exact-pen-cor1} using \eqref{eq:bounds-cor1} as follows
\begin{align*}
    f_0(x) &\leq u_{0,0}(x), &\forall x \in \mathcal{X}, \\
    [-f_i(x)]^+ &\geq [-u_{i,0}(x)]^+, &\forall x \in \mathcal{X},\forall i \in \mathbb{N}_1^n, \\
    f_0^\star = \max_{x \in \mathcal{X}, f_{i}(x) \geq 0, \forall i \in \mathbb{N}_1^n} &\geq l_{0,0}^\star = \max_{x \in \mathcal{X}, l_{i,0}(x) \geq 0, \forall i \in \mathbb{N}_1^n} l_{0,0}(x),
\end{align*}
which also hold with probability $\geq 1-\delta$. Using these inequalities, we can establish
\begin{align}
    \left( f_0(x) - \rho\textstyle\sum_{i=1}^n [-f_i(x)]^+ - f_0^\star \right) \leq \left( u_{0,0}(x) - \rho\textstyle\sum_{i=1}^n[-u_{i,0}(x)]^+ - l_{0,0}^\star \right),
\end{align}
for all $x \in \mathcal{X}$, meaning \eqref{eq:rho-tilde} is a sufficient condition for the exact penalty property \eqref{eq:exact-pen-cor1} to hold, which completes the proof. $\hfill \square$

\subsection{Proof of Theorem \ref{thm:5}}
\label{subsec:thm5}

    The proof follows the same line of reasoning as \cite[Theorem 5.1]{xu2022constrained}. Suppose that infeasibility has not been declared up until step $T$ and let $i$ be in the index of the infeasible constraint. By Lemma \ref{lemma:1}, we have $f_i(x_{t+1}) \in [l_{i,t}(x_{t+1}), u_{i,t}(x_{t+1})]$ for all $t \in \mathbb{N}_0^{T-1}$ with probability at least $1 - \delta$.
    Then, since infeasibility has yet to be declared, we must have
    \begin{align}
        u_{i,t}(x_{t+1}) \geq 0, ~~ \forall t \in \mathbb{N}_0^{T-1},
    \end{align}
    and
    \begin{align}
        l_{i,t}(x_{t+1}) \leq f_{i}(x_{t+1}) \leq \max_{x \in \mathcal{X}} f_i(x) = -\epsilon,
    \end{align}
    with probability $\geq 1 - \delta$. We can use these to lower bound the sum
    \begin{align}
        \textstyle\sum_{t=0}^{T-1} \left( u_{i,t}(x_{t+1}) - l_{i,t}(x_{t+1}) \right) \geq \textstyle\sum_{t=0}^{T-1} \epsilon = T \epsilon.
    \end{align}
    Combining Lemmas \ref{lemma:3} and \ref{lemma:4}, we can upper bound this sum as follows
    \begin{subequations}        
    \begin{align}
        \textstyle\sum_{t=0}^{T-1} \left( u_{i,t}(x_{t+1}) - l_{i,t}(x_{t+1}) \right) &\leq \textstyle\sum_{t=0}^{T-1} 2L_i \beta_{t+1}^{1/2} \sum_{j=1}^m \sigma_{j,t}(x_{t+1}), \\
        &\leq 2L_i \beta_{T}^{1/2} \textstyle\sum_{j=1}^m \textstyle\sum_{t=0}^{T-1}\sigma_{j,t}(x_{t+1}), \\
        &\leq 4L_i \beta_{T}^{1/2}\textstyle\sum_{j=1}^m \sqrt{(T+2)\gamma_{j,T}},
    \end{align}
    \end{subequations}
    where the expression for $\beta_{T}^{1/2}$ is given in Theorem \ref{thm:2}. Therefore, we have
    \begin{align}
        T \epsilon \leq 4L_i \beta_{T}^{1/2}\textstyle\sum_{j=1}^m \sqrt{(T+2)\gamma_{j,T}},
    \end{align}
    which implies
    \begin{align} \label{eq:eps-bound}
        \epsilon \leq \frac{4L_i \beta_{T}^{1/2}\textstyle\sum_{j=1}^m \sqrt{(T+2)\gamma_{j,T}}}{T} = \mathcal{O}\left( \sqrt{\frac{\log(T) \gamma_{T}^\text{max} }{T}} \right),
    \end{align}
    since $\beta_T = \mathcal{O}(\log(T))$ by definition. The inequality \eqref{eq:eps-bound} implies there exists a constant $\tilde{C} > 0$ such that
    \begin{align}
        \epsilon \leq \tilde{C}\sqrt{\frac{\log(T) \gamma_{T}^\text{max} }{T}},
    \end{align}
    which can be rearranged, with $C = 1/\tilde{C}$, as follows 
    \begin{align} \label{eq:Cepsbound}
        C \epsilon \leq \sqrt{\frac{\log(T) \gamma_{T}^\text{max} }{T}}.
    \end{align}
    However, the worst-case MIG satisfies $\lim_{T \to \infty} \frac{\log(T) \gamma_T^\text{max}}{T} = 0$ by assumption such that \eqref{eq:Cepsbound} will be violated when $T$ is large enough. Thus, infeasibility will be declared on or before $\overline{T}$, which is the first iteration that \eqref{eq:Cepsbound} is violated. $\hfill \square$

\section{Appendix: Unconstrained Synthetic Test Problems}
\label{appendix:B}

\subsection{Booth}
\label{subsec:booth}
\noindent The Booth problem is defined by the following functions
\begin{align*}
    g_0(x,y) &= -\left( y_1 + (2x_1 + x_2 - 5)^2 \right).
\end{align*}
\begin{align*}
    h(x) = (x_1 + 2x_2 - 7)^2.
\end{align*}
\begin{align*}
    -10 \leq x_i \leq 10, ~~~ \forall i = 1, 2.
\end{align*}
The global maximum is $0$ with $x^\star = [1, 3]^\top$.

\subsection{Wolfe}
\label{subsec:wolfe}
\noindent The Wolfe problem is defined by the following functions
\begin{align*}
    g_0(x,y) &= -\left( \frac{4}{3}y_1 + x_3 \right).
\end{align*}
\begin{align*}
    h(x) = (x_1^2 + x_2^2 - x_1 x_2)^{0.75}.
\end{align*}
\begin{align*}
    0 \leq x_i \leq 2, ~~~ \forall i = 1, \ldots, 3.
\end{align*}
The global maximum is $0$ with $x^\star = [1, 1, 1]^\top$.

\subsection{Rastrigin}
\label{subsec:rastrigin}
\noindent The Rastrigin problem is defined by the following functions
\begin{align*}
    g_0(x,y) &= -\left( y_1 + y_2 + 30 + x_3^2 - 10\cos(2\pi x_3) \right).
\end{align*}
\begin{align*}
    h(x) = \begin{bmatrix}
        x_1^2 - 10\cos(2\pi x_1) \\
        x_2^2 - 10\cos(2\pi x_2)
    \end{bmatrix}.
\end{align*}
\begin{align*}
    -5 \leq x_i \leq 5, ~~~ \forall i = 1, \ldots, 3.
\end{align*}
The global maximum is $0$ with $x^\star = [0, 0, 0]^\top$.

\subsection{Colville}
\label{subsec:colville}
\noindent The Colville problem is defined by the following functions
\begin{align*}
    g_0(x,y) &= -\bigg( y_1 + 90(x_3^2 - x_4)^2 + 10.1( (x_2-1)^2 + (x_4-1)^2 ) \\\notag 
    & ~~~~~~~~ + 19.8(x_2-1)(x_4-1) \bigg).
\end{align*}
\begin{align*}
    h(x) = 100(x_1^2 - x_2)^2 + (x_3 - 1)^2 + (x_1 - 1)^2.
\end{align*}
\begin{align*}
    -10 \leq x_i \leq 10, ~~~ \forall i = 1, \ldots, 4.
\end{align*}
The global maximum is $0$ with $x^\star = [1, 1, 1, 1]^\top$.

\subsection{Friedman}
\label{subsec:friedman}
\noindent The Friedman problem is defined by the following functions
\begin{align*}
    g_0(x,y) &= -\left( 10y_1 + 20(x_3 - 0.5)^2 + 10 x_4 + 5 x_5 \right).
\end{align*}
\begin{align*}
    h(x) = \sin(\pi x_1 x_2).
\end{align*}
\begin{align*}
    0 \leq x_i \leq 1, ~~~ \forall i = 1, \ldots, 5.
\end{align*}
The global maximum is $27.5$ with $x^\star = [x_1^\star, x_2^\star, 0.5, -1.5, -1.5]^\top$ for any $x_1^\star$ and $x_2^\star$ satisfying $\sin(\pi x_1^\star x_2^\star) = -1$.

\subsection{Dolan}
\label{subsec:dolan}
\noindent The Dolan problem is defined by the following functions
\begin{align*}
    g_0(x,y) &= -\left( y_1 - y_2 + 0.2x_5^2 - x_2 - 1 \right).
\end{align*}
\begin{align*}
    h(x) = \begin{bmatrix}
        (x_1 + 1.7x_2)\sin(x_1) \\
        1.5 x_3 - 0.1 x_4 \cos(x_5 + x_4 - x_1)
    \end{bmatrix}.
\end{align*}
\begin{align*}
    -100 \leq x_i \leq 100, ~~~ \forall i = 1, \ldots, 5.
\end{align*}
The global maximum is $529.87$ with $x^\star = [98.964,100,100,99.224,-0.25]^\top$.

\subsection{Rosenbrock}
\label{subsec:rb}
\noindent The Rosenbrock problem is defined by the following functions
\begin{align*}
    g_0(x, y) &= -\bigg( \textstyle\sum_{i=1}^3 (100 y_i^2  + (1-x_i)^2) + 100(x_5 - x_4^2) + y_4 \\\notag
    & ~~~~~~~~ + 100(x_6 - x_5^2) + (1 - x_5)^2 \bigg).
\end{align*}
\begin{align*}
    h(x) = \begin{bmatrix}
    x_2^2 - x_1^2 \\
    x_3^2 - x_2^2 \\
    x_4^2 - x_3^2 \\
    (1-x_4)^2
    \end{bmatrix}.
\end{align*}
\begin{align*}
    -2 \leq x_i \leq 2, ~~~ \forall i = 1,\ldots, 6.
\end{align*}
The global maximum is 0 with $x^\star = [0, 0, 0, 0, 0, 0]^\top$.

\subsection{Zakharov}
\label{subsec:zakharov}
\noindent The Zakharov problem is defined by the following functions
\begin{align*}
    g_0(x, y) &= -\left( \textstyle\sum_{i=}^7 x_i^2 + \textstyle\sum_{i=}^7 (0.5 i x_i)^2 + y_1 \textstyle\sum_{i=}^7 (0.5 i x_i)^2 \right).
\end{align*}
\begin{align*}
    h(x) = \textstyle\sum_{i=}^7 (0.5 i x_i)^2.
\end{align*}
\begin{align*}
    -5 \leq x_i \leq 10, ~~~ \forall i = 1, \ldots, 7.
\end{align*}
The global maximum is $0$ with $x^\star = [0, 0, 0, 0, 0, 0, 0]^\top$.

\subsection{Powell}
\label{subsec:powell}
\noindent The Powell problem is defined by the following functions
\begin{align*}
    g_0(x, y) &= -\bigg( y_1 + (x_5 + 10x_6)^2 + y_2 + 5(x_7 - x_8)^2 \\\notag 
    & ~~~~~~~~ + (x_2 - 2x_3)^4 + y_3 + 10(x_1 - x_4)^4 + y_4 \bigg).
\end{align*}
\begin{align*}
    h(x) = \begin{bmatrix}
    (x_1 + 10x_2)^2 \\
    5(x_3 - x_4)^2 \\
    (x_6 - 2x_7)^4 \\
    10(x_5 - x_8)^4
    \end{bmatrix}.
\end{align*}
\begin{align*}
    -4 \leq x_i \leq 5, ~~~ \forall i = 1, \ldots, 8.
\end{align*}
The global maximum is $0$ with $x^\star = [0, 0, 0, 0, 0, 0, 0, 0]^\top$.

\subsection{Styblinski-Tang}
\label{subsec:st}
\noindent The Styblinski-Tang problem is defined by the following functions
\begin{align*}
    g_0(x, y) &= -\left( \textstyle\sum_{i=1}^4 y_i + \textstyle\sum_{i=5}^9 (0.5 x_i^4 - 16x_i^2 + 5x_i) \right).
\end{align*}
\begin{align*}
    h(x) = \begin{bmatrix}
    0.5(x_1^4 - 16x_1^2 + 5x_1) \\
    0.5(x_2^4 - 16x_2^2 + 5x_2) \\
    0.5(x_3^4 - 16x_3^2 + 5x_3) \\
    0.5(x_4^4 - 16x_4^2 + 5x_4)
    \end{bmatrix}.
\end{align*}
\begin{align*}
    -5 \leq x_i \leq 5, ~~~ \forall i = 1, \ldots, 9.
\end{align*}
The global maximum is $352.49$ with $x^\star = -2.904[1, 1, 1, 1, 1, 1, 1, 1, 1]^\top$.

\section{Appendix: Constrained Synthetic Test Problems}
\label{appendix:C}

\subsection{Bazaraa}
\label{subsec:bazaraa}
\noindent The Bazaraa problem is defined by the following functions
\begin{align*}
    g_0(x, y) &= -\left( 2x_1^2 + 2x_2^2 - y_2 \right), \\\notag
    g_1(x, y) &= -\left( 5x_1 + x_2 - 5 \right), \\\notag
    g_2(x, y) &= -\left( y_1 - x_1 \right).
\end{align*}
\begin{align*}
    h(x) = \begin{bmatrix}
    2x_2^2 \\
    2x_1 x_2 + 6 x_1 + 4 x_2
    \end{bmatrix}.
\end{align*}
\begin{align*}
    0.01 \leq x_i \leq 1, ~~~ \forall i = 1, 2.
\end{align*}
The global maximum is $6.613$ with $x^\star = [0.868, 0.659]^\top$.

\subsection{Spring}
\label{subsec:spring}
\noindent The Spring problem is defined by the following functions
\begin{align*}
    g_0(x, y) &= -\left( 2y_1 + 2x_3 \right), \\\notag
    g_1(x, y) &= -\left( 2x_2^2 - x_1 \right), \\\notag
    g_2(x, y) &= -\left( \frac{4 x_2^2 - x_1 x_2}{12566(x_1^3 x_2 - x_1^4)} + \frac{1}{5108 x_1 x_2 - 1} \right), \\\notag
    g_3(x, y) &= -\left( 1 - 140.45\frac{x_1}{y_2} \right), \\\notag
    g_4(x, y) &= -\left( \frac{2}{3}(x_1 + x_2) - 1 \right). 
\end{align*}
\begin{align*}
    h(x) = \begin{bmatrix}
    x_1^2 x_2 \\
    x_2^3 x_3
    \end{bmatrix}.
\end{align*}
\begin{align*}
    0.05 &\leq x_1 \leq 2, \\\notag
    0.25 &\leq x_2 \leq 1.3 \\\notag
    2 &\leq x_3 \leq 15.
\end{align*}
The global maximum is $-0.0127$ with $x^\star = [0.052, 0.357, 11.289]^\top$.

\subsection{Ex314}
\label{subsec:ex314}
\noindent The Ex314 problem is defined by the following functions
\begin{align*}
    g_0(x, y) &= y_2, \\\notag
    g_1(x, y) &= -x_1 y_1 - 2x_2^2 + 2x_1x_2 + 2x_2x_3 - 2x_1x_3 - 2x_3^2 + 20x_1 - 9x_2 + 13x_3-24, \\\notag
    g_2(x, y) &= x_1 + x_3 + x_3 -4, \\\notag
    g_3(x, y) &= 3x_2 + x_6 - 6.
\end{align*}
\begin{align*}
    h(x) = \begin{bmatrix}
    4 x_1 - 2x_2 + 2 x_3 \\
    x_2 - x_3 - 2x_1
    \end{bmatrix}.
\end{align*}
\begin{align*}
    -2 &\leq x_1 \leq 2, \\\notag
    0 &\leq x_2 \leq 6 \\\notag
    -3 &\leq x_3 \leq 3.
\end{align*}
The global maximum is $4$ with $x^\star = [0.5, 0.0, 3.0]^\top$.

\subsection{Rosen-Suzuki}
\label{subsec:rosen-suzuki}
\noindent The Rosen-Suzuki problem is defined by the following functions
\begin{align*}
    g_0(x,y) &= -( x_1^2 + x_2^2 + x_4^2 - 5x_1 - 5x_2 + y_1 ), \\\notag
    g_1(x,y) &= 8 - x_1^2 - x_2^2 - x_3^2 - x_4^2 - x_1 + x_2 - x_3 + x_4, \\\notag
    g_2(x,y) &= 10 - x_1^2 - 2x_2^2 - y_2 + x_1 + x_4, \\\notag
    g_3(x,y) &= 5 - 2x_1^2 - x_2^2 - x_3^2 - 2x_1 + x_2 + x_4.
\end{align*}
\begin{align*}
    h(x) = \begin{bmatrix}
    2x_3^2 - 21x_3 + 7x_4 \\
    x_3^2 + 2x_4^2
    \end{bmatrix}.
\end{align*}
\begin{align*}
    -2 \leq x_i \leq 2, ~~~ \forall i = 1,\ldots, 4.
\end{align*}
The global maximum is $44$ with $x^\star = [0, 1, 2, -1]^\top$.

\subsection{st\_bpv1}
\label{subsec:st-bpv1}
\noindent The st\_bpv1 problem is defined by the following functions
\begin{align*}
    g_0(x,y) &= -( y_1 + x_2 x_4 ), \\\notag
    g_1(x,y) &= -( 30 - y_2 ), \\\notag
    g_2(x,y) &= -( 20 - y_3 ), \\\notag
    g_3(x,y) &= -( x_3 + x_4 - 15 ).
\end{align*}
\begin{align*}
    h(x) = \begin{bmatrix}
    x_1 x_3 \\
    x_1 + 3x_2 \\
    2x_1 + x_2
    \end{bmatrix}.
\end{align*}
\begin{align*}
    0 \leq x_1 \leq 27, \\\notag
    0 \leq x_2 \leq 16, \\\notag
    0 \leq x_3 \leq 10, \\\notag
    0 \leq x_4 \leq 10.
\end{align*}
The global maximum is $-10$ with $x^\star = [27, 1, 0, 10]^\top$.

\subsection{Ex211}
\label{subsec:ex211}
\noindent The Ex211 problem is defined by the following functions
\begin{align*}
    g_0(x,y) &= -( 42x_1 - 50y_1 + 44x_2 + 45x_3 + 47x_4 + 47.5 x_5 ), \\\notag
    g_1(x,y) &= -( 20x_1 + y_2 + 4x_5 - 39 ).
\end{align*}
\begin{align*}
    h(x) = \begin{bmatrix}
    x_1^2 + x_2^2 + x_3^2 + x_4^2 + x_5^2 \\
    12x_2 + 11x_3 + 7x_4
    \end{bmatrix}.
\end{align*}
\begin{align*}
    0 \leq x_i \leq 1, ~~~ \forall i = 1, \ldots, 5.
\end{align*}
The global maximum is $17$ with $x^\star = [1, 1, 0, 1, 0]^\top$.

\subsection{Ex212}
\label{subsec:ex212}
\noindent The Ex212 problem is defined by the following functions
\begin{align*}
    g_0(x,y) &= 10x_6 + y_1 + 0.5(x_1^2 + x_2^2 + x_3^2 + x_4^2 + x_5^2), \\\notag
    g_1(x,y) &= -( 6x_1 + 3x_2 + 3x_3 + 2x_4 + x_5 - 6.5 ), \\\notag
    g_2(x,y) &= -( y_2 - 20 ).
\end{align*}
\begin{align*}
    h(x) = \begin{bmatrix}
    10.5x_1 + 7.5x_2 + 3.5x_3 + 2.5x_4 + 1.5x_5 \\
    10x_1 + 10x_3 + x_6
    \end{bmatrix}.
\end{align*}
\begin{align*}
    0 \leq x_i \leq 30, ~~~ \forall i = 1, \ldots, 6.
\end{align*}
The global maximum is $213$ with $x^\star = [0, 1, 0, 1, 1, 20]^\top$.

\subsection{g09}
\label{subsec:g09}
\noindent The g09 problem is defined by the following functions
\begin{align*}
    g_0(x,y) &= -y_1 - x_3^4 - 3(x_4 - 11)^2 - 10x_5^6 - 7x_6^2 - x_7^4 + 4x_6x_7 + 10x_6 + 8x_7, \\\notag
    g_1(x,y) &= 127 -2x_1x_2 - y_2 - 5x_5 , \\\notag
    g_2(x,y) &= 282 - 7x_1 - 3x_2 - 10x_3^2 - x_4 + x_5, \\\notag
    g_3(x,y) &= 196 - 23x_1 + x_2^ 2 - 6x_6^2 + 8x_7, \\\notag
    g_4(x,y) &= -4x_1^2 - x_2^2 + 3x_1x_2 - 2x_3^2 - 5x_6 + 11x_7.
\end{align*}
\begin{align*}
    h(x) = \begin{bmatrix}
    (x_1 - 10)^2 + 5(x_2 - 12)^2 \\
    3x_2^4 + x_3 + 4x_4^2
    \end{bmatrix}.
\end{align*}
\begin{align*}
    -10 \leq x_i \leq 10, ~~~ \forall i = 1, \ldots, 7.
\end{align*}
The global maximum is $-680.63$ with $x^\star = [2.33, 1.95, -0.48, 4.37, -0.62, 1.04, 1.59]^\top$.

\subsection{Ex724}
\label{subsec:ex724}

\noindent The Ex724 problem is defined by the following functions
\begin{align*}
    g_0(x,y) &= -( y_3 + 0.4 (x_2/x_8)^{0.67} - x_1 + 10 ), \\\notag
    g_1(x,y) &= -( 0.0588 x_5 x_7 + 0.1x_1 -1 ), \\\notag
    g_2(x,y) &= -( 0.0588 x_6 x_8 + 0.1x_1 + 0.1x_2 - 1 ), \\\notag
    g_3(x,y) &= -( 4(x_3/x_5) + 2/y_1 + 0.0588(x_7/x_3)^{1.3} - 1 ), \\\notag
    g_4(x,y) &= -( y_2 + 0.0588x_4^{1.3} x_8 - 1 ).
\end{align*}
\begin{align*}
    h(x) = \begin{bmatrix}
    x_3^{0.71} x_5 \\
    4(x_4/x_6) + 2/(x_4^{0.71} x_6) \\
    0.4(x_1 / x_7)^{0.67} - x_2
    \end{bmatrix}.
\end{align*}
\begin{align*}
    0.1 \leq x_i \leq 10, ~~~ \forall i = 1, \ldots, 8.
\end{align*}
The global maximum is $-3.92$ with $x^\star = [6.35, 2.34, 0.67, 0.53, 5.95, 5.32, 1.04, 0.42]^\top$.

\subsection{Ex216}
\label{subsec:ex216}

\noindent The Ex216 problem is defined by the following functions
\begin{align*}
    g_0(x,y) &= 48x_1 - 0.5y_1 - 50x_5^2 - 50x_6^2 - 50x_7^2 - 50x_8^2 - 50x_9^2 - 50x_{10}^2 \\\notag 
    & ~~~~~~~~ + 42x_2 + y_3 + 47x_7 + 42x_8 + 45x_9 + 46x_{10}, \\\notag
    g_1(x,y) &= y_2 - 2x_7 - 6x_8 - 2x_9 - 2x_10 + 4, \\\notag
    g_2(x,y) &= 6x_1 - 5x_2 + 8x_3 - 3x_4 + x_6 + 3x_7 + 8x_8 + 9x_9 - 3x_{10} - 22, \\\notag
    g_3(x,y) &= -5x_1 + 6x_2 + 5x_3 + 3x_4 + 8x_5 - 8x_6 + 9x_7 + 2x_8 - 9x_{10} + 6, \\\notag
    g_4(x,y) &= y_4 + 3x_7 - 9x_8 - 9x_9 - 3x_{10} + 23, \\\notag
    g_5(x,y) &= -8x_1 + 7x_2 - 4x_3 - 5x_4 - 9x_5 + x_6 - 7x_7 - x_8 + 3x_9 - 2x_{10} + 12.
\end{align*}
\begin{align*}
    h(x) = \begin{bmatrix}
    100 x_1^2 + 100 x_2^2 + 100 x_3^2 + 100 x_4^2 \\
    -2x_1 6 x_2 - x_3 - 3x_5 - 3x_6 \\
    48 x_3 + 45 x_4 + 44 x_5 + 41 x_6 \\
    9 x_1 + 5 x_2 - 9 x_4 + x_5 - 8 x_6
    \end{bmatrix}.
\end{align*}
\begin{align*}
    0 \leq x_i \leq 1, ~~~ \forall i = 1, \ldots, 10.
\end{align*}
The global maximum is $39$ with $x^\star = [1, 0, 0, 1, 1, 1, 0, 1, 1, 1]^\top$.

\bibliographystyle{elsarticle-num}           
\bibliography{references}

\end{document}